\newcommand{\R}{\mathbb{R}}
\newcommand{\E}{\mathbb{E}}
\newcommand{\prob}{\mathbb{P}}
\newcommand{\domain}{\mathbb{X}}
\newcommand{\X}{\mathbb{X}}
\newcommand{\eifn}{\textnormal{EI-FN}}
\newcommand{\I}[2]{I_{#1,#2}}
\newcommand{\f}{f}
\newcommand{\h}{h}
\newcommand{\obj}{g}
\DeclareMathOperator*{\argmax}{argmax}
\newtheorem{theorem}{Theorem}
\newtheorem{proposition}{Proposition}
\def\renewtheorem#1{%
  \expandafter\let\csname#1\endcsname\relax
  \expandafter\let\csname c@#1\endcsname\relax
  \gdef\renewtheorem@envname{#1}
  \renewtheorem@secpar
}
\def\renewtheorem@secpar{\@ifnextchar[{\renewtheorem@numberedlike}{\renewtheorem@nonumberedlike}}
\def\renewtheorem@numberedlike[#1]#2{\newtheorem{\renewtheorem@envname}[#1]{#2}}
\def\renewtheorem@nonumberedlike#1{  
\def\renewtheorem@caption{#1}
\edef\renewtheorem@nowithin{\noexpand\newtheorem{\renewtheorem@envname}{\renewtheorem@caption}}
\renewtheorem@thirdpar
}
\def\renewtheorem@thirdpar{\@ifnextchar[{\renewtheorem@within}{\renewtheorem@nowithin}}
\def\renewtheorem@within[#1]{\renewtheorem@nowithin[#1]}
\title{Bayesian Optimization of Function Networks}
\author{%
  Raul Astudillo\\
  Cornell University\\
  \texttt{ra598@cornell.edu}\\
  \And
  Peter I. Frazier\\
  Cornell University\\
  \texttt{pf98@cornell.edu}\\
}
\begin{document}

\maketitle

\begin{abstract}
We consider Bayesian optimization of the output of a network of functions, where each function takes as input the output of its parent nodes, and where the network takes significant time to evaluate. Such problems arise, for example, in reinforcement learning, engineering design, and manufacturing.  While the standard Bayesian optimization approach observes only the final output, our approach delivers greater query efficiency by leveraging information that the former ignores: intermediate output within the network. This is achieved by modeling the nodes of the network using Gaussian processes and choosing the points to evaluate using, as our acquisition function, the expected improvement computed with respect to the implied posterior on the objective. Although the non-Gaussian nature of this posterior prevents computing our acquisition function in closed form, we show that it can be efficiently maximized  via sample average approximation. In addition, we prove that our method is asymptotically consistent, meaning that it finds a globally optimal solution as the number of evaluations grows to infinity, thus generalizing previously known convergence results for the expected improvement. Notably, this holds even though our method might not evaluate the domain densely, instead leveraging problem structure to leave regions unexplored.
Finally, we show that our approach dramatically outperforms standard Bayesian optimization methods in several synthetic and real-world problems.
\end{abstract}

\section{Introduction}
\label{sec:intro}

We consider Bayesian optimization (BO) of objective functions defined by  a series of time-consuming-to-evaluate functions, $\f_1,\ldots, \f_K$,  arranged in a directed acyclic network, so that each function takes as input the output of its parent nodes. As we detail below, these problems arise in  BO-based policy search in reinforcement learning \citep{lizotte2007automatic},
optimization of complex systems modeled via simulation, and calibration of time-consuming physics-based models.


To illustrate, we introduce a running example of vaccine manufacturing \citep{sekhon2011biosimilars}, focusing on the portion of the manufacturing process that uses live cells to produce proteins needed in a vaccine.
It begins with a cell culture, in which living cells are grown and used as ``factories'' to produce proteins. This process is controlled by a vector, $x_1$, containing the temperature, pH, and CO$_2$ content used when growing these cells. The output of this process is the quantity of the desired protein $y_1 = \f_1(x_1)$, i.e., the {\it yield} of this step, along with other byproducts.  This output is passed into a second process, purification, which removes byproducts and is controlled by a vector $x_2$ comprising temperature, pressure, and flow rate. The yield of the desired protein from this second step is $y_2 = \f_2(x_2, y_1)$. This output enters a third step, formulation, in which we formulate the raw protein into a form that can be distributed as controlled by a third set of parameters. This determines the yield of the overall process $y_3 = \f_3(x_3,y_2)$. We wish to choose $(x_1, x_2, x_3)$ to maximize overall protein yield. 
This problem is summarized as a function network in Figure~\ref{fig:vaccine}.

The problem described above and other similar problems can be tackled with Bayesian optimization (BO), which has been shown to perform well compared to other derivative-free global optimization methods for time-consuming-to-evaluate objective functions \citep{snoek2012practical,frazier2018tutorial}. A standard BO algorithm would fit a Gaussian process (GP) \citep{williams2006gaussian} model on the objective function ($y_3$, which depends on $(x_1, x_2, x_3)$) and use it, along with an acquisition function, to sequentially choose the points to evaluate. Under this standard approach, however, evaluations of the intermediate nodes, $\f_1,\ldots, \f_{K-1}$, would be ignored despite being available when computing the objective function. In the example above, this corresponds to looking only at the yield of the overall process, and not of each individual step. 

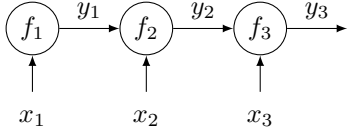
\begin{figure}[h]
  \centering
\begin{tikzpicture}[
init/.style={
  draw,
  circle,
  inner sep=0.7pt,
  minimum size=0.7cm
},
init2/.style={
  circle,
  inner sep=0.7pt,
  minimum size=0.7cm
},
]
\begin{scope}[start chain=1,node distance=8mm]
\node[on chain=1, init] 
  (f1) {$\f_1$};
\node[on chain=1,init]
 (f2) {$\f_2$};
\node[on chain=1,init] (f3) 
  {$ \f_3$};
\node[on chain=1,init2] (f4){};
\end{scope}

\begin{scope}[start chain=2,node distance=8mm]
\node[on chain=2, init2] at (0,-12mm)
(x1) {$x_1$};
\node[on chain=2, init2] 
  (x2) {$x_2$};
\node[on chain=2, init2] 
  (x3) {$x_3$};
\end{scope}

\draw[-latex] (f1) -- (f2)node[pos=0.5,sloped,above] {$y_1$};
\draw[-latex] (f2) -- (f3)node[pos=0.5,sloped,above] {$y_2$};
\draw[-latex] (f3) -- (f4)node[pos=0.5,sloped,above] {$y_3$};

\draw[-latex] (x1) -- (f1);
\draw[-latex] (x2) -- (f2);
\draw[-latex] (x3) -- (f3);
\end{tikzpicture}
\caption{Vaccine manufacturing as a function network.  Protein $y_1 = \f_1(x_1)$ is created, 
then purified with yield $y_2 = \f_2(x_2,y_1)$,
 and formulated with yield $y_3 = \f_3(x_3,y_2)$.
The goal is to find $(x_1,x_2,x_3)$ that maximizes $y_3$.\label{fig:vaccine}}
\end{figure}

In this paper, we introduce a novel BO approach that leverages function network structure for substantially more efficient optimization. This approach models the individual nodes of the network using distinct GPs. This allows incorporating observations of each node's output recursively into a non-Gaussian posterior on the network's overall output. Our approach then chooses the points to evaluate using the expected improvement \citep{jones1998efficient} computed with respect to this implied posterior on the objective function. The non-Gaussian nature of this posterior prevents the expected improvement from having a closed form. However, we show that it can still be efficiently maximized via sample average approximation \citep{kleywegt2002sample}.

Our approach can outperform standard BO by leveraging information from internal nodes unavailable to standard methods. We briefly explain one way this can happen, in the context of the example above.
In vaccine manufacturing, each function $\f_k(x_k, y_{k-1}), \ k= 2, 3$, is bounded between $0$ and $y_{k-1}$ because new protein cannot be created in the purification and formulation stages. Moreover, application experts have a prior on what values for $\f_k(x_k, y_{k-1}) / y_{k-1}$ should be achievable if $x_k$ is set well.
Thus, if we see that $y_3$ is unexpectedly poor, information from intermediate nodes can be extremely valuable:
if $y_1$ and $y_2$ are as expected, then this suggests the problem is with $x_3$;
if $y_1$ is as expected but $y_2$ is poor, then the problem is likely with $x_2$;
and if $y_1$ is poor then the problem is likely with $x_1$. 
(If there is a problem with $x_k$, there may also be a problem with $x_{k'}$, $k'>k$, but we can focus on fixing $x_k$ first.)
Thus, by observing intermediate nodes, we can instantly reduce the effective dimensionality of the input space by a factor of $K=3$.





We show that our method is asymptotically consistent, i.e., that it discovers the global optimum given sufficiently many samples. Remarkably, in contrast with most BO methods, it may do so without measuring densely over the feasible domain, instead leveraging function network structure to exclude regions as unnecessary to explore. This indicates the power of function network structure to improve query efficiency.

We demonstrate through numerical experiments that access to additional information available in a problem formulated as a function network can dramatically accelerate optimization. We study four synthetic problems and four real-world problems: a manufacturing problem similar in spirit to the vaccine example above, an active learning problem  with a robotic arm, and two problems arising in epidemiology, one calibrating an epidemic model and the other designing a testing strategy to control the spread of COVID-19. Our method significantly outperforms competing methods that utilize less information, in some cases by $\sim$5\% and in other cases by several orders of magnitude.


\section{Related Work}
\label{sec:related}
Our work occurs within BO, a framework for global optimization of expensive-to-evaluate black-box functions that originated with the work of \citet{zhilinskas1975single} and \citet{movckus1975bayesian},  and has recently become popular due to its remarkable performance in hyperparameter tuning of machine learning algorithms \citep{snoek2012practical, swersky2013multi,wu2019practical}. We refer the reader to \citet{shahriari16} and \citet{frazier2018tutorial} for modern introductions to BO.

Our approach can be catalogued as a \textit{grey-box} BO method since it does not treat the objective function entirely as a black box, and instead exploits known structure to improve sampling efficiency. Other examples of grey-box BO approaches include multi-fidelity BO \citep{kandasamy17a,wu2019practical}, which leverages cheaper approximations of the objective function; BO of objective functions that are the integral of an expensive-to-evaluate integrand \citep{williams2000sequential,toscano2018bayesian,cakmak2020risk}; BO of objective functions that are a sum of squared errors \citep{uhrenholt2019efficient}; and, more generally, BO of objective functions that are the composition of an expensive-to-evaluate inner function and a known inexpensive-to-evaluate outer function \citep{astudillo2019bayesian}, of which our work can be seen as a significant generalization. We refer the reader to \citet{astudillo2021thinking} for a tutorial on grey-box BO.

Our work is also closely related to \citet{marque2019efficient}, which proposes a method for efficient sequential experimental design of nested computer codes, also using GPs. This work focuses on the case where there are only two node functions, and one takes as input the output of the other. In contrast with our work, the goal of the proposed method is to learn the output code as accurately as possible within a limited evaluation budget, but optimization is not pursued.

 Optimization of composite (a.k.a nested) functions has also been considered in the gradient-based optimization literature \citep{shapiro2003class, drusvyatskiy2019efficiency,charisopoulos2019low, balasubramanian2020stochastic}. In contrast with ours, these works assume that evaluations are inexpensive, and often also some form of convexity, along with availability of gradients. However, this literature is similar in spirit to ours in that information from inner functions improves efficiency.

Function networks arise in many application areas. 
While these applications have not, to our knowledge, been previously formulated as specific instances of the general function network model we propose, their literatures are nonetheless relevant to our work. 
\begin{itemize}
\item In engineering and aerospace design, function networks arise in multidisciplinary optimization \citep{cramer1994problem,amaral2014decomposition,benaouali2019multidisciplinary}, where simulators focusing on different physical laws are coupled into a function network. For example, a simulation of airflow over a wing may output the forces on the wing to another simulation of mechanical stress on the wing structure.
\item In drug discovery and materials design, function networks arise from the 
sequence-structure-function  \citep{sadowski2009sequence}
and composition-structure-property 
\citep{hattrick2016perspective}
paradigms.
Here, decision variables 
(composition, e.g., what fraction of what raw materials are used) determine the structure of the material 
(how the atoms combine together), which in turn determines how the material behaves (properties).
\item Function networks arise in the design of queuing networks \citep{fu2017history,arcelli2020exploiting}. This includes manufacturing systems \citep{ghasemi2018review}, where the partially-processed output of one workstation is input to the next workstation, the design of service systems \citep{wang2020metamodel} such as hospitals and airport security checkpoints, and choosing traffic signal timings for a city's street network \citep{osorio2013simulation}.
\item Finally, function networks arise in reinforcement learning \citep{sutton2018reinforcement} and Markov decision processes 
\citep{puterman1990markov},
where the transition kernel transforms the state variable at the start of a time range to another state variable at the end of this range. This outputted state variable is the input to the next time range. 
\S\ref{sec:fetch} shows an example.
\end{itemize}

\section{Problem Setting}
\label{sec:problem}
We consider objective functions evaluated via a series of functions, $\f_1,\ldots, \f_K$, arranged in a directed network so that each function in this network takes as input the output of its parent nodes, and assume that evaluating this collection of functions is time-consuming.  The network structure is encoded as a directed graph with nodes $V = \{1, \ldots,K$\} and directed edges $E = \{(j, k) : \f_k \textnormal{ take as input the output of } \f_j\}$. We assume that this graph is acyclic and has a single leaf node  whose output is the objective to optimize.

Let $J(k)$ denote the set of parent nodes of node $k$\footnote{We allow $J(k) = \emptyset$, which corresponds to the root node(s).}. 
We assume, without loss of generality, that the nodes are ordered such that $j < k$ for all $j\in J(k)$. In addition to the output of its parent nodes, we assume that each function, $\f_k$, takes as input a (potentially empty) subset, $I(k) \subset \{1,\ldots, D\}$, of the components of the decision vector $x \in \mathbb{R}^D$. 

Let $\h_1(x), \ldots, \h_K(x)$ denote the values of the $K$ nodes in the function network when it is evaluated at $x$. These values are defined recursively by
\begin{equation*}
    \h_k(x) = \f_k\left(x_{I(k)}, \h_{J(k)}(x)\right), \ k=1,\ldots, K,
\end{equation*}
i.e., by evaluating each function in the network as the values of its parent nodes become available.
This recursion is well-defined by our assumption that the graph is acyclic. The objective function is then $g=h_K$, and the goal is to solve 
\begin{equation*}
  \max_{x \in \mathbb{X}} g(x),  
\end{equation*}
where $\mathbb{X}\subset\R^D$ is a simple compact set, such as a hyper-rectangle.

The standard BO approach to this problem models $g$ using a GP prior distribution. This approach iteratively chooses the next point at which to evaluate $\obj$ as follows. Given $n$ observations of the objective function,  $g(x_1), \ldots, g(x_n)$, it computes the posterior distribution on $\obj$, which is itself a GP. It then uses this posterior distribution to compute an acquisition function \citep{frazier2018tutorial} that quantifies the value of the information that would result from a function evaluation at any given point. Finally, it chooses the next point to evaluate, $x_{n+1}$, as the point that maximizes this acquisition function. Notably, although the outputs of $f_1, \ldots, \f_{K-1}$ would be observed as part of this approach, these evaluations would be ignored by standard BO 
when calculating the posterior distribution and corresponding acquisition function.

\section{Bayesian Optimization with Full Network Observations}
\label{sec:our_approach}
This section describes our approach. Like standard BO, it consists of a statistical model and an acquisition function. Unlike standard BO, however, our approach leverages the network structure of the problem by utilizing intermediate outputs within the network. As we describe below, this is achieved by modeling the node functions, $\f_1,\ldots, \f_K$,  as GPs, which in turn implies a non-Gaussian distribution on $\obj$ (\S\ref{sec:stat_model}). Our acquisition function is the expected improvement under this posterior distribution (\S\ref{sec:eifn}), which no longer has a closed form and thus we maximize it via sample average approximation (\S\ref{sec:eifn_saa}). We end up this section by proving that our acquisition function is asymptotically consistent despite not necessarily sampling $\domain$ densely (\S\ref{sec:ac}).
\subsection{Statistical Model}
\label{sec:stat_model}
Instead of modeling $\obj$ directly,  we model $\f_1,\ldots, f_K$, as drawn from independent GP prior distributions. Let  $\mu_{0,k}$ and  $\Sigma_{0,k}$ denote the prior mean and covariance functions of $f_k, \ k=1,\ldots, K$, respectively. When $g$ is evaluated at $x$, we also get to observe the value of $\f_k$ at $\left(x_{I(k)}, \h_{J(k)}(x)\right)$. Thus, after querying $\obj$ at $n$ points, $x_\ell, \ \ell=1,\ldots, n$, the observations of the values of $\f_k, \ k=1,\ldots, K,$ at $\left(x_{\ell,I(k)}, \h_{J(k)}\left(x_\ell\right)\right)$, $\ell=1,\ldots,n$, imply posterior distributions on $\f_1,\ldots, f_K$, which are again (conditionally) independent GPs with mean and covariance functions $\mu_{n,k}$ and $\Sigma_{n,k}$\footnote{For ease of presentation we assume that all the functions in the network are expensive-to-evaluate, and thus require to be modeled as GPs. However, if any of the functions, say $f_k$, is not expensive-to-evaluate, we can simply take $\mu_k \equiv f_k$ and $\Sigma_k\equiv 0$.}, $k=1,\ldots,K$. These can be computed in closed form using the standard GP regression equations (see, e.g., \citealt{williams2006gaussian}). For completeness, we include these equations in \S\ref{supp:posterior} of the supplement.

The posterior distributions on $f_1,\ldots,f_K$, described above imply a posterior distribution on $g$. This distribution is in general non-Gaussian. Thus, unlike in the standard setting, where $g$ is modeled as a GP, classical acquisition functions such as the expected improvement cannot be computed in closed form. However, as we describe next, drawing samples from this distribution is simple. 

Thanks to the acyclic structure of the underlying network that defines $\obj$, a sample, 
$\widehat{\obj}(x) = \widehat{h}_K(x)$
from the posterior distribution on $\obj$ at $x$ can be obtained 
 following the iterative process described next.
In each iteration, $k=1,\ldots, K$, we 
obtain a sample, $\widehat{h}_k(x)$, from the posterior distribution on $\h_k(x)$
by drawing a sample from the normal distribution with mean $\mu_{n,k}\left(x_{I(k)} , \widehat{h}_{J(k)}(x)\right)$ and standard deviation
\begin{equation*}
\sigma_{n,k}\left(x_{I(k)}, \widehat{h}_{J(k)}\right) = \Sigma_{n,k}\left(x_{I(k)}, \widehat{h}_{J(k)}(x), x_{I(k)}, \widehat{h}_{J(k)}(x)\right)^{1/2}.
\end{equation*}
Supporting efficient calculation,
the samples   
$\widehat{h}_{J(k)}(x)$
will have already been obtained in previous iterations of the for loop since $j<k$ for all $j \in J(k)$ (Note that $J(1)=\emptyset$). This procedure is summarized in Algorithm \ref{alg:sample}.

\begin{algorithm}[h]
\begin{algorithmic}[1]
\caption{Draw one sample from the posterior on $g(x)$}
\label{alg:sample}
\REQUIRE{$x\in\domain$}
\FOR{$k = 1,\ldots,K$}
\STATE{ $\widehat{h}_{k}(x)\sim\mathcal{N}\left(\mu_{n,k}\left(x_{I(k)},\widehat{h}_{J(k)}(x)\right),\sigma_{n,k}\left(x_{I(k)},\widehat{h}_{J(k)}(x)\right)^2\right)$}
\ENDFOR
\RETURN{$\widehat{g}(x)=\widehat{h}_{K}(x)$}
\end{algorithmic}
\end{algorithm}
 
 We end this section by noting that, while the statistical model described above could be catalogued as a deep GP \citep{damianou2013deep}, in the sense that we have GP layers in an architecture described by a directed acyclic graph, inference in our model is faster. Typically, deep GP inference requires marginalization over latent values of GP layers. In our setting, however,  observation structure creates conditional independence across layers, avoiding the need to marginalize.
 
 \subsection{Expected Improvement for Function Networks}
 \label{sec:eifn}
 Our acquisition function is the expected improvement, computed with respect to the implied posterior distribution on $\obj$ under the statistical model described in \S\ref{sec:stat_model}:
 \begin{equation*}
     \eifn_n(x) = \E_n\left[\{\obj(x) - \obj_n^*\}^+\right],
 \end{equation*}
 where $\obj_n^*= \max_{i=1,\ldots, n}\obj\left(x_n\right)$ is the best value observed so far, and $\E_n$ is the expectation computed with respect to the GP time-$n$ posterior distributions on $f_1,\ldots, f_K$. To distinguish it from the classical expected improvement, we refer to our acquisition function as the expected improvement for function networks (EI-FN) in the remainder of this work.

\subsection{Maximization of EI-FN via Sample Average Approximation}
\label{sec:eifn_saa}
Since the posterior distribution on $\obj$ is non-Gaussian, in contrast with the classical expected improvement acquisition function, $\eifn_n$ does not admit a closed form expression. However, $\eifn_n(x)$ can be computed with arbitrary precision following a simple Monte Carlo (MC) approach:
\begin{equation*}
     \eifn_n(x) \approx \frac{1}{M}\sum_{m=1}^{M}\left\{\widehat{\obj}(x)^{(m)}- \obj_n^*\right\}^+,
 \end{equation*}
 where $\widehat{\obj}(x)^{(1)}, \ldots, \widehat{\obj}(x)^{(M)}$ are samples drawn from the  posterior distribution on $\obj(x)$, which can be obtained following the approach described in \S\ref{sec:stat_model}. 
 
 Following the above MC scheme to approximately compute EI-FN, one can aim to maximize this function using a derivative-free global optimization algorithm for inexpensive-to-evaluate functions, such as CMA \citep{hansen2016cma}, by drawing samples, $\widehat{\obj}(x)^{(1)}, \ldots, \widehat{\obj}(x)^{(M)}$ independently for each $x$ at which $\eifn_n$ is evaluated. However, this approach is slow since evaluations of EI-FN are noisy and derivative information is not leveraged. Instead, we propose to maximize EI-FN following a sample average approximation (SAA) approach \citep{kleywegt2002sample, balandat2020botorch}.


Succinctly, the SAA approach works by building a MC approximation of EI-FN that is deterministic given a finite set of random variables not depending on $x$, and maximizing this approximation instead of EI-FN itself. 
The key observation for creating this approximation is that a sample, $\widehat{\obj}(x)$, can be obtained as $\widehat{\obj}(x) = \widehat{h}_K(x)$, where $\widehat{h}_1(x), \ldots, \widehat{h}_K(x)$ are defined recursively by
\begin{align*}
  \widehat{h}_k\left(x; Z\right) = {} & \mu_{n,k}\left(x_{I(k)} , \widehat{h}_{J(k)}(x;Z)\right) +  \sigma_{n,k} \left(x_{I(k)}, \widehat{h}_{J(k)}(x; Z)\right)Z_k,
\end{align*}
where $Z = \left(Z_1,\ldots, Z_K\right)^\top$ is a standard normal random vector, and we write $\widehat{h}_k(x)$ as $\widehat{h}_k\left(x; Z\right)$ to make its dependence on $Z$ explicit. Analogously, we also write $\widehat{g}(x)$ as $\widehat{g}(x;Z)$. This can be seen as an extension of the so-called reparametrization trick for acquisition functions  \citep{wilson2018maximizing}.



If we now fix $M$ samples drawn from the $K$-dimensional standard normal distribution, $Z^{(1)},\ldots, Z^{(M)}$, then
\begin{equation*}
    \widehat{\eifn}_n\left(x ; Z^{(1:M)}\right) := \frac{1}{M}\sum_{m=1}^{M}\left\{\widehat{\obj}\left(x; Z^{(m)}\right) - \obj_n^*\right\}^+
 \end{equation*}
 is a MC approximation of EI-FN that is deterministic given $Z^{(1)},\ldots, Z^{(M)}$, as desired. Moreover, Proposition \ref{prop:saa} below shows that, under mild regularity conditions, any maximizer of the above SAA converges in probability exponentially fast to a maximizer of $\eifn_n$ as $M\rightarrow\infty$, thus suggesting that in practice it is safe to use small values of $M$. This result is a  generalization of Theorem 1 in \cite{balandat2020botorch}. Its proof can be found in \S\ref{supp:prop1} of the supplement.
 
 \begin{proposition} 
 \label{prop:saa}
 Suppose that the functions $\mu_{n,k}$ and $\sigma_{n,k}$, $k=1,\ldots, K$, are all Lipschitz continuous and let
 \begin{equation*}
     \widehat{x}_{*}^{(M)}\in\argmax_{x\in\domain} \widehat{\eifn}_n\left(x ; Z^{(1:M)}\right),
     \quad
     X_* =  \argmax_{x\in\domain}\eifn_n(x).
 \end{equation*}
 Then, for every $\epsilon>0$, there exist $A, \alpha > 0$ such that $\prob\left(\textnormal{dist}\left(\widehat{x}_{*}^{(M)}, X_*\right)>\epsilon\right) \leq Ae^{-\alpha M}, \ M\in\mathbb{N}$.
 \end{proposition}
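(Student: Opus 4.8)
The plan is to reduce the statement to a uniform exponential concentration of the sample-average acquisition function over the compact domain, and then invoke a deterministic stability argument relating closeness of objectives to closeness of argmax sets. This is the same skeleton as the proof of Theorem~1 in \cite{balandat2020botorch}, but the concentration step must be re-derived: the network composition destroys the affine dependence on the Gaussian noise that makes the single-GP case easy.

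\textbf{Step 1 (argmax stability).} First I would record that $\eifn_n$ is continuous on the compact set $\domain$, so $X_*$ is nonempty and compact. Continuity holds because each summand $\phi(x;Z):=\{\widehat{\obj}(x;Z)-\obj_n^*\}^+$ is continuous in $x$ (the maps $\mu_{n,k},\sigma_{n,k}$ are Lipschitz, hence continuous, and $\widehat{h}_k(\cdot;Z)$ is assembled from them via the recursion and the $1$-Lipschitz map $\{\cdot\}^+$) and is dominated by an integrable envelope, so dominated convergence applies. Given $\epsilon>0$, I define the gap $\delta:=\eifn_n^*-\sup\{\eifn_n(x):\textnormal{dist}(x,X_*)\geq\epsilon\}$; by continuity and compactness the supremum is attained strictly below $\eifn_n^*$, so $\delta>0$ (if the set in the supremum is empty the target probability is $0$). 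The elementary observation is then: whenever $\sup_{x\in\domain}|\widehat{\eifn}_n(x;Z^{(1:M)})-\eifn_n(x)|<\delta/2$, every maximizer $\widehat{x}_*^{(M)}$ satisfies $\textnormal{dist}(\widehat{x}_*^{(M)},X_*)<\epsilon$. Thus it suffices to bound $\prob(\sup_x|\widehat{\eifn}_n-\eifn_n|>\delta/2)$ by $Ae^{-\alpha M}$.

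\textbf{Step 2 (envelope and Lipschitz bounds via the recursion).} By induction over the topological order $j<k$ for $j\in J(k)$, I would control two quantities as functions of $Z$. Using that the posterior standard deviations $\sigma_{n,k}$ are bounded (as for stationary priors) and that $\mu_{n,k}$ is Lipschitz, the magnitude recursion only \emph{adds} a term $\sigma_{n,k}Z_k$ at each layer, so it stays linear in the $Z_k$: one obtains $\sup_x|\widehat{\obj}(x;Z)|\leq C(1+\|Z\|)$, whence $\phi(\cdot;Z)$ is dominated by an $x$-independent sub-Gaussian envelope and has finite exponential moments (giving pointwise Cram\'er bounds). The spatial Lipschitz modulus $\Lambda_k(Z)$ of $x\mapsto\widehat{h}_k(x;Z)$, however, obeys a recursion of the form $\Lambda_k\lesssim(1+|Z_k|)(1+\Lambda_{J(k)})$, because perturbing $x$ perturbs $\sigma_{n,k}$, which is then scaled by $Z_k$; along a chain of length $K$ this behaves like a product $\prod_k(1+|Z_k|)$.

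\textbf{Step 3 (the main obstacle and its resolution).} The product structure of $\Lambda_K(Z)$ is exactly where the BoTorch argument fails to transfer: a product of $K$ Gaussian factors has tails so heavy that its moment generating function is infinite in every neighborhood of the origin once $K\geq 3$, so the standard uniform large-deviations theorem for SAA, which requires a Lipschitz modulus with finite exponential moments, does not apply directly. I would resolve this by truncating at a large but \emph{fixed} radius $R=R(\delta)$. On $\{\|Z\|\leq R\}$ the modulus is bounded by the deterministic constant $C(1+R)^K$, so a covering/union-bound over an $O((1+R)^K/\delta)^D$-point net of $\domain$, combined with Hoeffding's inequality for the (now bounded) truncated summands, controls the truncated sample average uniformly at rate $e^{-\alpha M}$. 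The discarded event $\{\|Z\|>R\}$ I would bound \emph{not} through the heavy-tailed modulus but through the $x$-independent sub-Gaussian envelope $C(1+\|Z\|)\mathbf{1}\{\|Z\|>R\}$ of Step~2, whose sample average concentrates exponentially around a mean that, together with the truncation bias of $\eifn_n$, is made smaller than $\delta/8$ by taking $R$ large (permissible since $\delta$ is a fixed constant). Summing the three contributions gives $\prob(\sup_x|\widehat{\eifn}_n-\eifn_n|>\delta/2)\leq Ae^{-\alpha M}$, and Step~1 converts this into the claim. The hard part is precisely this asymmetry surfaced in Step~2 — the network value is only linear (sub-Gaussian) in $Z$, yet its derivative in $x$ is a product and loses finite exponential moments — and the crux of the proof is engineering the fixed-radius truncation so the heavy-tailed modulus is used only on a bounded region while the light-tailed envelope absorbs the tail.
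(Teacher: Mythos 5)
Your Steps 1--2 match the paper's proof: the paper proves the same composition-of-Lipschitz-functions lemma and derives precisely your modulus recursion, $L_{\widehat{h}_{n,k}}(z)=\left(L_{\mu_{n,k}}+L_{\sigma_{n,k}}|z_k|\right)\left(1+\sum_{j\in J(k)}L_{\widehat{h}_{n,j}}(z)\right)$. At Step 3, however, the paper takes exactly the route you reject: it asserts that because $L_{\widehat{g}_n}(Z)$ is a polynomial of degree at most one in each folded normal $|Z_k|$, it has a finite moment generating function near $0$, and then invokes Proposition 2 of \citet{balandat2020botorch} (i.e., Theorem 2.3 of \citet{homem2008}) off the shelf. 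Your objection to that inference is mathematically correct: a chain of depth three already produces the monomial $W=|Z_1Z_2Z_3|$, and conditioning on $(Z_2,Z_3)$ gives $\E\left[e^{tW}\right]\ge\E\left[e^{t^2Z_2^2Z_3^2/2}\right]=\infty$ for every $t\neq 0$ (the tail of $W$ is $\exp(-\Theta(w^{2/3}))$), so the finite-MGF hypothesis on the Lipschitz modulus genuinely fails for $K\ge 3$ and the paper's one-line verification is erroneous rather than merely terse. Your fixed-radius truncation --- covering net plus Hoeffding on $\{\|Z\|\le R\}$, where the modulus is the deterministic constant $C(1+R)^K$, with the $x$-independent linear envelope $C(1+\|Z\|)$ absorbing both the tail average (via Cram\'er, since $\|Z\|$ has everywhere-finite MGF) and the truncation bias once $R=R(\delta)$ is fixed --- is a sound repair, and the three-term union bound you sketch goes through. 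Two caveats are worth recording. First, your envelope bound $\sup_x|\widehat{g}_n(x;Z)|\le C(1+\|Z\|)$ requires boundedness of the $\sigma_{n,k}$, which is not among the proposition's stated hypotheses (only Lipschitz continuity and compactness of $\domain$); it is harmless in the GP setting (posterior standard deviations are dominated by the prior's, cf.\ the bounded-covariance assumption used for Theorem 1), but note the paper tacitly needs it too, since its ``similar argument'' for the pointwise MGF of $\widehat{g}_n(x,Z)$ rests on the same flawed polynomial reasoning, whereas under bounded $\sigma_{n,k}$ the \emph{value} really is linear in $Z$ even though its $x$-modulus remains a product --- the asymmetry your Step 2 isolates. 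Second, on what each approach buys: the paper's citation route is short, produces constants from an off-the-shelf theorem, and is actually valid for networks of depth at most two (a product of two folded normals has finite MGF for $|t|<1$); your argument is longer and yields cruder constants $A,\alpha$, but it is the one that establishes the proposition at all depths $K$, so it is not merely an alternative but a correction.
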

 
  Finally, we note that $\widehat{\eifn}_n$ is differentiable with respect to $x$, provided that $\mu_{k,n}$ and $\Sigma_{k,n}$, $k=1,\ldots, K$, are all differentiable. Thus, $\widehat{\eifn}_n$ can be maximized using a gradient-based deterministic optimization algorithm such as L-BFGS-B \citep{byrd1995limited}, with multiple restarts.
 \subsection{Asymptotic Consistency of EI-FN without Dense Measurements}
 \label{sec:ac}
 To shed light on the convergence properties of the expected improvement acquisition function under our statistical model, we prove that, under suitable regularity conditions, EI-FN is asymptotically consistent, meaning that it finds the global optimum of the objective function as the number of evaluations grows to infinity. 
 This builds on results shown for the classical expected improvement \citep{vazquez2010convergence,bect2019supermartingale} and the expected improvement for composite functions \citep{astudillo2019bayesian}, and we rely on similar assumptions.
This result is stated in Theorem \ref{thm:ac} below and its proof can be found in \S\ref{supp:thm1} of the supplement.
 \begin{theorem}
 \label{thm:ac}
Suppose that $x_{n+1}\in\argmax_{x\in\domain}\eifn_n(x)$ for all $n$. Then, under regularity conditions stated precisely in the supplement, 
    $g_n^*\rightarrow \max_{x\in\domain}g(x)$ as $n\rightarrow\infty$ almost surely under the prior distribution on $f_1,\ldots, f_K$. 
 \end{theorem}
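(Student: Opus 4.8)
The plan is to reduce everything to a single probabilistic fact — that the \emph{global} maximum of the acquisition function vanishes, $\sup_{x\in\domain}\eifn_n(x)\to 0$ almost surely — and then to convert this into convergence of $\obj_n^*$ by a population-level argument that, crucially, never requires the samples $x_1,x_2,\dots$ to fill up $\domain$. This is what makes dense measurement unnecessary.

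\textbf{Step 1 (setup and monotonicity).} Under the regularity conditions I would first record that the composed process $\obj=\h_K$ has continuous, bounded sample paths with $\E[\sup_{x}\obj(x)]<\infty$, so that $\obj^*:=\max_{x\in\domain}\obj(x)$ is attained and a.s.\ finite, and that $\eifn_n$ is continuous in $x$ so its argmax is attained. Since $\obj_n^*=\max_{i\le n}\obj(x_i)$ is nondecreasing and bounded above by $\obj^*$, it converges a.s.\ to a limit $\obj_\infty^*\le \obj^*$; the goal is to show $\obj_\infty^*=\obj^*$.

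\textbf{Step 2 (the acquisition value at the chosen point vanishes).} Writing $I_{n+1}:=\{\obj(x_{n+1})-\obj_n^*\}^+$ for the realized improvement, one has $\obj_{n+1}^*=\obj_n^*+I_{n+1}$, so $\sum_n I_{n+1}=\obj_\infty^*-\obj_0^*<\infty$ a.s. Because $x_{n+1}$ maximizes $\eifn_n$, $\eifn_n(x_{n+1})=\E_n[I_{n+1}]=\sup_x\eifn_n(x)$. To pass from $\sum_n I_{n+1}<\infty$ to $\sum_n\E_n[I_{n+1}]<\infty$ (hence $\eifn_n(x_{n+1})\to0$), I would use the conditional Borel--Cantelli / supermartingale argument of Bect et al.: for nonnegative adapted increments $I_{n+1}$, bounded after localizing on $\{\sup_x\obj\le m\}$, the series $\sum_n I_{n+1}$ and $\sum_n\E_n[I_{n+1}]$ converge on the same event, and letting $m\to\infty$ covers the a.s.\ event $\{\sup_x\obj<\infty\}$. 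This yields $\sup_x\eifn_n(x)\to0$ a.s.

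\textbf{Step 3 (from a vanishing acquisition function to the optimum).} Fix any $x\in\domain$. Then $0\le\eifn_n(x)\le\sup_y\eifn_n(y)\to0$ a.s., and since $\eifn_n(x)=\E_n[\{\obj(x)-\obj_n^*\}^+]\le\E_n[W]$ with $W:=\{\sup_y\obj(y)-\obj_0^*\}^+$ integrable, the family $\{\eifn_n(x)\}_n$ is dominated by the uniformly integrable Doob martingale $\{\E_n[W]\}_n$ and is therefore itself uniformly integrable. Combined with the a.s.\ limit, this gives $L^1$ convergence, so by the tower property $\E[\{\obj(x)-\obj_n^*\}^+]=\E[\eifn_n(x)]\to0$. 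As $\obj_n^*\uparrow\obj_\infty^*$ the integrand decreases to $\{\obj(x)-\obj_\infty^*\}^+$, whence $\E[\{\obj(x)-\obj_\infty^*\}^+]=0$ and thus $\obj(x)\le\obj_\infty^*$ a.s. Intersecting over a countable dense set $\{x_j\}\subset\domain$ and using sample-path continuity of $\obj$ gives $\obj^*=\sup_j\obj(x_j)\le\obj_\infty^*$ a.s.; with $\obj_\infty^*\le\obj^*$ this forces $\obj_n^*\to\obj_\infty^*=\obj^*$. Here the dense grid is a deterministic analytical device only — the algorithm itself need not sample densely, which is precisely the behavior the theorem advertises.

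\textbf{Main obstacle.} I expect the real work to sit in the regularity conditions of Step 1 rather than in the probabilistic core. Because $\obj=\h_K$ is a composition of GPs and \emph{not} itself a GP, establishing continuity and boundedness of its sample paths and integrability of $\sup_x\obj$ requires controlling the range of the intermediate outputs $\h_{J(k)}(x)$ that feed as inputs into later nodes, and arguing that each node's kernel stays well-behaved over that (random) range; continuity of $\eifn_n$ and attainment of its maximizer must likewise be propagated through the recursion. Relative to classical expected improvement, this propagation of regularity through the network is the genuinely new ingredient; the martingale step in Step 2 and the uniform-integrability interchange in Step 3 then follow the template of Bect et al.\ and of Astudillo \& Frazier.
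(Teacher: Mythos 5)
Your proposal is essentially correct in outline but takes a genuinely different route from the paper, and it has one substantive gap in its hypotheses. The paper never forms the series of realized improvements. Instead it extracts a convergent subsequence $(x_{n_m})$ of the design (possible since $\domain$ is compact), proves that the per-node posterior means and standard deviations $\mu_{n,k},\sigma_{n,k}$ converge uniformly on compacts (via Proposition 2.9 of Bect et al.), and propagates this through the network by induction on $k$ — using that $\sigma_{n,k}$ vanishes and $\mu_{n,k}$ equals the observed value at already-measured node inputs — to conclude $\widehat{g}_{n_m-1}(x_{n_m},z)\to g(x_\infty)\le g_\infty^*$ for each fixed $z$. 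Its bespoke dominated-convergence assumption (Assumption 2.4) then yields only $\liminf_n \eifn_{n-1}(x_n)=0$, which is converted into $\prob_\infty\left(g(x)>g_\infty^*\right)=0$ through a martingale-convergence argument for the conditional probabilities $\prob_n(\cdot)$ (bounded by $1$, hence needing no prior integrability), finishing with the same countable-dense-set and path-continuity step you use. Your Steps 2--3 instead operate entirely at the level of the objective process: the telescoping identity $\sum_n I_{n+1}=g_\infty^*-g_0^*<\infty$ plus a conditional Borel--Cantelli/supermartingale argument gives the \emph{stronger} full limit $\sup_x\eifn_n(x)\to 0$, and your $L^1$/monotone-convergence finish replaces the paper's conditional-probability martingale step. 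What your route buys is brevity and complete indifference to the network structure — the paper's continuity lemma for $h_k$, the uniform posterior-convergence lemma, and the layer-by-layer induction all become unnecessary. What the paper's route buys is that it runs under per-realization hypotheses only.

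The gap is in what your route assumes. First, a fixable technicality: localizing on $\{\sup_x g\le m\}$ is illegitimate as stated, since that event is not $\mathcal{F}_n$-measurable; the adapted version of the argument truncates the increments instead, applying the bounded conditional Borel--Cantelli lemma to $I_{n+1}\wedge m$ and controlling the remainder by $\E_n\left[(I_{n+1}-m)^+\right]\le\E_n\left[(W-m)^+\right]$, a uniformly integrable martingale whose limit $\E_\infty\left[(W-m)^+\right]$ decreases to $0$ as $m\to\infty$. Second, and more substantively, both this fix and your Step 3 domination require $\E[W]<\infty$, i.e.\ $\E\left[\sup_{x\in\domain}g(x)\right]<\infty$ under the prior. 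The paper's stated regularity conditions do not supply this: Assumption 2.4 is a per-realization bound with a realization-dependent dominating function $\beta$, so no integrability under the prior follows from it; and because $g$ is a composition of GPs, later nodes are evaluated over the random range of earlier nodes, so prior integrability of $\sup g$ is exactly the ``main obstacle'' you name but leave unproved (it is not implied by continuity and bounded covariances alone, though it holds for typical kernels and mean functions via Borell--TIS-type tail bounds propagated through the layers). As written, then, you prove the theorem under modified, somewhat stronger hypotheses than the paper's; under the paper's actual assumptions your Step 2 does not launch, which is precisely why the paper settles for a $\liminf$ statement and works with conditional probabilities rather than conditional expectations of unbounded improvements.
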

 
Significant novelty in our proof arises from the fact that EI-FN's measurements are not necessarily dense in $\mathbb{X}$. 
In nearly all existing work, consistency of BO methods is shown by first showing that the measurements are dense in the objective's domain (see, e.g., \citealt{vazquez2010convergence}).
Surprisingly, the function network structure may allows us to exclude certain regions of $\mathbb{X}$ as suboptimal after only finitely many measurements, allowing our method to be consistent without measuring everywhere. Such ability gives insight into EI-FN's strong empirical performance compared to methods ignoring function network structure.
As stated in Proposition~\ref{sec:not-dense} below, we provide a function network where EI-FN provides a consistent estimate of the global optimum {\it without} sampling densely.

\begin{proposition}
\label{sec:not-dense}
There exists a function network (detailed in \S\ref{supp:prop2} of the supplement) in which EI-FN is consistent but whose measurements are not necessarily dense in $\domain$.
\end{proposition}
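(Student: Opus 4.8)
The plan is to exhibit an explicit function network that (i) satisfies the regularity conditions under which Theorem~\ref{thm:ac} guarantees consistency, and (ii) possesses an open subregion of $\domain$ that EI-FN provably stops sampling after finitely many iterations. Consistency is then inherited directly from Theorem~\ref{thm:ac}, while the existence of a permanently avoided region shows the measurements are not dense. The mechanism I would exploit is exactly the ``bounded-yield'' intuition from the vaccine example: an intermediate observation can certify, over a whole region, that the objective lies below the incumbent, driving the expected improvement to zero there.

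Concretely, take $D=2$, $\domain = [0,1]^2$, and a two-node network with $\h_1(x) = \f_1(x^{(1)})$ modeled as a GP with continuous, smooth prior mean and kernel, together with a second, \emph{known} (hence inexpensive, $\Sigma_{n,2}\equiv 0$) node $\obj(x) = \h_2(x) = \f_2(x^{(2)}, \h_1(x))$ that is capped, $\f_2(x^{(2)}, y_1) \le \kappa(x^{(2)})$ for all $y_1$, where $\kappa$ is a fixed smooth function. I would choose $\kappa$ and an open set $R = \{x : x^{(2)} \in U\}$ with $\kappa \le c_0$ on $U$ and $\kappa$ large off $U$, so that $\obj(x) \le c_0$ for every $x\in R$ and every realization of $\f_1$, while $\max_{x\in\domain} \obj(x) > c_0$ is attained outside $R$. (This last property can be made to hold almost surely by pinning $\f_1$ with a known anchor or a large enough deterministic mean trend on a good subregion; even without this it holds with positive probability, which already establishes ``not necessarily dense''.)

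The key steps, in order, are the following. First, verify that this instance meets the hypotheses of Theorem~\ref{thm:ac} --- continuity and smoothness of the mean and covariance of each node, and compactness of $\domain$ --- so that $\obj_n^* \to \max_{x\in\domain}\obj(x)$ almost surely; here the capping node is Lipschitz (a smoothed $\min$) and deterministic, which is admissible. Second, combine this consistency with $\max_{x\in\domain}\obj(x) > c_0$ to conclude that there is an almost surely finite (random) time $N$ after which $\obj_n^* > c_0$. Third, observe that for every $x \in R$ we have $\obj(x) \le c_0$ \emph{surely} under the time-$n$ posterior, so $\{\obj(x) - \obj_n^*\}^+ = 0$ almost surely and hence $\eifn_n(x) = 0$ for all $x\in R$ once $n \ge N$; meanwhile $\eifn_n$ remains strictly positive near the not-yet-located maximizer outside $R$, so every maximizer of $\eifn_n$, and thus $x_{n+1}$, lies outside $R$ for $n \ge N$. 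Fourth, conclude that at most finitely many of the $x_\ell$ fall in the open set $R$, so the sample points are not dense in $\domain$.

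The main obstacle is reconciling the two requirements at once: the exclusion argument needs a \emph{hard}, realization-independent upper bound on $\obj$ over $R$ (supplied by the deterministic capping node), while Theorem~\ref{thm:ac}'s regularity conditions must still hold for the resulting non-Gaussian posterior --- in particular the deterministic, non-strictly-monotone node must satisfy whatever continuity and Lipschitz assumptions the supplement imposes, which is why I would use a smooth soft-$\min$ rather than a plain $\min$. A secondary technical point is the degenerate case in which the incumbent reaches the global optimum exactly, making $\eifn_n \equiv 0$ and the argmax ill-defined; this only helps, since non-density has already been secured for $n \ge N$, but I would fix a tie-breaking convention to keep $x_{n+1}\notin R$ well-defined. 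Finally, I would make precise the claim $\max_{x\in\domain}\obj(x) > c_0$ almost surely so that $N$ exists with probability one; absent that strengthening, the positive-probability statement still proves the proposition as stated.
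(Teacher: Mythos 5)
Your proposal is correct and rests on the same core mechanism as the paper's proof: a two-node network whose second node is known and deterministic, with a hard cap that lets EI-FN certify an entire region as suboptimal once the incumbent exceeds the capped value, while consistency is inherited from Theorem~\ref{thm:ac}. Indeed, your ``smoothed min'' cap matches the paper's intended construction: the supplement takes $\domain=[0,1]$ and $f_2(x,y)=\max\{1,y\}-x$, but the inequalities it then uses ($g(x)\le 1-x$ for all $x$, and $g(x_\tau)=1-x_\tau$) only hold if one reads $\min\{1,y\}-x$, i.e.\ exactly your cap. Where you genuinely diverge is in \emph{how the excluded region arises}. In the paper it is random and observation-triggered: a stopping time $\tau=\inf\{n: f_1(x_n)>1\}$ marks the first intermediate observation certifying the cap binds, after which the incumbent $g^*_n\ge g(x_\tau)=1-x_\tau$ immediately dominates $g(x)\le 1-x$ on $(x_\tau,1]$, so exclusion begins at $\tau$ itself and needs no appeal to consistency; the price is positive-probability bookkeeping (splitting on $\prob(\tau<\infty)$, and showing EI-FN stays positive near $0$ so the argmax keeps selecting points outside the excluded interval). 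In your version the region $R$ is deterministic, hard-coded into the known node via $\kappa\le c_0$ on $U$, and exclusion instead waits for the a.s.\ finite time $N$ at which consistency pushes $g^*_n$ above $c_0$ --- so you invoke Theorem~\ref{thm:ac} twice (once for consistency, once to guarantee $N<\infty$). Your route buys simpler probabilistic structure (no stopping time, a fixed open region, and with your floor modification even almost-sure rather than positive-probability non-density, which is stronger than the ``not necessarily dense'' claim requires); what it gives up is precisely the feature the paper's example is designed to illustrate, namely that \emph{observed intermediate outputs} dynamically determine which region gets excluded --- in your construction the cap binds on $R$ regardless of any realization of $f_1$, so the exclusion exploits only the known outer structure, not the internal observations. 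You also correctly flag the residual verification burdens both proofs share (the dominated-convergence Assumption~\ref{a:DCT} for the capped node, strict positivity of $\eifn_n$ off the excluded region at each finite $n$, and tie-breaking in the degenerate case where the incumbent attains the optimum); handled as you sketch, your argument is a complete and valid proof of the proposition as stated.
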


 
 
 \section{Numerical Experiments}
 \label{sec:experiments}
We compare the performance of our algorithm (EI-FN) against the classical expected improvement (EI), i.e., the expected improvement under a GP model over $\obj$. We also compare with the performance of two other standard algorithms: the  algorithm that chooses the points to evaluate uniformly at random over $\domain$ (Random); and the knowledge gradient (KG) (also under a GP model over $\obj$), a more sophisticated acquisition function that often delivers a better performance than the expected improvement \citep{wu2016parallel}. 
The problems in \S\ref{sec:fetch} and \S\ref{sec:calibration}  fall within the framework of \cite{astudillo2019bayesian} and we include its proposed EI-CF algorithm as a benchmark. 
All algorithms were implemented in BoTorch \citep{balandat2020botorch}.

We evaluate on 4 synthetic problems and 5 real-world problems. These are described below or in \S\ref{supp:details_numerical} of the supplement, with the supplement describing a manufacturing problem building on \S\ref{sec:intro}, a COVID-19 testing problem building on \S\ref{sec:calibration}, and a robot control problem similar to the one described in \S\ref{sec:fetch}.
In all problems, a first stage of evaluations is performed using $2(d+1)$ points chosen uniformly at random over $\domain$. A second stage (pictured in plots) is then
performed using each of the algorithms. Error bars in Figure~\ref{fig:results} show
the mean of  the best objective value observed so far, plus
and minus 1.96 times the standard deviation divided by the square root of the number of replications. Since the difference in performance in some of our experiments is better appreciated in a logarithmic scale, we also include plots showing the log$_{10}$-regret for such experiments in Figure~\ref{fig:results_lr}.  Each experiment was replicated 30 times. Experimental setup details and runtimes are available in the supplement. Code to reproduce our numerical experiments can be found at \url{https://github.com/RaulAstudillo06/BOFN}.


\subsection{Synthetic Test Functions}
\label{sec:synthetic}

We create  synthetic test problems by arranging standard test functions from the global optimization literature \citep{jamil2013literature} into function networks. These explore a variety of network structures in an easy-to-reproduce form, and are named after the standard test function used to define the function network. We describe these briefly here and then in full detail in \S\ref{supp:details_numerical} of the supplement.

\textbf{Alpine2} and \textbf{Rosenbrock} both arrange $K$ nodes in series, where each node except the first node takes the output of the previous node as input. 
Additionally, in Alpine2, each node takes a distinct dimension of the decision vector $x$ as input.
In Rosenbrock, $x_1$ and $x_2$ are inputs to the first node, $x_2$ and $x_3$ are inputs to the second, and so on.
These network architectures arise in manufacturing problems like the example in \S\ref{sec:intro}, as well as business operations with queues like boarding an aircraft or fulfilling drive-through orders. For Alpine2 we set $K=6$, and for Rosenbrock we set $K=4.$ 

\textbf{Ackley} has 3 nodes. The first two nodes each take the same 6-dimensional input. Their outputs are passed to the third node that produces the final output.  This type of function network arises in algorithm design for two-sided markets \citep{li2021interference}, like Uber and AirBnB, where the first node simulates an intervention's effect on riders (or guests), the second simulates its effect on drivers (or hosts), and the third simulates the matching process where riders and drivers (or guests and hosts) interact to produce transactions.

\textbf{Drop-Wave} has two nodes.  The first node takes a two-dimensional vector $x$ as input.  This node's output is passed to the second node, which produces the objective value.
This network architecture is representative of multidisciplinary engineering design \citep{benaouali2019multidisciplinary}, for example in aerospace, where a small number of distinct black-box simulators simulate processes governed by physical laws that affect each other through a small number of channels, such as an aircraft engine simulation (the first node) determining heat generated while flying, which is then inputted to a temperature-dependent simulation of mechanical stress on the aircraft's frame (the second node).

\begin{figure}
\centering
\begin{tabular}[b]{c}%
\includegraphics[width=0.24\textwidth]{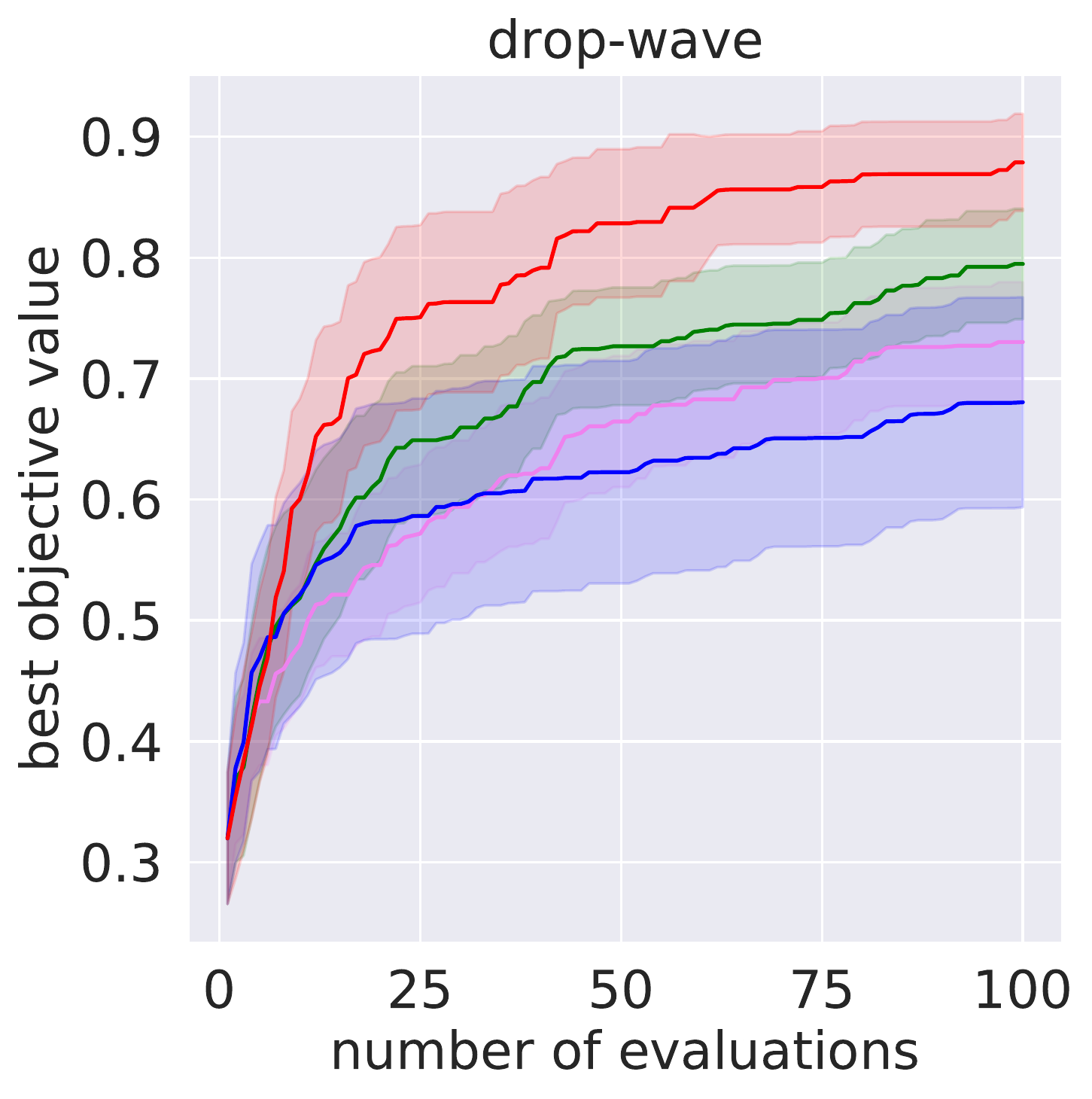}
  \includegraphics[width=0.25\textwidth]{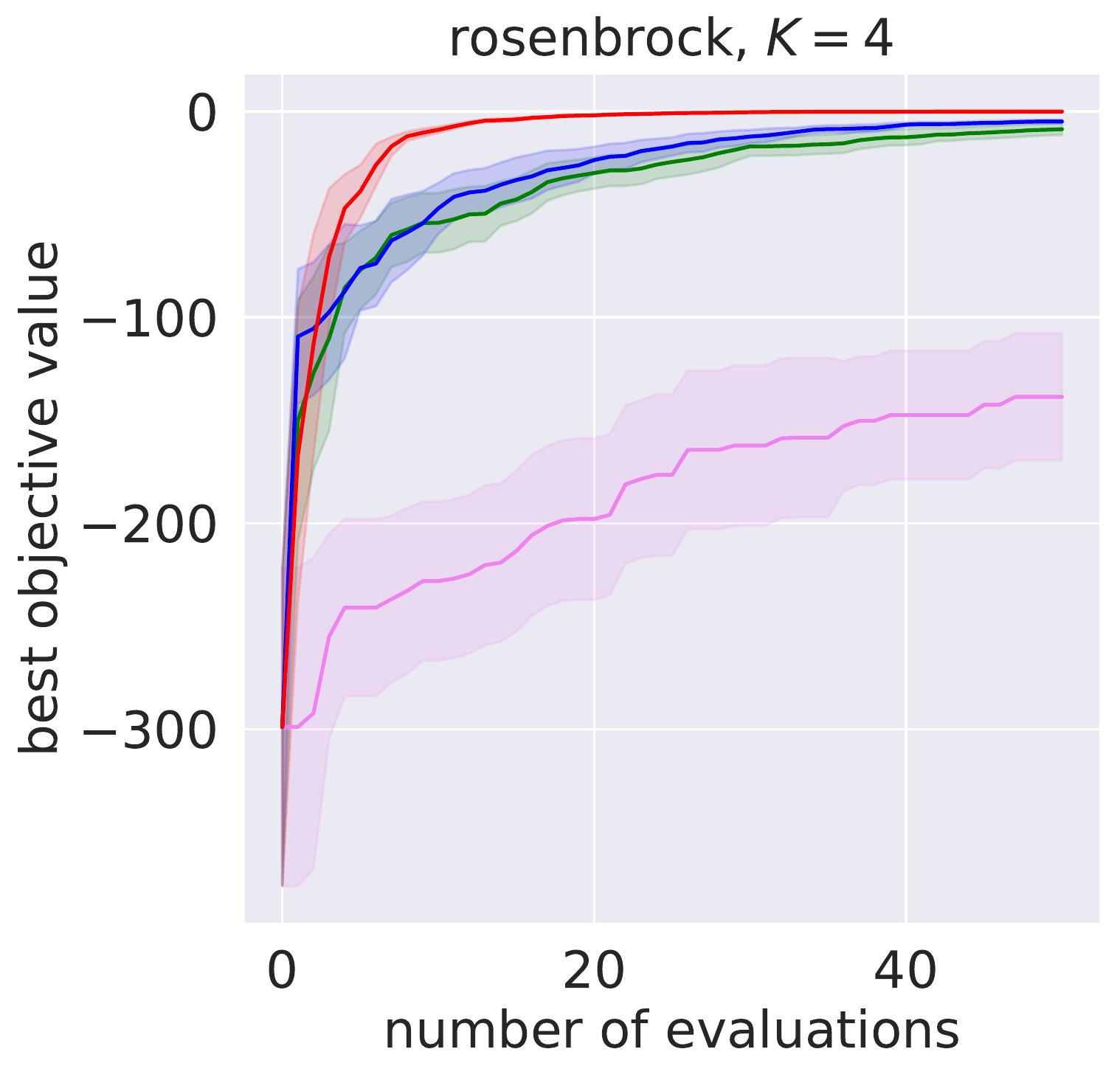}
  \includegraphics[width=0.24\textwidth]{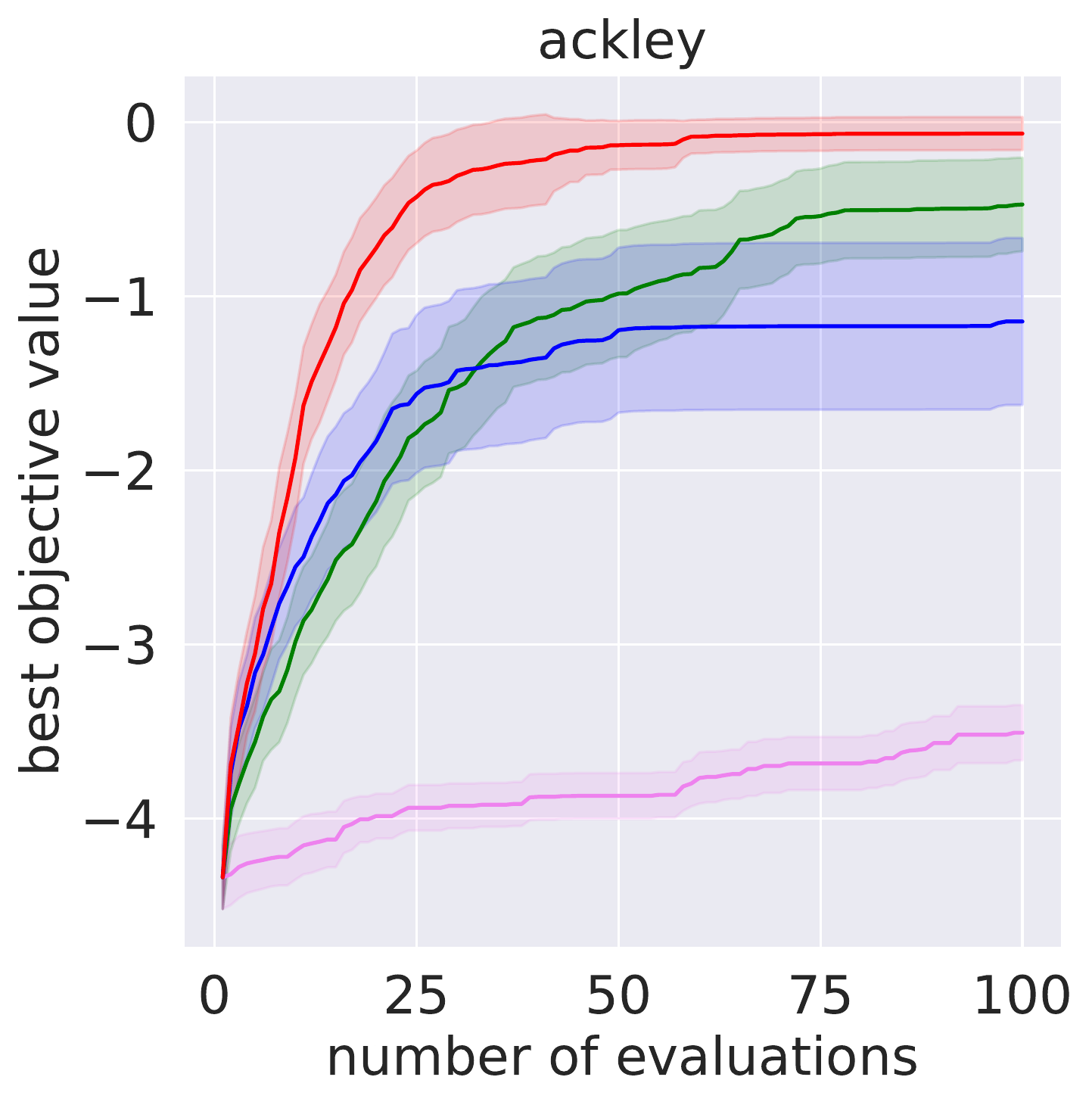}
  \includegraphics[width=0.245\textwidth]{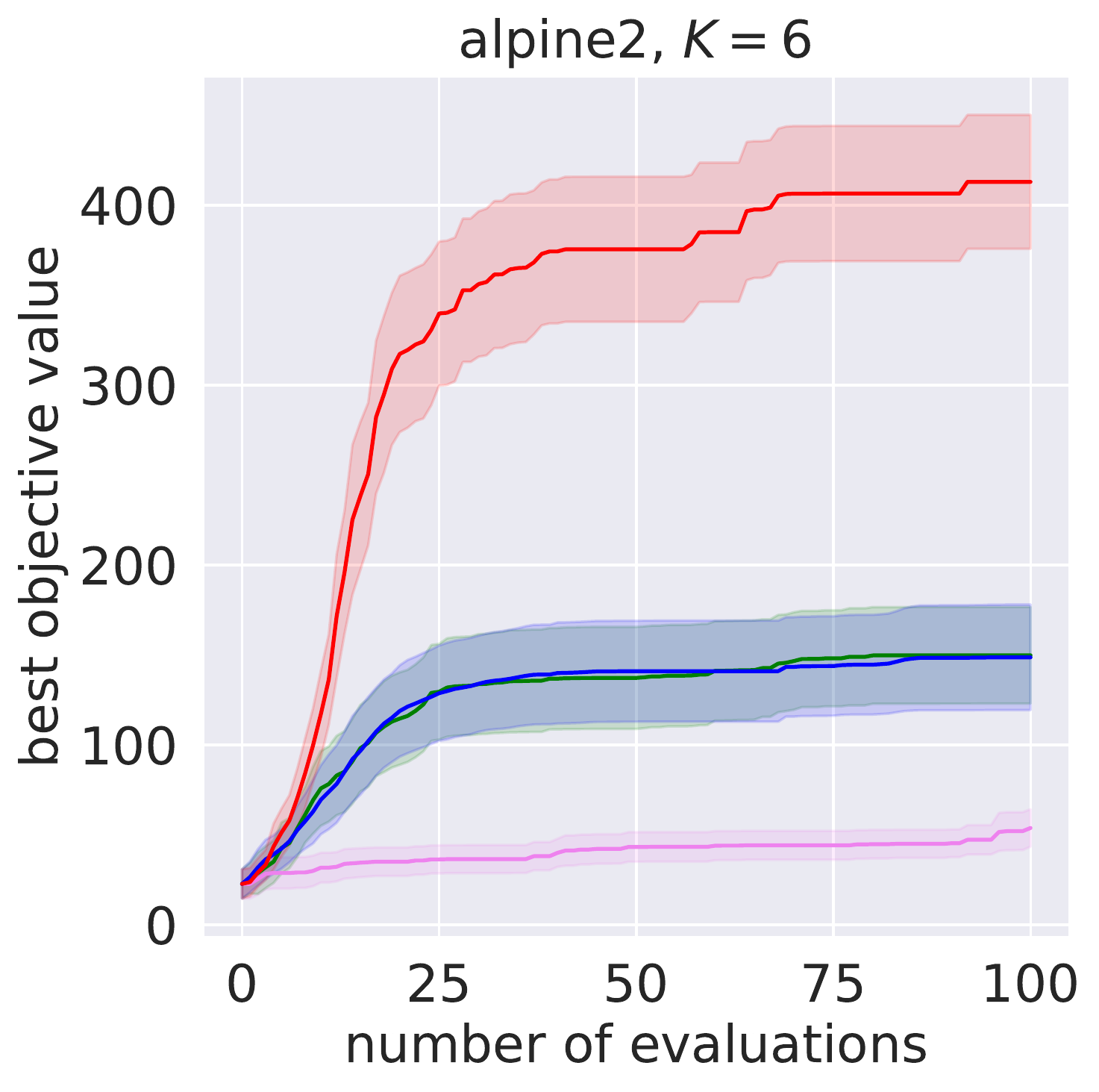}\\
\includegraphics[width=0.235\textwidth]{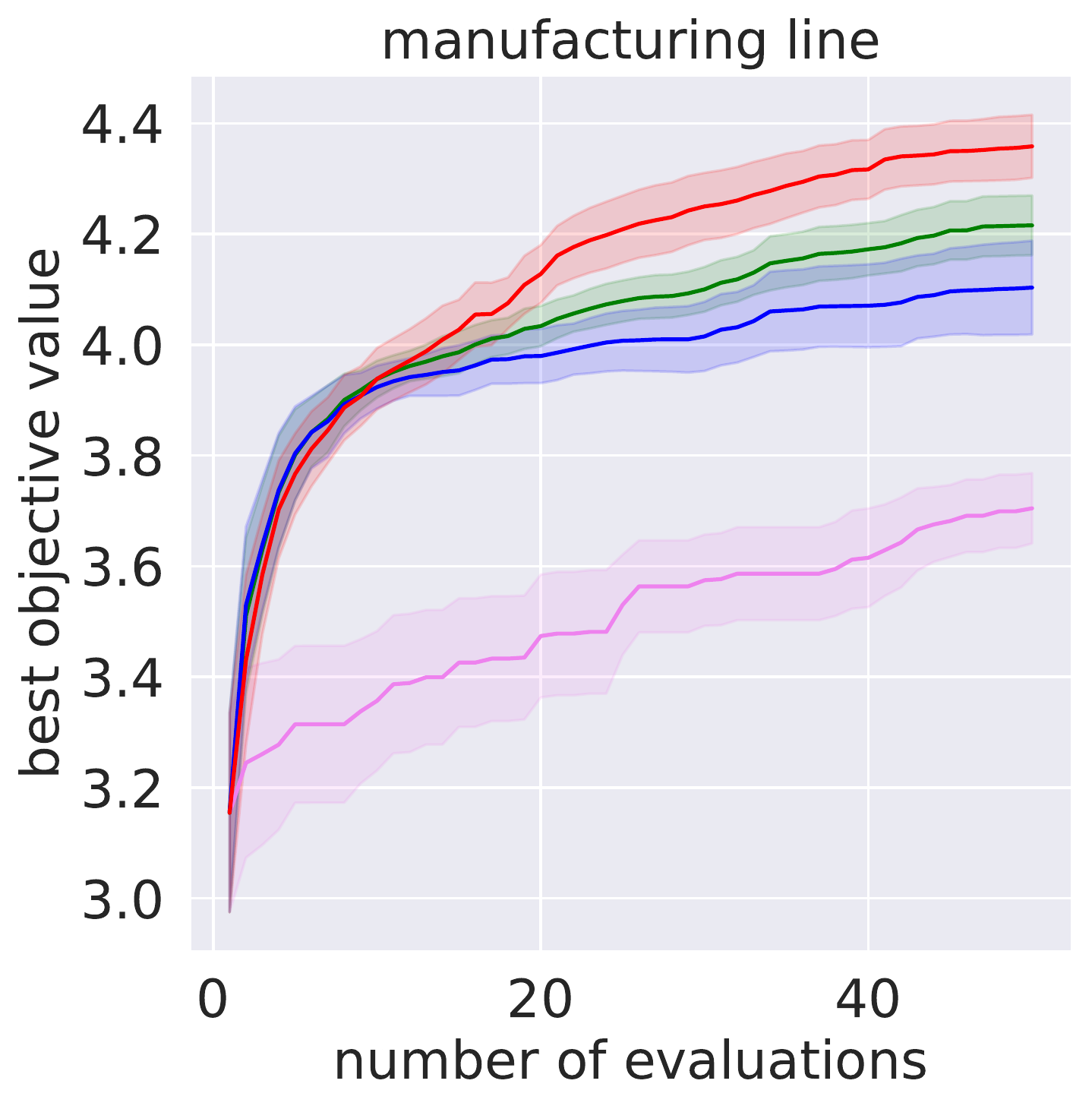}
  \includegraphics[width=0.255\textwidth]{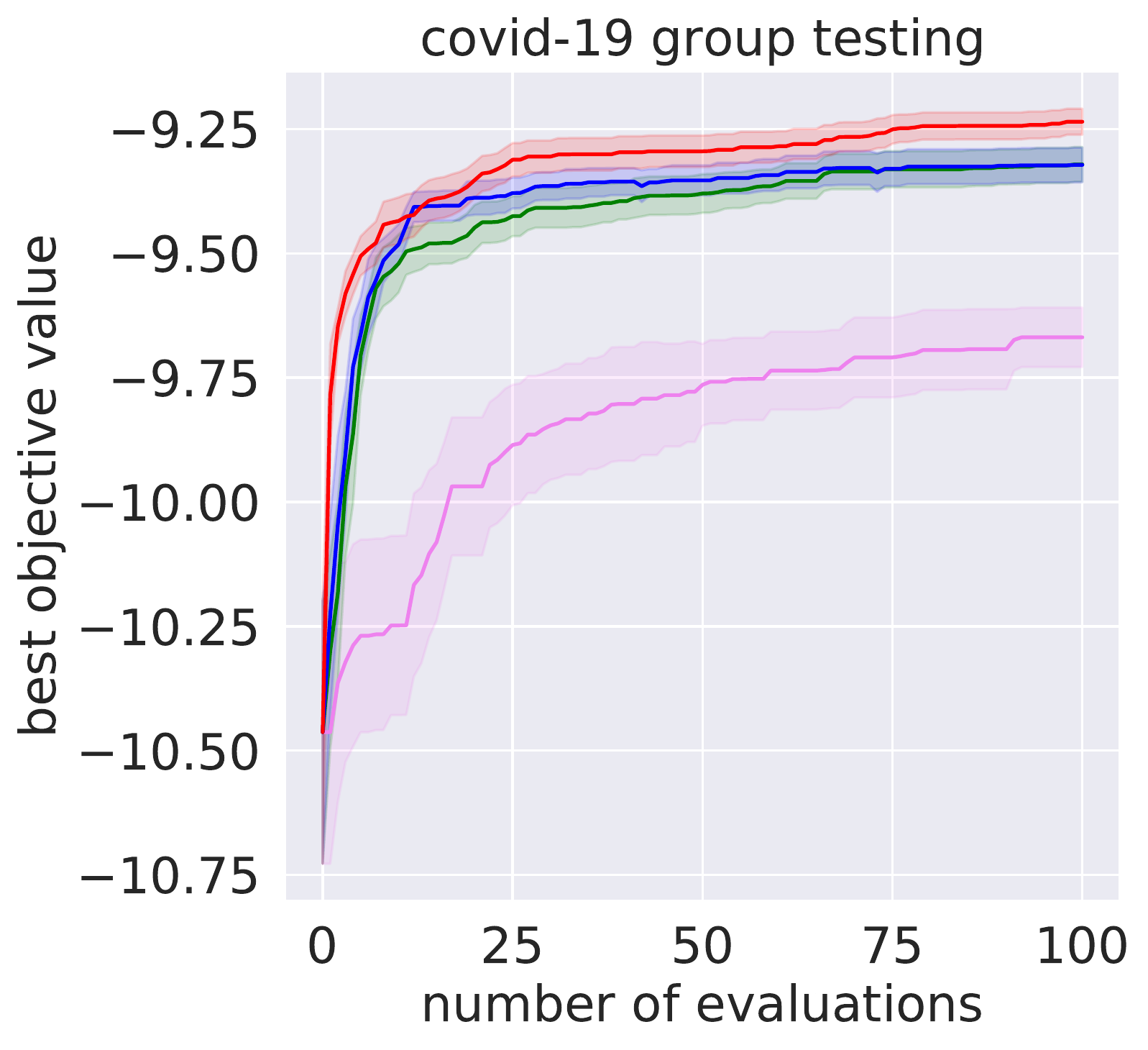}
  \includegraphics[width=0.24\textwidth]{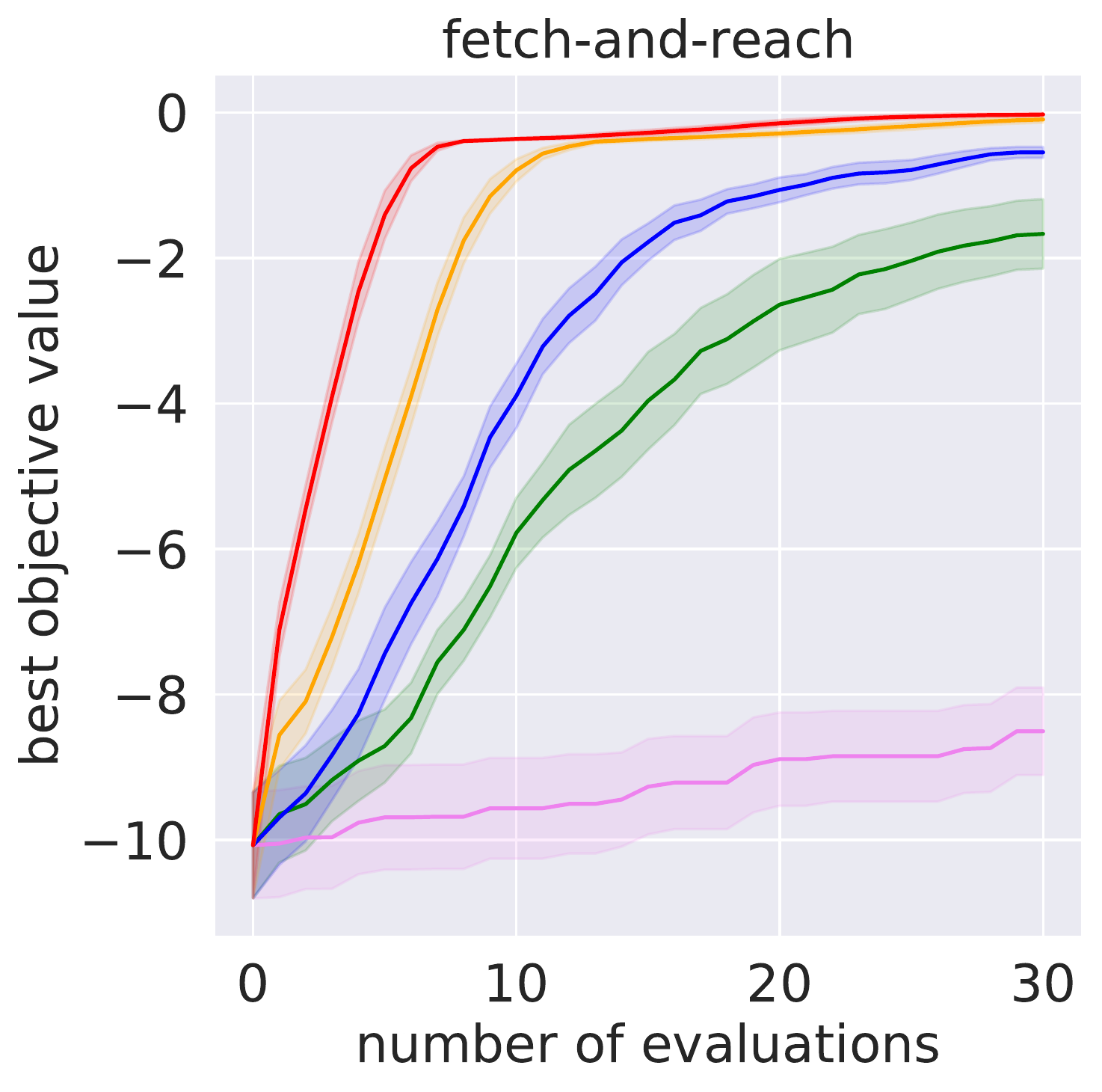}
  \includegraphics[width=0.24\textwidth]{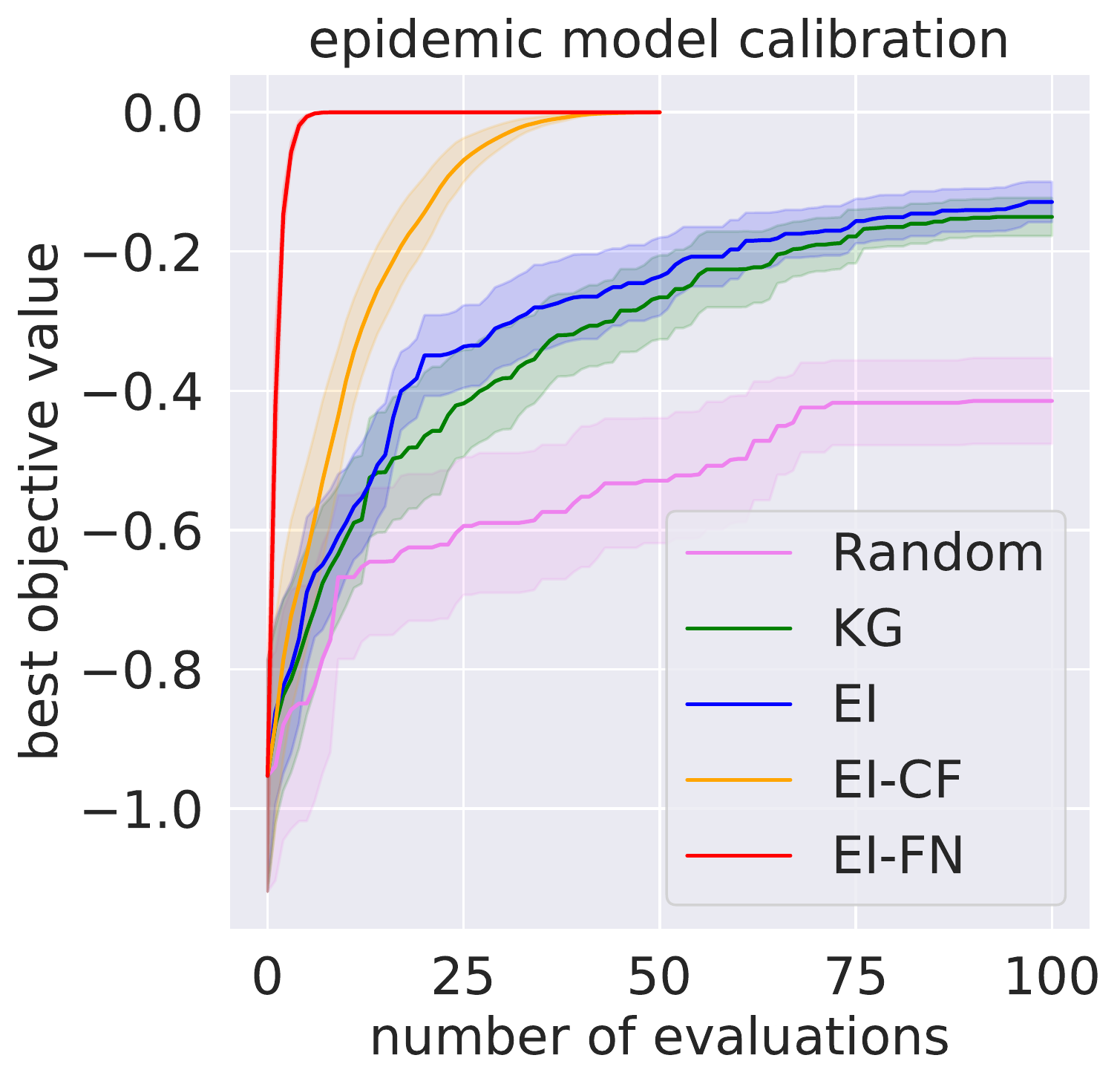}
 \end{tabular}
 \caption{Top:
 Results on synthetic problems that adapt widely used synthetic test functions into function networks.
Bottom: Results on realistic problems: manufacturing line, design of testing protocols for COVID-19, fetch-and-reach with a robotic arm, and  calibration of an epidemic model. EI-FN substantially improves over benchmark methods, with larger improvements for problems with higher-dimensional decision vectors and more nodes.
 \label{fig:results}}
\end{figure}

\begin{figure}
\centering
\begin{tabular}[b]{c}%
  \includegraphics[width=0.243\textwidth]{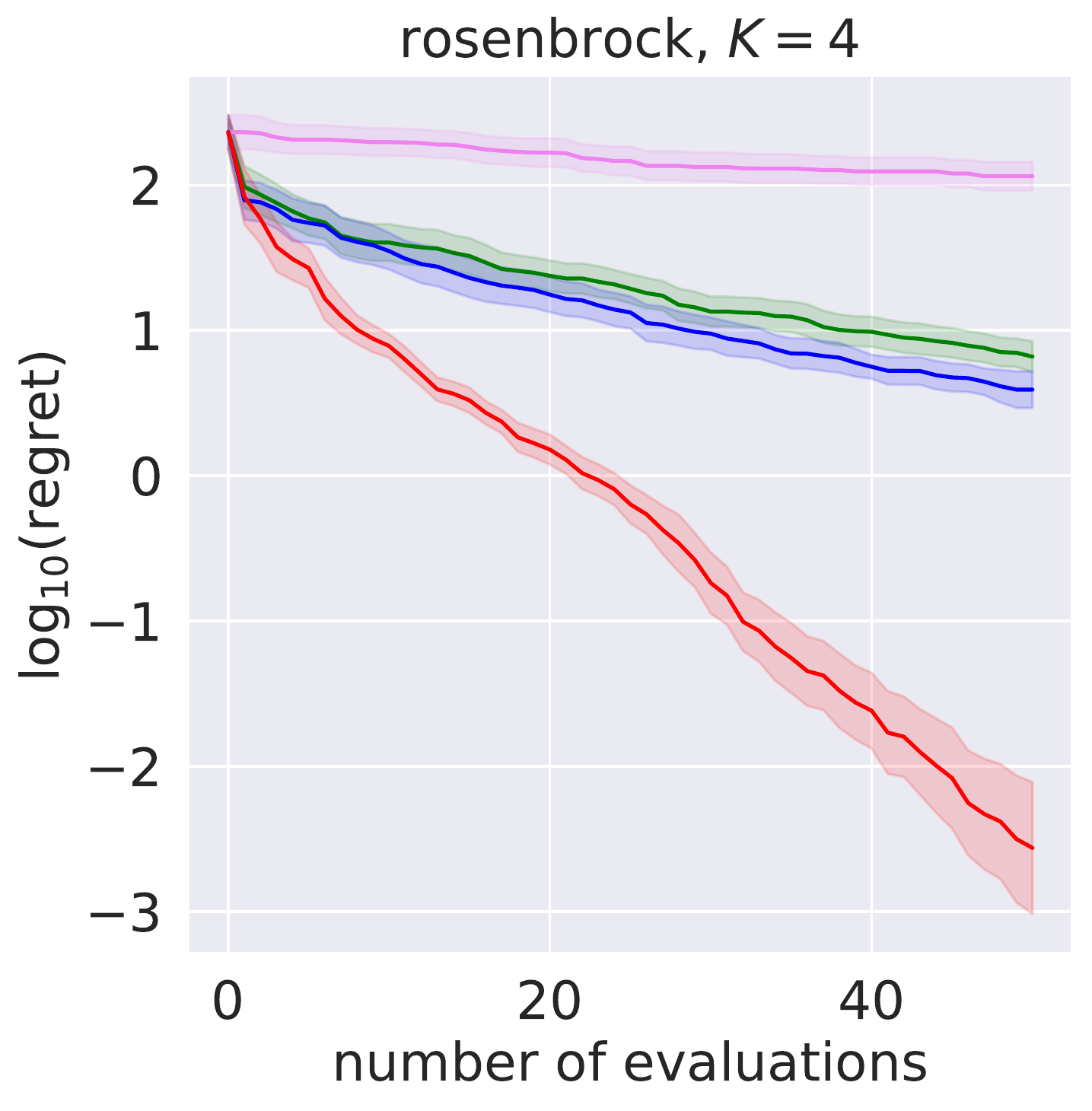}
  \includegraphics[width=0.25\textwidth]{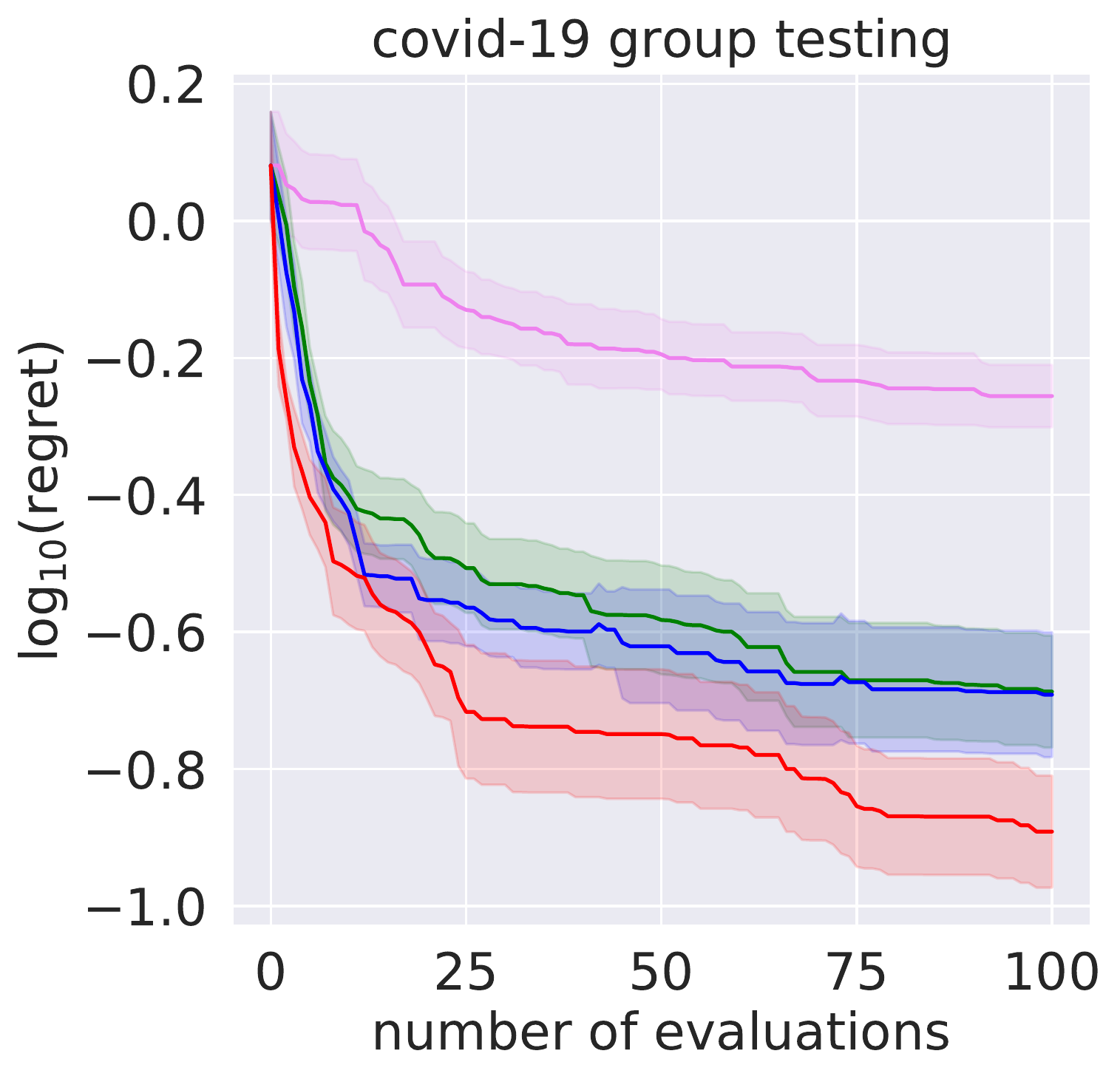}
  \includegraphics[width=0.25\textwidth]{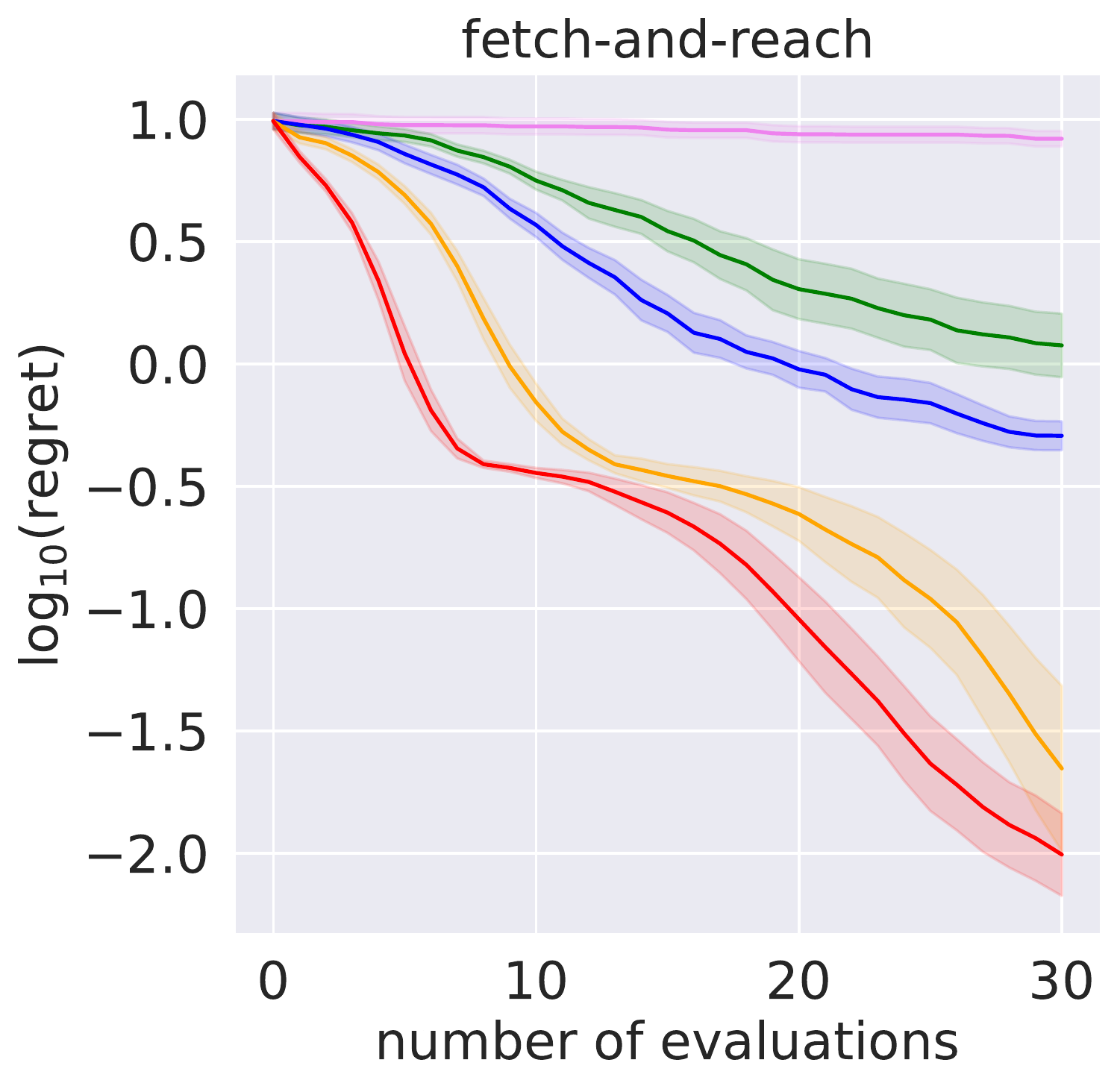}
  \includegraphics[width=0.243\textwidth]{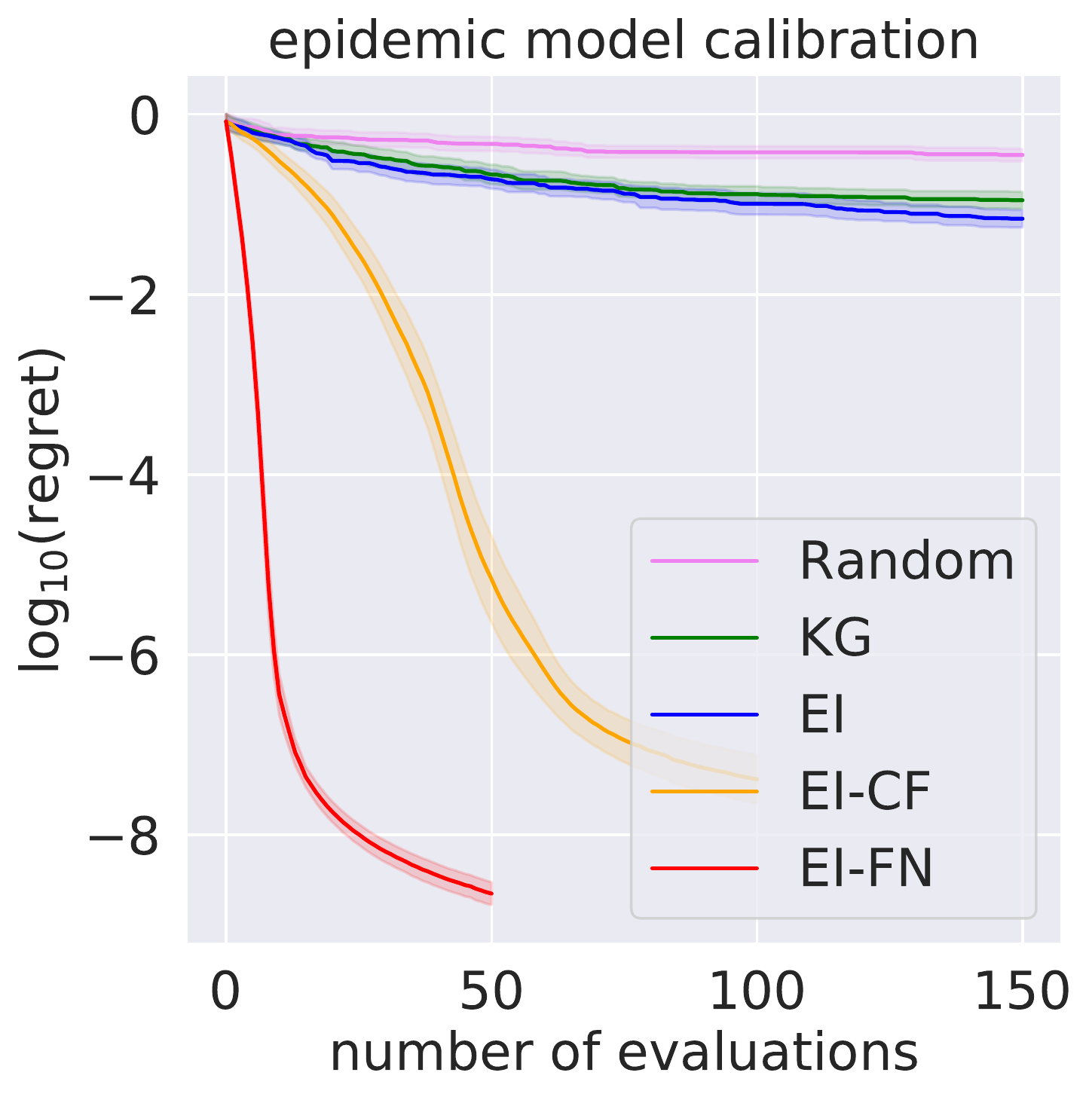}
 \end{tabular}
 \caption{ Results on four of our test problems. In contrast with Figure~\ref{fig:results} above, which shows the best objective value found, here we plot the log$_{10}$-regret. 
 \label{fig:results_lr}}
\end{figure}





\subsection{Fetch-and-Reach with a Robotic Arm}
\label{sec:fetch}
This test problem is obtained by adapting the Fetch environment from OpenAI Gym (see \citet{plappert2018multi}). The goal is to move the gripper of a robotic arm to a target location with only three movements. We formulated this problem as a function network with 3 nodes, each representing a movement of the robotic arm. Each of these nodes takes as input the current  location of the gripper along with a vector of forces to be applied to the robotic arm in that step, and produces as output the location of the arm after this movement is complete. (Note that the output of each node is 3-dimensional and thus this can also be thought of as a function network with 9 single-output nodes). The objective to minimize is the distance between the gripper and the target in the final step. Figure~\ref{fig:fetch_anim} shows an animation of this problem.

We formalize the above problem as follows. Let $z_\mathrm{init}, z_\mathrm{target}\in\R^3$ denote the object's initial and target locations, respectively. At each time step, $t$, we choose the vector of forces to be applied to the robotic arm $x_t\in[-1,1]^3$. After this movement, the location of the object becomes $z_{t+1}$. The goal is to choose $x_t$ for $t=1,\ldots, T$ to minimize $\|z_\mathrm{target} - z_T\|_2$. We set $z_\mathrm{init}=(0,0,0)$, $z_\mathrm{target}=(-12,13,0.2)$, and $T=3$. This can be interpreted as a function network by associating each time step with a triplet of node functions $f_t = (f_{t,1}, f_{t,2}, f_{t,3})$  which take $x_t$ as input and produce $x_{t+1} = f_t(x_t)$ as output.

A very similar experiment to the one described above can be found in \S\ref{supp:robot_push} of the supplement. It considers a a variation of the active learning for robot pushing problem introduced by \cite{wang2017max} whose goal is to teach a robot to push an object to a predetermined target location. As in the experiments here, EI-FN outperforms other methods significantly, including EI-CF.

\begin{figure}
  \centering
  \includegraphics[width=0.27\textwidth]{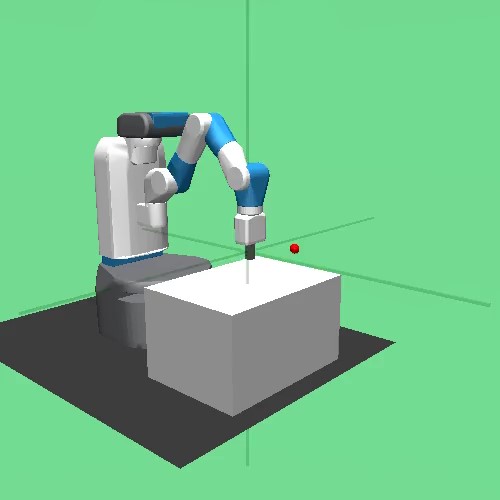} 
  \includegraphics[width=0.27\textwidth]{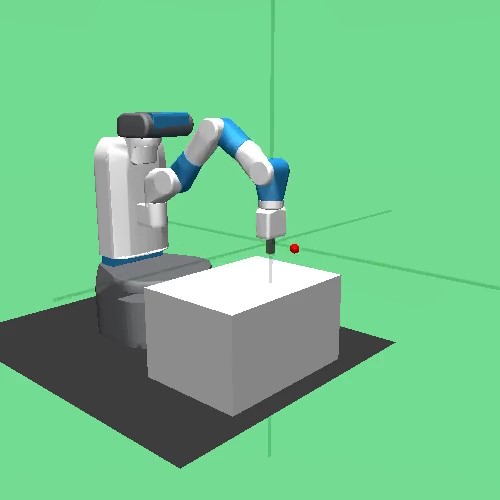}
  \includegraphics[width=0.27\textwidth]{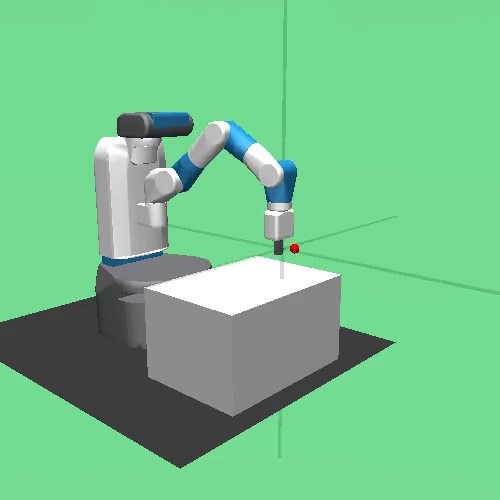}
  \caption{A sequence of screenshots showing three consecutive movements performed by the robotic arm described in \S\ref{sec:fetch}.\label{fig:fetch_anim}}
\end{figure}

\subsection{Calibration of an Epidemic Model}
\label{sec:calibration}
Here we consider calibration of compartmental stochastic models to data, a widely-used tool in epidemiology, medical modeling, and ecology   \citep{sandberg1978mathematical}.
Function networks are well-suited to exploit the structure of such models.
We focus on calibration of a specific epidemic model for influenza, building toward a COVID-19 mitigation benchmark in the next section.
We first describe the epidemic model, then the calibration problem and, finally, formulation as a function network.

\newcommand{\optional}[1]{}

\textbf{SIS Epidemiological Model:}
We calibrate to data a widely used epidemiological model, the SIS model (see, e.g., \citealt{garnett2002introduction}),
that models diseases like influenza capable of reinfecting individuals multiple times. 
In this model, individuals either do not have the disease and are ``susceptible'' (S) or have the disease and are ``infectious'' (I).
We consider a SIS model of two interacting populations, where 
infections occur at 
population-dependent rates.

The model is dynamic, with time indexed by $t=0,1\ldots,T$.
At the beginning of each time period $t$, the fraction of population $i \in \{0,1\}$ that is infectious is $\I{i}{t}$. We assume each population is of equal size, $N$.
During this time period, each person in group $i$ comes into close physical contact with $\beta_{i,j,t}$ people from group $j$.
When this contact is between an infectious person and a susceptible one, it infects the susceptible person.
A fraction $\I{j}{t}$ of the people from group $j$ involved in such interactions are infectious and a fraction $(1-\I{i}{t})$ from group $i$ are susceptible.
A number of new infections in group $i$ result, 
$N (1-\I{i}{t}) \beta_{i,j,t} \I{j}{t}$.
As a fraction of group $i$'s population, this is $(1-\I{i}{t}) \beta_{i,j,t} \I{j}{t}$.
Summing across $j$, we have $(1-\I{i}{t}) \sum_j \beta_{i,j}(t) \I{j}{t}$ new group $i$ infections.
At the same time, infections resolve at a rate of $\gamma$ per period. This results in a decrease in the infectious population in group $i$ of $\gamma \I{i}{t}$.
Putting this together, the number of infectious individuals in group $i$ at the start of the next time period is 
$\I{i}{t+1} = \I{i}{t} (1-\gamma) + (1-\I{i}{t}) \sum_j \beta_{i,j,t} \I{j}{t}$.

\textbf{Calibration:}
The SIS model has parameters $\I{i}{0}$, $\gamma$ and $\beta_{i,j,t}$, where $0\le t < T$ and $i,j \in \{1,2\}$.
We calibrate the parameters $\vec{\beta} = (\beta_{i,j,t} : i,j,t)$ while fixing $\gamma=0.5$ and $\I{i}{0}=.01$ (for both $i$). 
\optional{Such transmission parameters vary substantially over seasons for the influenza and common cold \citep{fuhrmann2010effects}, due to seasonal changes in behavior patterns, temperature and humidity.} We simulate a trajectory of infections from $t=0$ up to $T=3$, using a held-out value for $\vec{\beta}$. We let $I^{\mathrm{obs}}_{i,t}$ denote the fraction of group $i$ observed to be infected at time $t$ in this trajectory.
We then search for the vector $\vec{\beta}$ that, when passed to the SIS model, minimizes the mean squared error between this trajectory and the SIS model predictions. Letting $\I{i}{t}(\vec{\beta})$ indicate this predicted value, the goal is to minimize the mean-squared error (MSE),
$\mathrm{MSE}(\vec{\beta}) := \sum_{t=1}^T (I^{\mathrm{obs}}_{i,t} - \I{i}{t}(\vec{\beta}))^2$.
We do not include $t=0$ since $\I{i}{0}(\vec{\beta})$ is the same for all $\vec{\beta}$.

\textbf{Formulation as a Function Network:}
We encode this as a function network using $2T+1$ nodes, as illustrated in Figure~\ref{fig:epidemic-covid}. 
For each time period $t$ and each group $i$, a node takes input $I_t := (\I{j}{t} : j=0,1)$ and $\beta_t := (\beta_{j,j',t}:j,j'\in\{0,1\})$ and produces output $\I{i}{t+1}$.  (For $t=0$, the input $I_0$ is not needed since this is the same for all $\vec{\beta}$.) 
Then, one additional node takes the output of the other nodes $(\I{i}{t} : i=0,1, t=1,\ldots,T)$ as its input and produces the sum of squared errors 
$\sum_{t=1}^T (I^{\mathrm{obs}}_{i,t} - \I{i}{t}(\vec{\beta}))^2$ as output. We treat this final node as known (its GP prior has a kernel of 0).

\textbf{EI-CF benchmark:}
The fact that the final node in this problem (denoted ``MSE'' in Figure~\ref{fig:epidemic-covid}) has known structurre permits comparing against the EI-CF method for BO of composite functions \citep{astudillo2019bayesian} as a benchmark. EI-CF is substantially less general than our method (EI-FN): it is restricted to settings with one time-consuming black-box multi-output node that provides input to one fast-to-evaluate node with known structure.
To apply EI-CF to this problem, the single black-box multi-output node takes $\vec{\beta}$ as input and produces the vector $(I_{i,t}(\vec{\beta}) : i,t)$ as output. This output is then supplied to the ``MSE'' node. This approach ignores the fact that $I_{i,t}$ does not depend on $\beta_{t'}$, $t'>t$, and depends only indirectly on $\beta_{t'}$, $t<t$ through $I_{j,t-1}$, $j=1,2$.

\begin{figure}
  \centering
 \begin{tikzpicture}[
init/.style={
  draw,
  circle,
  inner sep=0.7pt,
  minimum size=0.7cm
},
init2/.style={
  circle,
  inner sep=0.7pt,
  minimum size=0.7cm
},
]
\begin{scope}[start chain=1,node distance=8mm]
\node[on chain=1, init]
  (f1) {$I_{1,1}$};
\node[on chain=1,init]
 (f3) {$I_{1,2}$};
\node[on chain=1,init] (f5) 
  {$I_{1,3}$};
\end{scope}

\begin{scope}[start chain=2,node distance=8mm]
\node[on chain=2, init2] at (0,-12mm)
(b1) {$\beta_1$};
\node[on chain=2, init2] 
  (b2) {$\beta_2$};
\node[on chain=2, init2] 
  (b3) {$\beta_3$};
 \node[on chain=2, init] 
  (g) {$\mathrm{MSE}$};
\end{scope}

\begin{scope}[start chain=1,node distance=8mm]
\node[on chain=1, init] at (0,-24mm)
  (f2) {$I_{2,1}$};
\node[on chain=1,init]
 (f4) {$I_{2,2}$};
\node[on chain=1,init] (f6) 
  {$I_{2,3}$};
\end{scope}

\draw[-latex] (f1) -- (f3);
\draw[-latex] (f1) -- (f4);
\draw[-latex] (f2) -- (f3);
\draw[-latex] (f2) -- (f4);
\draw[-latex] (f3) -- (f5);
\draw[-latex] (f3) -- (f6);
\draw[-latex] (f4) -- (f5);
\draw[-latex] (f4) -- (f6);

\draw[-latex] (b1) -- (f1);
\draw[-latex] (b1) -- (f2);
\draw[-latex] (b2) -- (f3);
\draw[-latex] (b2) -- (f4);
\draw[-latex] (b3) -- (f5);
\draw[-latex] (b3) -- (f6);

\draw [-latex] (f1) to [out=40,in=100] (g);
\draw [-latex] (f3) to [out=45,in=115] (g);
\draw [-latex] (f5) to  (g);
\draw [-latex] (f2) to [out=320,in=260] (g);
\draw [-latex] (f4) to [out=315,in=245] (g);
\draw [-latex] (f6) to  (g);
\end{tikzpicture} 
\quad
\begin{tikzpicture}[
init/.style={
  draw,
  circle,
  inner sep=0.7pt,
  minimum size=0.7cm
},
init2/.style={
  circle,
  inner sep=0.7pt,
  minimum size=0.7cm
},
]
\begin{scope}[start chain=1,node distance=8mm]
\node[on chain=1, init] 
  (f1) {$L_1$};
\node[on chain=1,init]
 (f4) {$L_2$};
\node[on chain=1,init] (f7) 
  {$L_3$};
\node[on chain=1,init] (s) 
{$\sum_t L_t$};
\end{scope}

\begin{scope}[start chain=1,node distance=8mm]
\node[on chain=1, init] at (0,-12mm)
  (f2) {$I_1$};
\node[on chain=1,init]
 (f5) {$I_2$};
\node[on chain=1,init] (f8) 
  {$I_3$};
\end{scope}

\begin{scope}[start chain=1,node distance=8mm]
\node[on chain=1, init] at (0,-24mm)
  (f3) {$R_1$};
\node[on chain=1,init]
 (f6) {$R_2$};
\node[on chain=1,init] (f9) 
  {$R_3$};
\end{scope}

\begin{scope}[start chain=2,node distance=8mm]
\node[on chain=2, init2] at (0,-36mm)
(x1) {$x_1$};
\node[on chain=2, init2] 
  (x2) {$x_2$};
\node[on chain=2, init2] 
  (x3) {$x_3$};
\end{scope}

\draw[-latex] (f2) -- (f4);
\draw[-latex] (f2) -- (f5);
\draw[-latex] (f2) -- (f6);
\draw[-latex] (f3) -- (f4);
\draw[-latex] (f3) -- (f5);
\draw[-latex] (f3) -- (f6);

\draw[-latex] (f5) -- (f7);
\draw[-latex] (f5) -- (f8);
\draw[-latex] (f5) -- (f9);
\draw[-latex] (f6) -- (f7);
\draw[-latex] (f6) -- (f8);
\draw[-latex] (f6) -- (f9);


\draw[-latex] (x1) to [out=135,in=205] (f1);
\draw[-latex] (x1) to [out=130,in=230] (f2);
\draw[-latex] (x1) -- (f3);

\draw[-latex] (x2) -- (f6);
\draw[-latex] (x2) to [out=130,in=230] (f5);
\draw[-latex] (x2) to [out=135,in=205] (f4);

\draw[-latex] (x3) to [out=135,in=205] (f7);
\draw[-latex] (x3) to [out=130,in=230] (f8);
\draw[-latex] (x3) -- (f9);

\draw[-latex] (f1) to [out=30,in=150] (s);
\draw[-latex] (f4) to [out=25,in=160] (s);
\draw[-latex] (f7) -- (s);
\end{tikzpicture}
\caption{Function network for the (left) epidemic calibration problem in \S\ref{sec:calibration} and (right) the COVID-19 pooled  testing optimization problem described in the supplement.
  \label{fig:epidemic-covid}}
\end{figure}
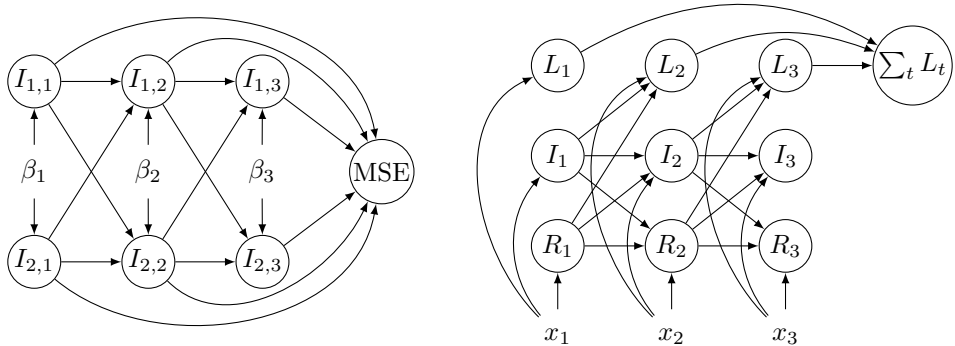

\subsection{Discussion}
\label{sec:results}
Across the wide range of problems considered, EI-FN significantly improves query efficiency over standard BO methods that ignore the function network structure of evaluations.
The benefits range from a ~5\% improvement 
in the value of the best point found on the Drop-Wave and manufacturing problems to several orders of magnitude in the Rosenbrock and epidemic model calibration problems.

The largest benefit arise when the control input is high-dimensional but the input to individual nodes is low-dimensional.
On the epidemic model calibration problem, we see EI-CF \citep{astudillo2019bayesian} outperforming EI and KG by several orders of magnitude, and EI-FN outperforming EI-CF by several {\it additional} orders of magnitude. 
As noted above, EI-CF can be seen as a special case of EI-FN using a less informative function network that hides observations from some nodes. This is consistent with observations of function network structure allowing substantial improvement in query efficiency, and observing more of the internal function network structure providing more value.

\section{Conclusion}
\label{sec:conclusion}
We introduced a novel BO approach for objective functions defined by a series of expensive-to-evaluate functions, arranged in a network so that each function takes as input the output of its parent nodes. These objective functions arise in a wide range of application domains. However, existing methods cannot leverage this structure. Our approach models the outputs of the functions in this network instead of only the objective function, as is standard in BO. Our experiments show that, by doing so, this approach can dramatically  outperform standard BO methods.

Though we see substantial benefits from our approach, there are limitations. First, it requires more computation than standard BO methods, as explored in the supplement. (When the objective is time-consuming, the improved query efficiency more than makes up for the additional computation required.) Second, while we have demonstrated our method in problems with up to 9 nodes, and computational speed would support more, our method does not (yet) scale to hundreds of nodes. 
Third, while we show consistency, it would be instructive to complement our empirical results showing fast convergence with a theoretical understanding of convergence rates. Existing approaches to prove convergence rates 
for the classical expected improvement heavily rely on properties of its analytical expression 
\citep{bull2011convergence,ryzhov2016convergence}, and thus are not directly generalizable to our setting. This is, however, an exciting direction for future work. 

As with any powerful new method for optimizing time-consuming-to-compute black-box functions, ours can accelerate many applications. While this includes innovations that  generally benefit society, such as improvements to public health and vaccine manufacturing, it also includes the design of weapons and other engineering systems that could harm individuals. Thus, it is important that society enact guardrails that ensure proper use of our methodology.

\section*{Acknowledgments}
The authors were partially supported by AFOSR FA9550-19-1-0283 and FA9550-20-1-0351.


\bibliographystyle{apalike} 
\bibliography{bibl} 

\renewtheorem{theorem}{Theorem}[section]
\renewtheorem{lemma}{Lemma}[section]
\newtheorem{assumption}{Assumption}[section]
\newtheorem{definition}{Definition}[section]
\renewtheorem{proposition}{Proposition}[section]

\clearpage
\appendix
\section{Proof of Proposition 1}
\label{supp:prop1}
In this section, we provide a formal statement and proof of Proposition 1. We begin by proving the following auxiliary result.

\begin{lemma}
 Suppose that $f:\R^{B_1\times\ldots\times B_J}\rightarrow\R$ and $h_j:\R^{A}\rightarrow\R^{B_j}, \ j=1,\ldots, J,$ are all Lipischitz continuous with Lipschitz constants $L_f$ and $L_{h_j}, \ j=1,\ldots, J,$ respectively. Then, the function $g:\R^{A}\rightarrow\R$ defined by $g(x) = f( h_1(x),\ldots, h_J(x))$ is Lipschitz with Lipschitz constant $L_{g}:=L_{f}  \sum_{j=1,\ldots, J}L_{h_j}$.
\end{lemma}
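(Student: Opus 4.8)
The plan is to prove this directly from the definition of Lipschitz continuity, treating $g$ as the composition of $f$ with the map $x \mapsto (h_1(x),\ldots,h_J(x))$. Fix two arbitrary points $x, x' \in \R^A$; the goal is to bound $|g(x) - g(x')|$ by $L_g\,\|x - x'\|$. Introducing the shorthand $y = (h_1(x),\ldots,h_J(x))$ and $y' = (h_1(x'),\ldots,h_J(x'))$, viewed as elements of the product space $\R^{B_1\times\cdots\times B_J}$, I would note that $g(x) = f(y)$ and $g(x') = f(y')$, so the Lipschitz continuity of $f$ gives immediately
\[
|g(x) - g(x')| = |f(y) - f(y')| \le L_f\,\|y - y'\|.
\]

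The second step is to control $\|y - y'\|$ in terms of $\|x - x'\|$. Since the $j$-th block of $y - y'$ is $h_j(x) - h_j(x')$, and each $h_j$ is $L_{h_j}$-Lipschitz, every block obeys $\|h_j(x) - h_j(x')\| \le L_{h_j}\,\|x - x'\|$. Taking the norm on the product space $\R^{B_1\times\cdots\times B_J}$ to be the sum of the block norms, this yields $\|y - y'\| = \sum_{j=1}^J \|h_j(x) - h_j(x')\| \le \bigl(\sum_{j=1}^J L_{h_j}\bigr)\,\|x - x'\|$. Chaining with the first step gives $|g(x) - g(x')| \le L_f\bigl(\sum_{j} L_{h_j}\bigr)\|x - x'\|$, which is precisely the claimed constant $L_g = L_f \sum_j L_{h_j}$; since $x, x'$ were arbitrary, the lemma follows.

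The only point requiring genuine care is the norm convention on the product space, and this is where the stated constant is really being pinned down: the additive form $\sum_j L_{h_j}$ arises exactly when the blocks are aggregated with the $\ell^1$ product norm. I would therefore make this convention explicit at the outset. It is worth remarking that the bound is robust to this choice — under the Euclidean product norm one instead gets $\|y - y'\| \le \sqrt{\sum_j L_{h_j}^2}\,\|x - x'\|$, and since $\sqrt{\sum_j L_{h_j}^2} \le \sum_j L_{h_j}$ the constant $L_g = L_f\sum_j L_{h_j}$ remains valid (if slightly loose). So the main obstacle here is not mathematical depth but simply fixing and stating the norm conventions consistently; everything else is a two-line application of the triangle inequality. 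Finally, I anticipate that this lemma is the base case for an inductive argument over the nodes of the network, used to establish that each sample path $\widehat{h}_k(\,\cdot\,; Z)$, and hence $\widehat{g}(\,\cdot\,; Z)$, is Lipschitz in $x$ — the ingredient needed to invoke the sample average approximation machinery behind Proposition~\ref{prop:saa}.
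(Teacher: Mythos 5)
Your proof is correct and takes essentially the same route as the paper's: apply the Lipschitz property of $f$ to get $|g(x)-g(x')| \le L_f\,\|y-y'\|$, then bound the concatenated difference by $\sum_{j=1}^J \|h_j(x)-h_j(x')\| \le \bigl(\sum_{j=1}^J L_{h_j}\bigr)\|x-x'\|$. The paper works with the Euclidean norm throughout and implicitly uses $\|(v_1,\ldots,v_J)\|_2 \le \sum_j \|v_j\|_2$, so your explicit remark about the product-norm convention (and that the constant remains valid under either the $\ell^1$ or Euclidean choice) is just a slightly more careful statement of the identical two-line argument.
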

\begin{proof}
We have 
\begin{align*}
    |g(x) - g(x')| &= |f(h_1(x),\ldots, h_J(x)) - f(h_1(x'),\ldots, h_J(x'))|\\
    & \leq L_{f}\|( h_1(x),\ldots, h_J(x)) - (h_1(x'),\ldots, h_J(x'))\|_2\\
    & \leq L_{f}  \sum_{j=1}^J\|h_j(x) - h_j(x')\|_2\\
    & \leq L_{f}  \sum_{j=1}^JL_{h_j}\|x - x'\|_2\\
    & = L_{g}\|x -  x'\|_2.
\end{align*}
\end{proof}

We are now in position to show Proposition 1, which can be seen as a simple generalization of Theorem 1 in \cite{balandat2020botorch}.

 \begin{proposition}[Proposition 1]
 \label{prop:saa}
 Suppose that $\domain$ is compact, and that the functions $\mu_{n,k}, \sigma_{n,k}:\R^{|I(k)|}\times \R^{|J(k)|} \rightarrow \R$, $k=1,\ldots,K$, are Lipschitz continuous. Let
 \begin{equation*}
     \widehat{x}_{*}^{(M)}\in\argmax_{x\in\domain} \widehat{\eifn}_n\left(x ; Z^{(1:M)}\right),
     \quad
     X_* =  \argmax_{x\in\domain}\eifn_n(x),
 \end{equation*}
 where $\{Z_m\}_{m=1}^\infty$ are independent standard normal random variables.
 Then, for every $\epsilon>0$, there exist $A, \alpha > 0$ such that $\prob\left(\textnormal{dist}\left(\widehat{x}_{*}^{(M)}, X_*\right)>\epsilon\right) \leq Ae^{-\alpha M}$ for all $M$.
 \end{proposition}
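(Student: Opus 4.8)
The plan is to follow the standard two-part template for sample average approximation (SAA): first establish that the SAA objective $\widehat{\eifn}_n(\cdot\,; Z^{(1:M)})$ converges to $\eifn_n$ \emph{uniformly} over the compact set $\domain$ at an exponential-in-$M$ rate, and then deduce convergence of the maximizers by a purely deterministic stability argument. For the second part, since $\domain$ is compact and $\eifn_n$ is continuous (it is an expectation of the continuous-in-$x$ integrand), for each $\epsilon>0$ the quantity $\delta(\epsilon) := \max_{\domain}\eifn_n - \sup\{\eifn_n(x) : \textnormal{dist}(x, X_*) \ge \epsilon\}$ is strictly positive. A short argument then shows that whenever $\sup_{x\in\domain}|\widehat{\eifn}_n(x;Z^{(1:M)}) - \eifn_n(x)| < \delta(\epsilon)/2$, every maximizer $\widehat{x}_{*}^{(M)}$ of the SAA objective satisfies $\eifn_n(\widehat{x}_{*}^{(M)}) > \max_\domain \eifn_n - \delta(\epsilon)$ and hence $\textnormal{dist}(\widehat{x}_{*}^{(M)}, X_*)\le\epsilon$. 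Thus $\prob(\textnormal{dist}(\widehat{x}_{*}^{(M)}, X_*) > \epsilon)$ is bounded by the probability that the uniform error exceeds $\delta(\epsilon)/2$, so it suffices to prove the uniform exponential bound with $\delta = \delta(\epsilon)/2$.

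To control the uniform error I would write $G(x, Z) := \{\widehat{\obj}(x; Z) - \obj_n^*\}^+$, so that $\widehat{\eifn}_n(x; Z^{(1:M)}) = M^{-1}\sum_{m=1}^M G(x, Z^{(m)})$ and $\eifn_n(x) = \E[G(x,Z)]$. The first ingredient is Lipschitz continuity of $G(\cdot, Z)$ in $x$. Applying the Lemma recursively along the finite acyclic network --- using that the input map $x \mapsto (x_{I(k)}, \widehat{h}_{J(k)}(x;Z))$ is Lipschitz and that $\widehat{h}_k = \mu_{n,k}(\cdot) + \sigma_{n,k}(\cdot)Z_k$ --- each $\widehat{h}_k(\cdot; Z)$ is Lipschitz with a modulus $L_k(Z)$ obeying $L_k(Z) \le (L_{\mu_{n,k}} + L_{\sigma_{n,k}}|Z_k|)(1 + \sum_{j\in J(k)} L_j(Z))$. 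Unrolling this recursion expresses the modulus $L(Z) := L_K(Z)$ of $G(\cdot, Z)$ (since $t\mapsto\{t-\obj_n^*\}^+$ is $1$-Lipschitz) as a polynomial in $|Z_1|,\dots,|Z_K|$; in particular $L(Z)$ has finite moments of every order, so $\bar\ell := \E[L(Z)] < \infty$.

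The second ingredient is a covering argument with an \emph{exponentially shrinking} mesh. Cover $\domain$ by $N_\rho \lesssim \rho^{-D}$ balls of radius $\rho$ centered at $x_1,\dots,x_{N_\rho}$; then $\sup_x|\widehat{\eifn}_n(x) - \eifn_n(x)| \le \max_i|\widehat{\eifn}_n(x_i) - \eifn_n(x_i)| + \rho(\bar L_M + \bar\ell)$, where $\bar L_M := M^{-1}\sum_m L(Z^{(m)})$. Taking $\rho = \rho_M = e^{-\alpha' M}$ with $\alpha'$ small, the net term is handled by a union bound over the $N_{\rho_M} \lesssim e^{\alpha' D M}$ centers together with a pointwise large-deviation (Cram\'er) bound $\prob(|\widehat{\eifn}_n(x_i) - \eifn_n(x_i)| > \delta/2) \le 2e^{-c M}$, choosing $\alpha' D < c$; the oscillation term is handled by Markov's inequality, since $\prob(\rho_M(\bar L_M + \bar\ell) > \delta/2) \le \prob(\bar L_M > (\delta/2)e^{\alpha' M} - \bar\ell) \le C e^{-\alpha' M}$ for large $M$, for which only $\bar\ell<\infty$ is needed. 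Combining the two pieces yields the desired exponential bound on the uniform error.

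The main obstacle is the pointwise Cram\'er condition used above: the bound $\prob(|\widehat{\eifn}_n(x_i)-\eifn_n(x_i)|>\delta/2)\le 2e^{-cM}$ requires the sampled value $G(x_i, Z)$ to have a moment generating function that is finite in a neighborhood of the origin. This is exactly where the function-network setting is harder than the single-GP case of \cite{balandat2020botorch}: the multipliers $Z_k$ enter multiplicatively as they propagate through successive layers, so without further control $\widehat{\obj}(x;Z)$ can exhibit super-exponential tails for deep networks. I would close this gap by invoking boundedness of the GP posterior mean and standard deviation --- which holds, e.g., for the stationary kernels used in practice, giving $|\widehat{h}_k(x;Z)| \le \bar\mu + \bar\sigma|Z_k|$ uniformly in the layer inputs --- so that each $G(x_i, Z)$ is dominated by a sub-Gaussian variable and Cram\'er's condition holds. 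These boundedness requirements are what I would fold into the ``regularity conditions'' of the statement; note that only the \emph{value}, and not the Lipschitz modulus, needs a finite moment generating function, the modulus being tamed purely by the exponentially shrinking mesh.
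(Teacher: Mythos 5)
Your proposal is correct in outline and reaches the paper's conclusion by a genuinely different route in its second half. The first ingredient is identical: your recursion $L_k(Z) \le \left(L_{\mu_{n,k}} + L_{\sigma_{n,k}}|Z_k|\right)\left(1 + \sum_{j\in J(k)} L_j(Z)\right)$ is exactly the paper's Lemma 1 applied along the acyclic network. From there, however, the paper builds no covering argument: it verifies that both the value $\widehat{g}_n(x,Z)$ and the Lipschitz modulus $L(Z)$ have moment generating functions finite near the origin and then invokes, as a black box, Proposition 2 of \citet{balandat2020botorch}, itself a consequence of Theorem 2.3 of \citet{homem2008}. Your exponentially shrinking mesh, with Markov's inequality absorbing the oscillation term, replaces the modulus-MGF hypothesis by the much weaker $\E[L(Z)]<\infty$, and this is a real gain rather than a cosmetic one: as you correctly diagnose, for chain-structured networks of depth at least three the modulus polynomial contains monomials such as $|Z_1 Z_2 Z_3|$, whose tails are of order $\exp\left(-c\,t^{2/3}\right)$ and whose MGF is therefore infinite at every positive argument, so the paper's inference that a polynomial of degree at most one in each $|Z_k|$ inherits a finite MGF from the folded normal is valid only for shallow networks (or when the $\sigma$-layers vanish); your route repairs precisely this step. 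The price you pay is the pointwise Cram\'er condition at the net points, for which you import boundedness of the posterior means and standard deviations to get the envelope $|\widehat{h}_k(x;Z)| \le \bar\mu + \bar\sigma|Z_k|$; note the paper tacitly needs the same thing, since its parallel claim that $\widehat{g}_n(x,Z)$ has a finite MGF ``by a similar argument'' suffers from the same compounding of multipliers under Lipschitz-only hypotheses, so folding boundedness into the regularity conditions is fair. Two details to make explicit if you write this up: the Chernoff constant $c$ must be uniform over the $M$-dependent net (your uniform sub-Gaussian envelope delivers this via a Bernstein bound, but say so), and the bound ``for all $M$'' rather than ``for $M$ large'' follows by inflating the constant $A$, since the probabilities in question never exceed one.
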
 
 
 \begin{proof}
 Let $L_{\mu_{n,k}}$ and $L_{\sigma_{n,k}}$ be the Lipschitz constants of $\mu_{n,k}$ and $\sigma_{n,k}$, respectively, and consider the functions $\widehat{f}_{n,k}:\R^{|I(k)|}\times \R^{|J(k)|} \times\R \rightarrow \R, \ k=1,\ldots, K,$ given by
 \begin{equation*}
     \widehat{f}_{n,k}(x_{I(k)}, y_{J(k)}, z_k) = \mu_{n,k}(x_{I(k)}, y_{J(k)})+ \sigma_{n,k}(x_{I(k)}, y_{J(k)})z_k.
 \end{equation*}
 
 We note that, for any fixed $z_k$, the function $(x_{I(k)}, y_{J(k)})\mapsto\widehat{f}_{n,k}(x_{I(k)}, y_{J(k)}, z_k)$ is $\left(L_{\mu_{n,k}} + L_{\sigma_{n,k}}|z_k|\right)$-Lipschitz.
 
 Now consider the functions $\widehat{h}_{n,1},\ldots, \widehat{h}_{n,k}:\domain\times\R^K \rightarrow \R$ defined recursively by
 \begin{equation*}
     \widehat{h}_{n,k}(x, z) = \widehat{f}_{n,k}\left(x_{I(k)}, \widehat{h}_{n,J(k)}(x,z), z_k\right), \ k=1,\ldots, K.
 \end{equation*}
 Applying Lemma 1 repeatedly, we find that, for any fixed $z\in\R^K$, the functions $x\mapsto\widehat{h}_{n,k}(x,z), \ k=1,\ldots, K,$ are Lipschitz continuous with Lipschitz constants $L_{\widehat{h}_{n,k}}(z), \ k=1,\ldots, K,$ defined recursively by
 \begin{equation*}
    L_{\widehat{h}_{n,k}}(z) =  \left(L_{\mu_{n,k}} + L_{\sigma_{n,k}}|z_k|\right)\left(1 + \sum_{j\in J(k)}L_{\widehat{h}_{n,j}}(z)\right), \ k=1,\ldots, K.
 \end{equation*}
 
Let $\widehat{g}_n = \widehat{h}_{n,k}$ and $L_{\widehat{g}_n} = L_{\widehat{h}_{n,k}}$. Then, for any fixed $z$, the function $x \mapsto \widehat{g}_n(x, z)$ is $L_{\widehat{g}_n}(z)$-Lipschitz. Moreover, by definition, $\widehat{g}_n$ satisfies
\begin{equation*}
    \eifn_n(x) = \E_n\left[\left\{\widehat{g}_n(x, Z) - g_n^*\right\}^+\right],
\end{equation*}
 where $Z$ is a $K$-dimensional standard normal random vector and also
 \begin{equation*}
    \widehat{\eifn}_n\left(x ; Z^{(1:M)}\right) = \frac{1}{M}\sum_{m=1}^{M}\left\{\widehat{g}_n\left(x,Z^{(m)}\right) - g_n^*\right\}^+.
 \end{equation*}
 
 Now observe that $L_{\widehat{g}_n}(z)$ is a polynomial in the variables $|z_1|,\ldots, |z_k|$ with degree at most 1 for each variable. Since the folded normal distribution has finite moment generating function everywhere Therefore, if $Z$ is $K$-dimensional standard normal random vector, then $L_{\widehat{g}_n}(Z)$ has a finite moment generating function in a neighborhood of 0. 
 
 An similar argument can  be used to show that, for every $x$, $\widehat{g}_n(x, Z)$ has a finite moment generating function in a neighborhood of zero.  The desired result is now a direct consequence of Proposition 2 in the supplement of \citet{balandat2020botorch}, which is in turn a consequence of Theorem 2.3 in \citet{homem2008}.
 \end{proof}

 \section{Proof of Theorem 1}
 \label{supp:thm1}
 \label{sec:theorem1}
 In this section, we prove Theorem 1. Throughout this section, we let $(x_n : n)$ denote the sequence of points at which the function network is evaluated. We begin by introducing the following definition, which is analogous to Definition 2.1 in \cite{bect2019supermartingale}.
 
 \begin{definition}
 Let $\mathcal{F}_n$ be the sigma algebra generated by the function network observations up to time $n$. The sequence $(x_n : n)$ is said to be a (non-randomized) sequential design if $x_n$ is $\mathcal{F}_{n-1}$-measurable for all $n$.
 \end{definition}
 
 Throughout this section, we assume that $(x_n : n)$ is a sequential design. We note, in particular, that if $x_n \in \max_{x\in\X}\eifn_{n-1}(x)$ for all $n$, then $(x_n : n)$ is a sequential design. 
 
 Our proof relies on the following assumptions.

\begin{assumption}
\label{a:compact}
$\mathbb{X}$ is compact.
\end{assumption}

\begin{assumption}
\label{a:continuous}
The prior mean and covariance functions of $f_1,\ldots, f_K$, are such that $f_1,\ldots, f_K$ are continuous almost surely.
\end{assumption}

\begin{assumption}
\label{a:bounded}
The prior covariance functions of $f_1,\ldots, f_K$ are bounded.
\end{assumption}


 \begin{assumption} \label{a:DCT}
 With probability 1 under the prior, given $f_1,\ldots, f_K$ and a sequential design $(x_n:n)$, there exists a function $\beta(z)$ such that $|\widehat{g}_{n-1}(x_n,z)| \le \beta(z)$ for all $n$ and $z$ and $\int_{-\infty}^\infty \varphi(z) \beta(z) < \infty$, where $\varphi$ is the standard normal pdf.
 \end{assumption}
 
Assumptions~\ref{a:compact}, ~\ref{a:continuous}, and ~\ref{a:bounded} are standard.
Assumption~\ref{a:DCT} is bespoke to our arguments, but holds, for example, when the posterior mean of $f_K$ is uniformly bounded.  (This bound can depend on $f_1,\ldots, f_K$.)
This occurs, for example, when each $f_{k}$ is 
in the reproducing kernel Hilbert space (RKHS) corresponding to the prior covariance function. In particular, if $f_K$ is in this RKHS and the prior kernel is bounded, then $f_K$ is bounded
and there is a uniform bound (depending on the RKHS norm of $f_K$ and the prior covariance) over the deviation between $f_K$ and the sequence of posterior means resulting from our observations. 
The sum of these two bounds and a term that is linear in $z$ arising from the posterior standard deviation term in the definition of $\widehat{g}_{n-1}(x_n,z)$ provides $\beta(z)$. 
We also believe that Assumption~\ref{a:DCT} holds more broadly.


 
  
Note that our proof does not rely on the no-empty-ball assumption (NEB) of \citet{vazquez2010convergence}. Thus, our proof also extends the proof of \citet{astudillo2019bayesian} to a broader class of prior distributions.
  
  As in the main paper, we refer to the ``time-$n$'' posterior, which is the conditional distribution of $f_1,\ldots, f_K$ given $(x_{m}:m\leq n)$ and $(h_k(x_m): k\le K, m\le n)$.
  $\E_n$ denotes expectation with respect to this conditional distribution, and $\prob_n$ denotes the probability operator.

Recall the sampling procedure from Section 4.3 of the main paper. This defined a function $\widehat{g}(x,Z)$ that depended on the current posterior distribution, a point $x$ in the feasible domain, and a vector $Z$. When $Z$ was generated as a standard normal random variable, the distribution of $\widehat{g}(x,Z)$ was the same as that of $g(x)$ under the current posterior.
To support working with this over a sequence of posterior distributions, we use $\widehat{g}_n(x,Z)$ to indicate this function calculated using the posterior at time $n$.
We similarly use the notation $\widehat{h}_{k}(x,Z)$ to represent the function $\widehat{h}_{n,k}(x,Z)$ from the main paper computed with respect to the posterior at time $n$. 
  
  
\begin{lemma}\label{lem:g-infinity}
$g^*_\infty := \lim_n g^*_n$ exists and is finite almost surely. Moreover, any limit point $x_\infty$ of the sequence $(x_n : n)$ satisfies $g(x_\infty) \le g^*_\infty$.
\end{lemma}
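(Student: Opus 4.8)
The plan is to argue pathwise on the almost-sure event that $f_1,\ldots,f_K$ are all continuous, which has probability one by Assumption~\ref{a:continuous}. On this event, each $\h_k$ is continuous: it is built recursively as $\h_k(x) = \f_k(x_{I(k)}, \h_{J(k)}(x))$ as a composition of continuous maps, so by induction on $k$ (the base case $J(1)=\emptyset$ being immediate) continuity propagates up the network. In particular $g = \h_K$ is continuous on $\domain$. Since $\domain$ is compact by Assumption~\ref{a:compact}, $g$ attains its supremum and $\max_{x\in\domain} g(x) < \infty$, so $g$ is bounded above on $\domain$. Both claims will then follow deterministically on this full-probability event.

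For the first claim, I would note that $g_n^* = \max_{i \le n} g(x_i)$ is nondecreasing in $n$ by construction and bounded above by $\max_{x\in\domain} g(x)$, which is finite on the event above. A monotone sequence bounded above converges to a finite limit, so $g_\infty^* := \lim_n g_n^*$ exists and is finite almost surely.

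For the second claim, let $x_\infty$ be any limit point of $(x_n : n)$ and choose a subsequence $x_{n_j} \to x_\infty$. Continuity of $g$ gives $g(x_{n_j}) \to g(x_\infty)$. For every $j$ we have $g(x_{n_j}) \le g_{n_j}^* \le g_\infty^*$, the first inequality because $g_{n_j}^*$ is a maximum over an index set containing $n_j$ and the second by monotonicity of $(g_n^* : n)$. Passing to the limit in $j$ yields $g(x_\infty) \le g_\infty^*$, as desired.

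I do not expect any substantial obstacle: the statement reduces to continuity of $g$ on the compact domain together with the monotonicity of the running maximum. The only point requiring care is that continuity and boundedness hold only almost surely, so the entire argument must be conducted on the full-probability event where $f_1,\ldots,f_K$ are continuous; conditional on a path in that event, everything is elementary real analysis.
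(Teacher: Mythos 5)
Your proof is correct and follows essentially the same route as the paper's: monotonicity plus the almost-sure finiteness of $\max_{x\in\domain} g(x)$ (via continuity of $g$ on the compact $\domain$) gives the limit, and a convergent subsequence with $g(x_{n_j}) \le g^*_{n_j} \le g^*_\infty$ gives the second claim. The only cosmetic difference is that you inline the inductive proof that each $\h_k$ (hence $g=\h_K$) is continuous, which the paper factors out as a separate lemma.
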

\begin{proof}
The sequence $(g^*_n : n)$ is non-decreasing and bounded above by the random variable $g^* := \max_{x' \in \mathbb{X}} g(x')$. The random variable $g^*$ is almost surely finite since $g$ is almost surely continuous (it is the composition of a collection of almost surely continuous functions $h_k$)
and $\mathbb{X}$ is compact.
Thus $g^*_\infty := \lim_n g^*_n$ exists and is finite almost surely.

Let $x_\infty$ be the limit of a convergent subsequence 
$(x_{n_m} : m)$ of $(x_n : n)$. 
Since $g$ is almost surely continuous,
\begin{equation*}
g(x_\infty) = g(\lim_m x_{n_m}) = \lim_m g(x_{n_m}) \le \lim_m g^*_{n_m+1} \le \lim_m g^*_\infty = g^*_\infty.  
\end{equation*}
\end{proof}

\begin{lemma}
Consider the almost sure event that $f_{k}$ is continuous for all $k = 1,\ldots,K$. On this event, the function $h_k$ is continuous 
for all $k = 1,\ldots,K$.
\label{lem:h-continuous}
\end{lemma}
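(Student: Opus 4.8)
The plan is to prove this by induction on the node index $k$, exploiting the topological ordering of the network: recall that the nodes are assumed to be ordered so that $j < k$ for every $j \in J(k)$, which is exactly what makes the recursion $h_k(x) = f_k(x_{I(k)}, h_{J(k)}(x))$ resolvable one node at a time. Throughout I work pointwise on the almost sure event where $f_1, \ldots, f_K$ are all continuous (i.e., on a fixed sample path of the prior where continuity holds), so that the claim becomes a purely deterministic statement: continuity of the node functions propagates forward through the defining recursion to yield continuity of each $h_k$.

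For the base case $k = 1$, note that $J(1) = \emptyset$, so $h_1(x) = f_1(x_{I(1)})$ is the composition of the coordinate projection $x \mapsto x_{I(1)}$ with $f_1$. Projections are continuous and $f_1$ is continuous on this event, so $h_1$ is continuous. For the inductive step, assume $h_j$ is continuous for all $j < k$. Since $j < k$ for every $j \in J(k)$, the inductive hypothesis yields continuity of each component of the stacked map $x \mapsto h_{J(k)}(x) = (h_j(x) : j \in J(k))$, and a vector-valued map is continuous precisely when each of its components is, so $x \mapsto h_{J(k)}(x)$ is continuous. Combining this with the continuous projection $x \mapsto x_{I(k)}$, the map $x \mapsto (x_{I(k)}, h_{J(k)}(x))$ is continuous, and composing with the continuous $f_k$ shows that $h_k$ is continuous. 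By induction, $h_k$ is continuous for all $k = 1, \ldots, K$.

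There is no substantive obstacle here; the only point requiring care is that the induction is well-founded, which is guaranteed precisely by the acyclicity of the graph together with the ordering $j < k$ for all $j \in J(k)$. Without this ordering the recursion could not be unwound node by node, and the argument would break down. Everything else reduces to the standard facts that projections are continuous, that compositions of continuous maps are continuous, and that stacking finitely many continuous scalar functions produces a continuous vector-valued function.
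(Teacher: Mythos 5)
Your proof is correct and matches the paper's argument essentially verbatim: both proceed by strong induction on the node index $k$, using $J(1)=\emptyset$ for the base case and, in the inductive step, the continuity of the projection $x \mapsto x_{I(k)}$, the stacked map $x \mapsto h_{J(k)}(x)$, and the composition with the continuous $f_k$. Your additional remark that the well-foundedness of the induction rests on the ordering $j<k$ for $j \in J(k)$ is a fair, if implicit, point in the paper as well.
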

\begin{proof}

We show this via induction.
The base case, for $k=1$, follows since $f_1$ is continuous on the event considered, $x_{I(k)}$ is a continuous function of $x$, and so $h_1(x) = f_1(x_{I(k)})$ is the composition of two continuous functions and so is a continuous function of $x$.

We now show the induction step.  Fix $k>1$. Suppose $h_{k'}(x)$ is continuous for all $k'<k$.
applying the induction hypothesis for all $k'\in J(k) \subseteq \{1,\ldots,k-1\}$ implies that $h_{J(k)}(x)$ is continuous. Also $x_{I(k)}$ is a continuous function of $x$.
Thus, $x\mapsto (x_{I(k)}, h_{J(k)}(x))$ is continuous.
This and the fact that $f_k$ is continuous on the event considered implies that 
$h_{k}(x) = f_k(x_{I(k)}, h_{J(k)}(x))$ is a composition of continuous functions and so is continuous.
\end{proof}

\begin{lemma}
\label{lem:unifconv}
For each $k=1,\ldots, K$, the functions $\mu_{n,k}$ and $\sigma_{n,k}$ converge pointwise to some continuous functions $\mu_{\infty, k}$ and $\sigma_{\infty, k}$ almost surely; moreover, this convergence is uniform over compact subsets of $\R^{|I(k)|}\times \R^{|J(k)|}$.
\end{lemma}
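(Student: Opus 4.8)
The plan is to treat each node $k$ in isolation, since the prior GPs are independent and, conditional on the (random) sequence of observation locations $u_{k,\ell} = (x_{\ell,I(k)}, \widehat{h}_{J(k)}(x_\ell))$, the functions $\mu_{n,k}$ and $\sigma_{n,k}$ are exactly the noise-free GP-regression posterior mean and standard deviation of $f_k$. No induction over $k$ is needed here. Fix an arbitrary compact $C\subset\R^{|I(k)|}\times\R^{|J(k)|}$; it suffices to establish uniform convergence on $C$ and continuity of the limits.

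First I would dispose of the posterior standard deviation, which is essentially deterministic given the data locations. For monotonicity, conditioning on strictly more noiseless observations can only decrease the posterior variance, so $n\mapsto\sigma_{n,k}^2(u)$ is non-increasing and bounded below by $0$ and hence converges pointwise to some $\sigma_{\infty,k}^2(u)\ge 0$. For equicontinuity, I would write $\sigma_{n,k}(u)$ as the $L^2(\Omega)$-distance between the centered variable $f_k(u)-\mu_{0,k}(u)$ and the closed linear span of the centered observations; the reverse triangle inequality then gives
\begin{equation*}
|\sigma_{n,k}(u)-\sigma_{n,k}(u')| \le \left\|\left(f_k(u)-\mu_{0,k}(u)\right) - \left(f_k(u')-\mu_{0,k}(u')\right)\right\|_{L^2} = d_k(u,u'),
\end{equation*}
where $d_k(u,u') := \left(\Sigma_{0,k}(u,u) - 2\Sigma_{0,k}(u,u') + \Sigma_{0,k}(u',u')\right)^{1/2}$ depends only on the continuous, bounded prior covariance (Assumptions~\ref{a:continuous} and~\ref{a:bounded}) and not on $n$ or on the random locations. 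Thus $\{\sigma_{n,k}\}_n$ is uniformly bounded and equicontinuous on $C$, and Arzel\`a--Ascoli together with pointwise convergence (on a countable dense subset, hence everywhere by monotonicity) yields uniform convergence to a continuous $\sigma_{\infty,k}$; alternatively, once continuity of the limit is in hand, Dini's theorem applies directly since the convergence is monotone.

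Next comes the posterior mean. For fixed $u$, $\mu_{n,k}(u)=\E_n[f_k(u)]$ is a martingale in $n$, and the law of total variance gives $\mathrm{Var}(\mu_{n,k}(u))\le\mathrm{Var}(f_k(u))=\Sigma_{0,k}(u,u)<\infty$ by Assumption~\ref{a:bounded}; hence by Doob's theorem $\mu_{n,k}(u)$ converges almost surely (and in $L^2$) to a limit $\mu_{\infty,k}(u)$, and fixing a countable dense $D\subset C$ this holds simultaneously on $D$ almost surely. To upgrade to uniform convergence and continuity I would again prove equicontinuity of $\{\mu_{n,k}\}_n$. Applying the law of total variance to increments gives the uniform-in-$n$ bound $\mathrm{Var}(\mu_{n,k}(u)-\mu_{n,k}(u'))\le\mathrm{Var}(f_k(u)-f_k(u'))=d_k(u,u')^2$, and Doob's maximal inequality promotes this to $\big\|\sup_n|\mu_{n,k}(u)-\mu_{n,k}(u')|\big\|_{L^2}\le 2\,d_k(u,u')$. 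Conditioning on the sequence of observation locations — under which each $\mu_{n,k}$ is a Gaussian random field whose intrinsic metric is dominated by the single, location-independent metric $d_k$ — a chaining/Dudley modulus-of-continuity estimate yields an almost-sure equicontinuity modulus for the whole family that depends on neither $n$ nor the locations; the regularity of $d_k$ needed here is precisely what almost-sure sample-path continuity of $f_k$ (Assumption~\ref{a:continuous}) supplies. Arzel\`a--Ascoli then converts pointwise convergence on $D$ plus equicontinuity into uniform convergence on $C$ to a continuous limit $\mu_{\infty,k}$.

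The main obstacle is exactly the equicontinuity of the posterior \emph{means}. For the standard deviation the modulus bound is deterministic and falls straight out of Gaussian-Hilbert-space geometry, but for the mean the natural estimate is only in $L^2$, so upgrading it to a pathwise, $n$-uniform modulus requires the maximal inequality together with a chaining argument, and one must handle the fact that the observation locations are themselves random (so that $\mu_{n,k}$ is not Gaussian unconditionally). I would manage this by conditioning on the design and exploiting that the controlling metric $d_k$ is location-independent, so the resulting modulus transfers back after integrating the locations out. Everything else — monotonicity of the variance, the martingale convergence, and the concluding Arzel\`a--Ascoli step — is routine.
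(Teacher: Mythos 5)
Your handling of the posterior standard deviation is sound and essentially complete: the reverse-triangle-inequality bound $|\sigma_{n,k}(u)-\sigma_{n,k}(u')|\le d_k(u,u')$ is design-free, the posterior variance is pathwise non-increasing along a nested sequential design, and Dini/Arzel\`a--Ascoli finishes that half. Pointwise almost-sure convergence of $\mu_{n,k}(u)$ via Doob is also fine (modulo two slips: the node-$k$ observation inputs are $(x_{\ell,I(k)}, h_{J(k)}(x_\ell))$ with the true $h$, not $\widehat{h}$; and your maximal-inequality bound controls $\mu_{n,k}(u)-\mu_{n,k}(u')-(\mu_{0,k}(u)-\mu_{0,k}(u'))$, a cosmetic gap since $\mu_{0,k}$ is continuous). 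The genuine gap is in the equicontinuity of the posterior \emph{means}. You assert that, conditional on the sequence of observation locations, $\mu_{n,k}$ is a Gaussian random field with intrinsic metric dominated by $d_k$, and then chain via Dudley. For a sequential design this premise is false: $x_n$ is $\mathcal{F}_{n-1}$-measurable, i.e., a function of the previously observed \emph{values}, so conditioning on the locations distorts the law of the values --- given the design, the observations are no longer jointly Gaussian, and $\mu_{n,k}$ is not a Gaussian field. The location-independence of $d_k$ does not rescue this: the chaining bound requires sub-Gaussian increments of the \emph{conditional} law, which is exactly what adaptivity destroys, so the ``integrate the locations out'' step has nothing valid to transfer.

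The paper takes a different and shorter route that sidesteps conditional Gaussianity entirely. It first conditions on continuous realizations of $f_1,\ldots,f_{k-1}$, so that by Lemma~\ref{lem:h-continuous} all node-$k$ observation inputs lie in the fixed compact set $B=\{(x_{I(k)},h_{J(k)}(x)):x\in\X\}$; it then applies Proposition 2.9 of \citet{bect2019supermartingale} on the compact set $C=A\cup B$. The engine behind that proposition is that $n\mapsto\E_n[f_k]$ is a \emph{closed} martingale with values in the separable Banach space of continuous functions on a compact set, and closed Banach-valued martingales converge almost surely in norm --- no Gaussianity of conditional laws is needed. (Equivalently, one can bound $\sup_{|u-u'|\le\delta}|\mu_{n,k}(u)-\mu_{n,k}(u')|\le\E_n[\omega(\delta)]$, where $\omega$ is the sample-path modulus of continuity of $f_k$ on $C$, and control this uniformly in $n$ via the closed real-valued martingale $\E_n[\omega(\delta)]$.) Note that this repair reinstates the step you declared unnecessary: the paper needs the compactness of the set of node-$k$ observation inputs, obtained precisely by conditioning on the lower nodes, in order to invoke the proposition; your ``treat each node in isolation, no recursion needed'' shortcut is legitimate only for the standard-deviation half of the lemma, where your bounds genuinely do not depend on the design.
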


\begin{proof}
Fix $k$ and consider the almost sure event that $f_{1},\ldots,f_{k-1}$ are continuous. Condition on continuous realizations of $f_{1},\ldots,f_{k-1}$, thus fixing $h_{J(k)}$ as well.

Let $A$ be an arbitrary compact subset of $\R^{|I(k)|}\times \R^{|J(k)|}$ and define $B = \{ (x_{I(k)}, h_{J(k)}(x)) : x \in \mathbb{X} \}$.  $B$ is compact by Lemma~\ref{lem:h-continuous} since $x\mapsto (x_{I(k)}, h_{J(k)}(x))$ is continuous, $\mathbb{X}$ is compact, and the image of a compact set through a continuous function is compact.
The observations of $f_k$ occur at input points $\{(x_{n,I(k)}, h_{J(k)}(x_n)) : n\} \subset B$.

Let $C = A \cup B$ and note that $C$ is compact. Then, by Proposition 2.9 in \cite{bect2019supermartingale}, $\mu_{n,k}$ and $\sigma_{n,k}$ converge uniformly over $C$ and thus also over $A$.
\end{proof}


\begin{lemma} \label{lem:lim_fixed_z}
Let $(x_{n_m} : m)$ be a convergent subsequence of $(x_n : n)$ with limit $x_\infty$. Then, $\lim_{m\to\infty} \widehat{g}_{n_m-1}(x_{n_m},z) = g(x_\infty)$ for each $z\in\R^K$ almost surely.
 \end{lemma}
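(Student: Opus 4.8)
We need to show that along a convergent subsequence $x_{n_m} \to x_\infty$, the random-variable-valued function $\widehat{g}_{n_m-1}(x_{n_m}, z)$ converges to the *actual* objective $g(x_\infty)$, pointwise in $z$. This is interesting because the left side involves the posterior means and standard deviations at time $n_m-1$ evaluated at the moving point $x_{n_m}$, while the right side is the true (random, but fixed) function value. The key intuition: as $n\to\infty$ we accumulate observations, the posterior concentrates, and evaluating the sampling recursion at a point that is a limit of *sampled* points should collapse the sampled value onto the true value.

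**Structure: induction up the network.** The object $\widehat{g}_{n}(x,z) = \widehat{h}_{n,K}(x,z)$ is defined by the recursion $\widehat{h}_{n,k}(x,z) = \mu_{n,k}(x_{I(k)}, \widehat{h}_{n,J(k)}(x,z)) + \sigma_{n,k}(x_{I(k)}, \widehat{h}_{n,J(k)}(x,z)) z_k$, and the true value by $h_k(x) = f_k(x_{I(k)}, h_{J(k)}(x))$. So the natural plan is an induction on $k$ showing $\widehat{h}_{n_m-1,k}(x_{n_m}, z) \to h_k(x_\infty)$ for each $k$. The base case $k=1$ needs $\mu_{n_m-1,1}(x_{n_m,I(1)}) + \sigma_{n_m-1,1}(x_{n_m,I(1)}) z_1 \to f_1((x_\infty)_{I(1)})$. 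For the inductive step at node $k$, I would feed in the inductive hypothesis for the parent values $\widehat{h}_{n_m-1,J(k)}(x_{n_m},z) \to h_{J(k)}(x_\infty)$.

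**The two ingredients I would combine.** First, the convergence of the posterior objects: by Lemma~\ref{lem:unifconv}, $\mu_{n,k}$ and $\sigma_{n,k}$ converge uniformly on compact sets to continuous limits $\mu_{\infty,k}, \sigma_{\infty,k}$. Crucially I must identify these limits on the relevant sequence of arguments with the true function and zero, respectively. The input argument $(x_{n_m,I(k)}, \widehat{h}_{n_m-1,J(k)}(x_{n_m},z))$ converges (by the induction hypothesis and continuity of $x\mapsto x_{I(k)}$) to $(x_{\infty,I(k)}, h_{J(k)}(x_\infty))$. Combining uniform-on-compacts convergence of $\mu_{n,k}$ with convergence of its argument handles the mean term; the subtle point is that I also need $\sigma_{n_m-1,k}$ evaluated at this moving argument to vanish in the limit, so that the noise term $\sigma \cdot z_k \to 0$ and only the mean survives, yielding $h_k(x_\infty)$. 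Second, I must show that the posterior mean $\mu_{\infty,k}$ agrees with $f_k$ at the limiting input and that $\sigma_{\infty,k}$ is zero there. This is where the \emph{design} matters: the points $x_{n_m}$ are those actually evaluated, so observations of $f_k$ accumulate arbitrarily near the limiting input $(x_{\infty,I(k)}, h_{J(k)}(x_\infty))$, forcing the posterior standard deviation to zero there and pinning the posterior mean to the observed (true) value by continuity of $f_k$ (Assumption~\ref{a:continuous} / Lemma~\ref{lem:h-continuous}).

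**The main obstacle.** The hard part is handling the \emph{moving target}: I am evaluating time-$(n_m-1)$ posterior quantities at the argument $x_{n_m}$, and both the time index and the evaluation point vary simultaneously with $m$. A clean way to control this is a triangle-inequality split: bound $|\widehat{h}_{n_m-1,k}(x_{n_m},z) - h_k(x_\infty)|$ by (i) the deviation of $\mu_{n_m-1,k}$ and $\sigma_{n_m-1,k}$ from their limits $\mu_{\infty,k},\sigma_{\infty,k}$, controlled uniformly on a fixed compact set (Lemma~\ref{lem:unifconv}) that contains all the moving arguments for large $m$, plus (ii) the deviation of the continuous limits $\mu_{\infty,k},\sigma_{\infty,k}$ evaluated at the moving argument versus at the limiting argument, controlled by continuity of the limits and convergence of the argument, plus (iii) identification of the limit values $\mu_{\infty,k}(x_{\infty,I(k)}, h_{J(k)}(x_\infty)) = f_k(\cdots) = h_k(x_\infty)$ and $\sigma_{\infty,k}(\cdots)=0$. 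Establishing (iii)—that the subsequence design makes the limiting posterior interpolate the truth with vanishing variance at the limit point—is the crux and is exactly where the denseness-of-\emph{observations-of-$f_k$}-near-the-limit argument enters, leveraging that the evaluation inputs to node $k$ cluster at the limiting input. I would prove the identification separately as the core measure-theoretic fact and then let the uniform-convergence machinery of (i) and (ii) carry the moving-point argument through.
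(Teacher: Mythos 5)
Your proposal matches the paper's proof in all essentials: the same induction up the network, the same use of Lemma~\ref{lem:unifconv} to pass uniform-on-compacts convergence of $\mu_{n,k}$ and $\sigma_{n,k}$ through the moving argument $(x_{n_m,I(k)}, \widehat{h}_{n_m-1,J(k)}(x_{n_m},z))$, and the same crux identification step---exploiting that noiseless observations at the sampled inputs $(x_{n_m,I(k)}, h_{J(k)}(x_{n_m}))$ force $\sigma_{\infty,k}$ to vanish and $\mu_{\infty,k}$ to equal $h_k(x_\infty)$ at the limiting input, exactly as the paper does by evaluating $\sigma_{n_m+1,k}$ and $\mu_{n_m+1,k}$ at those points and passing to the limit. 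The only cosmetic difference is that you make the triangle-inequality bookkeeping for the simultaneously moving time index and evaluation point explicit, where the paper leaves it implicit.
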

\begin{proof}
All the convergence claims made in this proof are almost surely.
First note that Lemma~\ref{lem:unifconv} implies that, for each $k=1,\ldots,K$, the function $\widehat{f}_{n,k}$ defined in the proof of Proposition 1 converges to the function $\widehat{f}_{\infty, k}$ defined by 
 \begin{equation*}
  \widehat{f}_{\infty, k}(x_{I(k)}, y_{J(k)}, z_k) = \mu_{\infty,k}(x_{I(k)}, y_{J(k)})+ \sigma_{\infty,k}(x_{I(k)}, y_{J(k)})z_k.   
 \end{equation*}
 uniformly in $(x_{I(k)}, y_{J(k)})$ (but not necessarily $z_k$). This in turn implies that, for each $k=1,\ldots,K$, $\widehat{h}_{n,k}$ converges to the function $\widehat{h}_{\infty, k}$ defined recursively by
 \begin{equation*}
     \widehat{h}_{\infty,k}(x, z) = \widehat{f}_{\infty, k}\left(x_{I(k)}, \widehat{h}_{\infty,J(k)}(x,z), z_k\right)
 \end{equation*}
 uniformly in $x \in \X$ for each $z \in \mathbb{R}^K$.

 We show the following two claims by induction on $k$:
 \begin{enumerate}
 \item $\sigma_{n_m-1,k}(x_{n_m,  I(k)}, \widehat{h}_{n_m-1,J(k)}(x_{n_m},z))  \to 0$.
 \item  $\widehat{h}_{n_m-1,k}(x_{n_m},z) \to h(x_\infty)$.
 \end{enumerate}
 
 We first show the induction step, where we assume the induction hypothesis is true for all $k'<k$ and show it for $k$.
 
 Each element of $J(k)$ is strictly less than $k$ and so the induction hypothesis implies that $\widehat{h}_{n_m-1,J(k)}(x_{n_m},z) \rightarrow h_{J(k)}(x_\infty)$. Moreover, since $x_{n_m} \rightarrow x_\infty$, and $\sigma_{n,k}$ converges uniformly to $\sigma_{\infty, k}$, it follows that $\sigma_{n_m-1,k}(x_{n_m,  I(k)}, \widehat{h}_{n_m-1,J(k)}(x_{n_m},z))$ converges to $\sigma_{\infty, k}(x_{\infty,I(k)}, h_{J(k)}(x_\infty))$. Similarly, $\widehat{h}_{n_m-1,k}(x_{n_m},z)$ converges to 
\begin{align*}
\widehat{h}_{\infty,k}(x_\infty, z) &= \widehat{f}_{\infty, k}\left(x_{\infty,I(k)}, \widehat{h}_{\infty,J(k)}(x_\infty,z), z_k\right)\\
&= \widehat{f}_{\infty, k}\left(x_{\infty,I(k)}, h_{J(k)}(x_\infty), z_k\right)\\
&= \mu_{\infty,k}(x_{\infty,I(k)}, h_{J(k)}(x_\infty))+ \sigma_{\infty,k}(x_{\infty,I(k)}, h_{J(k)}(x_\infty))z_k,
\end{align*} 
where the second equation is obtained by noting that, since $\widehat{h}_{n_m-1,J(k)}(x_{n_m},z) \rightarrow h_{J(k)}(x_\infty)$ (by the induction hypothesis) and $\widehat{h}_{n_m-1,J(k)}(x_{n_m},z) \rightarrow \widehat{h}_{\infty,J(k)}(x,z)$, it must be the case that $\widehat{h}_{\infty,J(k)}(x_\infty,z) = h_{J(k)}(x_\infty)$.

Now observe that  $\sigma_{n_m+1,k}(x_{n_m,  I(k)}, h_{J(k)}(x_{n_m}))$ converges to $\sigma_{\infty,k}(x_{\infty,I(k)}, h_{J(k)}(x_\infty))$, but $\sigma_{n_m+1,k}(x_{n_m,  I(k)}, h_{J(k)}(x_{n_m}))=0$ for all $m$, and thus $\sigma_{\infty,k}(x_{\infty,I(k)}, h_{J(k)}(x_\infty)) = 0$. This proves the first part of the induction step. Similarly, note that $\mu_{n_m+1,k}(x_{n_m,  I(k)}, h_{J(k)}(x_{n_m}))$ converges to $\mu_{\infty,k}(x_{\infty,I(k)}, h_{J(k)}(x_\infty))$, but 
\begin{align*}
\mu_{n_m+1,k}(x_{n_m,  I(k)}, h_{J(k)}(x_{n_m})) &= f_k(x_{n_m,  I(k)}, h_{J(k)}(x_{n_m}))\\
&= h_k(x_{n_m}) \rightarrow h_k(x_\infty).
\end{align*}
This proves the second part of the induction step.

The proof of the base case ($k=0$) is analogous except that $J(k)=\emptyset$ eliminates terms that depend on $k'<k$.

\end{proof}

 \begin{lemma}
 \label{lem:lim-inf-EI}
 $\liminf_{n} \eifn_{n-1}(x_{n}) = 0$ almost surely.
 \end{lemma}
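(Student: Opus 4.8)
The plan is to show that $\eifn$ vanishes along a suitably chosen subsequence of the design, after which nonnegativity of $\eifn$ forces the $\liminf$ to be exactly zero. First I would invoke compactness of $\domain$ (Assumption~\ref{a:compact}) and Bolzano--Weierstrass to extract a convergent subsequence $x_{n_m}\to x_\infty$ of $(x_n:n)$. The target is then to prove $\eifn_{n_m-1}(x_{n_m})\to 0$ along this subsequence almost surely.

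The key identity is that, because $x_n$ is $\mathcal{F}_{n-1}$-measurable for a sequential design, the objective $g(x_n)$ under the time-$(n-1)$ posterior has the same distribution as $\widehat{g}_{n-1}(x_n,Z)$ with $Z$ standard normal, so $\eifn_{n-1}(x_n)=\E_{n-1}\!\left[\{g(x_n)-g_{n-1}^*\}^+\right]$ can be written as an integral of $\{\widehat{g}_{n-1}(x_n,z)-g_{n-1}^*\}^+$ against the standard normal density $\varphi$. Along the extracted subsequence, Lemma~\ref{lem:lim_fixed_z} gives $\widehat{g}_{n_m-1}(x_{n_m},z)\to g(x_\infty)$ for each fixed $z$, while Lemma~\ref{lem:g-infinity} gives $g_{n_m-1}^*\to g_\infty^*$; hence the integrand converges pointwise in $z$ to $\{g(x_\infty)-g_\infty^*\}^+$.

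The main obstacle, and precisely the reason Assumption~\ref{a:DCT} is imposed, is to interchange this pointwise-in-$z$ limit with the integral over $z$. I would dominate the integrand using the monotonicity of $(g_n^*)$: since $g_{n-1}^*$ is bounded below by a finite (pathwise) constant $c_0$, one has $\{\widehat{g}_{n-1}(x_n,z)-g_{n-1}^*\}^+\le |\widehat{g}_{n-1}(x_n,z)|+|c_0|\le \beta(z)+|c_0|$, and the right-hand side is integrable against $\varphi$ by Assumption~\ref{a:DCT}. Dominated convergence then yields $\eifn_{n_m-1}(x_{n_m})\to \{g(x_\infty)-g_\infty^*\}^+$. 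I expect this uniform-in-$n$ domination to be the only delicate point; everything else follows mechanically once the pointwise convergence of Lemma~\ref{lem:lim_fixed_z} is in hand.

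Finally, Lemma~\ref{lem:g-infinity} asserts that any limit point $x_\infty$ of the design satisfies $g(x_\infty)\le g_\infty^*$, so the limiting value $\{g(x_\infty)-g_\infty^*\}^+$ is zero. Since $\eifn_{n-1}(x_n)\ge 0$ for every $n$, exhibiting a single subsequence converging to zero establishes $\liminf_n \eifn_{n-1}(x_n)=0$ almost surely, completing the proof.
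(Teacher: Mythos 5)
Your proposal is correct and follows essentially the same route as the paper's proof: extract a convergent subsequence by compactness, express $\eifn_{n_m-1}(x_{n_m})$ as an integral of $\{\widehat{g}_{n_m-1}(x_{n_m},z)-g^*_{n_m-1}\}^+$ against the standard normal density, pass to the limit via dominated convergence under Assumption~\ref{a:DCT} together with Lemmas~\ref{lem:g-infinity} and~\ref{lem:lim_fixed_z}, and conclude from nonnegativity of $\eifn$. Your explicit dominating bound $\beta(z)+|c_0|$, using monotonicity of $(g^*_n)$ to control the $-g^*_{n-1}$ term, is a welcome elaboration of a step the paper leaves implicit, but it is the same argument.
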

 \begin{proof}
 Since $(x_n : n)$ is contained in a compact set, it has a convergent subsequence, $(x_{n_m}:m)$. 
 
 Then, letting $\varphi(z)$ be the standard normal probability density function,
 \begin{align*}
 \lim_{m\to\infty} \eifn_{n_m-1}(x_{n_m})
 &= \lim_{m\to\infty} \int_{-\infty}^\infty 
 \left\{ \widehat{g}_{n_m-1}(x_{n_m},z) - g^*_{n_m-1} \right\}^+
 \varphi(z)\, dz \\
&= \int_{-\infty}^\infty \lim_{m\to\infty} 
 \left\{ \widehat{g}_{n_m-1}(x_{n_m},z) - g^*_{n_m-1} \right\}^+
 \varphi(z)\, dz 
 \end{align*}
 by the dominated convergence theorem and Assumption~\ref{a:DCT}.
 
 By Lemmas~\ref{lem:g-infinity} and~\ref{lem:lim_fixed_z},
 \begin{equation*}
\lim_{m\to\infty} 
\left\{ \widehat{g}_{n_m-1}(x_{n_m},z) - g^*_{n_m-1} \right\}^+ 
= \left\{ g(x_\infty) - g^*_\infty \right\}^+ \\
=0 
 \end{equation*}
 for each $z$.  
 Thus,
 $\lim_{m\to\infty} \eifn_{n_m-1}(x_{n_m}) = 0$, implying that $\liminf_n \eifn_{n-1}(x_n) \le 0$.
 This and the fact that $\eifn_{n-1}(x) \ge 0$ for all $x$ implies that 
$\liminf_n \eifn_{n-1}(x_n) = 0$.
 \end{proof}

 The following lemma considers a sequence of random variables $I_n$ that we will later take to be the random improvements generated within our statistical model under the posterior after $n$ measurements. The quantity $\E[I_n^+]$ will then be the EI-FN under this posterior.
 \begin{lemma}\label{lem:improvement-cdf}
 Consider a sequence of scalar random variables $I_n$.
 If $\liminf_{n} \prob(I_n \ge \epsilon) > 0$ for any given $\epsilon>0$, then $\liminf_{n} \E[I_n^+] > 0$.
 \end{lemma}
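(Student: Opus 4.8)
The plan is to lower-bound the expected positive part $\E[I_n^+]$ by a quantity proportional to the tail probability $\prob(I_n \ge \epsilon)$, in the spirit of Markov's inequality, and then pass to the limit inferior. Fix the $\epsilon>0$ for which the hypothesis $\liminf_n \prob(I_n \ge \epsilon) > 0$ holds. The key pointwise inequality I would use is
\[
I_n^+ \ge \epsilon\, \mathbf{1}\{I_n \ge \epsilon\},
\]
which holds because on the event $\{I_n \ge \epsilon\}$ we have $I_n^+ = I_n \ge \epsilon$ (using that $\epsilon>0$), while on its complement the right-hand side vanishes and $I_n^+ \ge 0$.

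Taking expectations of both sides and using monotonicity of expectation gives $\E[I_n^+] \ge \epsilon\, \prob(I_n \ge \epsilon)$ for every $n$. Since $\epsilon$ is a fixed positive constant it factors out of the limit inferior, so
\[
\liminf_n \E[I_n^+] \ge \epsilon \liminf_n \prob(I_n \ge \epsilon) > 0,
\]
where the final strict inequality is precisely the hypothesis. This yields the claim.

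There is essentially no obstacle here: the result is elementary, reducing to a single Markov-type bound. The only points that require a moment's care are interpreting the hypothesis correctly (it suffices that there exist one $\epsilon>0$ with positive $\liminf$ of the tail probabilities), confirming that $\epsilon>0$ is what licenses replacing $I_n^+$ by $I_n$ on the event in question, and checking that the constant $\epsilon$ may indeed be pulled through the $\liminf$ without reversing the inequality. When this lemma is later applied within the consistency proof, the $I_n$ will be the random improvements $\{\widehat{g}_{n}(x,Z)-g_n^*\}$ under the time-$n$ posterior, so that $\E[I_n^+]$ is exactly $\eifn_n(x)$, and this lemma will convert a positive-probability lower bound on the improvement into a positive lower bound on EI-FN, contradicting Lemma~\ref{lem:lim-inf-EI}.
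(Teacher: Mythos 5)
Your proof is correct and takes essentially the same route as the paper's: the paper likewise establishes $\E[I_n^+] \ge \epsilon\, \prob(I_n \ge \epsilon)$ and pulls the constant $\epsilon$ through the $\liminf$. You merely spell out the underlying pointwise inequality $I_n^+ \ge \epsilon\, \mathbf{1}\{I_n \ge \epsilon\}$, which the paper leaves implicit.
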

 \begin{proof}
We have $\E[I_n^+] \ge \epsilon \prob(I_n \ge \epsilon)$.
Thus, $\liminf_{n} \E[I_n^+] \geq \epsilon\liminf_n  \prob(I_n \ge \epsilon) > 0$.
 \end{proof}
 
 We are now ready to prove Theorem 1.
 \begin{proof}[Proof of Theorem 1]
 Pick any point $x\in\mathbb{X}$.  Since we choose to evaluate at the point with largest $\eifn_n(x)$, $\eifn_n(x) \le \eifn_n(x_{n+1})$ for each $n$.
 
 Lemma~\ref{lem:lim-inf-EI} then implies that there is a subsequence $(n_m : m)$ on which $\lim_m \eifn_{n_m}(x_{n_m+1}) = 0$. This and the fact that $\eifn_n(x) \ge 0$ imply that $\lim_n \eifn_{n_m}(x) = 0$. Thus, $\liminf_n \eifn_n(x) = 0$.
 
Recall that $\eifn_n(x) = \E_n[\left\{g(x) - g^*_n\right\}^+]$.
 Consider the conditional distribution of $g(x) - g^*_n$ under the time-$n$ posterior.
 This is the same as the conditional distribution of 
$\widehat{g}_n(x;Z) - g^*_n$ where only $Z$ is random and the other quantities are completely determined by the observations of the function network at $x_1,\ldots, x_n$.
By taking $I_n$ to be a random variable with the same distribution for each $n$,
then on any sequence of observations,
Lemma~\ref{lem:lim-inf-EI} and the contrapositive of Lemma~\ref{lem:improvement-cdf} imply that
 $\liminf_n \prob_n(g(x) - g^*_n \ge \epsilon) = 0$ for each $\epsilon>0$.
 
 Since the random variable $g^*_\infty$ defined in Lemma~\ref{lem:g-infinity} bounds each $g^*_n$ above, 
$\prob_n(g(x) - g^*_n \ge \epsilon) \ge \prob_n(g(x) - g^*_\infty \ge \epsilon)$ and we have 
 $\liminf_n \prob_n(g(x) - g^*_\infty \ge \epsilon) = 0$ for each $\epsilon>0$.
 
For any event $W$, $(\prob_n(W) : n)$ is a uniformly integrable martingale, and thus converges almost surely to a limiting random variable $\prob_\infty(W)$, where $\prob_\infty$ is defined as the conditional expectation with respect to the event $(x_n, h_k(x_n) : n<\infty, k\leq K)$ (by Theorem 5.13 of \cite{cinlar}).
Taking $W$ to be the event that $g(x) - g^*_\infty \ge \epsilon$, we have 
that $\prob_n(g(x) - g^*_\infty \ge \epsilon)$ has a limit,
$\prob_\infty(g(x) - g^*_\infty \ge \epsilon)$.
Moreover, this limit must be the same as the $\liminf$, which we showed above was 0.
Thus,
\begin{equation*}
\prob_\infty(g(x) - g^*_\infty \ge \epsilon) = 0.
\end{equation*}

Since this is true for each $\epsilon>0$, taking the limit as $\epsilon\to0$ and using the monotone convergence theorem shows 
\begin{equation*}
\prob_\infty(g(x) > g^*_\infty) = 0.
\end{equation*}

 



Taking the expectation under the prior and applying the law of conditional expectation, we have that
\begin{equation*}
  0 
= \E\left[\prob_\infty(g(x) > g^*_\infty)\right]
= \E\left[\E_\infty(1\{g(x) > g^*_\infty\})\right] 
= \E\left[1\{g(x) > g^*_\infty\}\right] 
= \prob(g(x) > g^*_\infty).  
\end{equation*}
Thus, the value of $g(x)$ is almost surely less than or equal to the limiting value of the sequence of best points found.

Let $X$ be a countable set that is dense in $\mathbb{X}$. Such set exists because $\mathbb{X}$ is compact. Then, because the countable union of events with probability zero also has probability zero,
\begin{equation*}
    0 = \prob(g(x) > g^*_\infty \textnormal{ for some } x\in X) =   \prob\left(\sup_{x \in X} g(x) > g^*_\infty\right)
\end{equation*}

Moreover, because $g$ is almost surely continuous
and $X$ is dense in $\mathbb{X}$,
$\sup_{x\in \mathbb{X}} g(x) = \sup_{x \in X} g(x)$ almost surely. Hence, $\prob\left(\sup_{x \in \mathbb{X}} g(x) > g^*_\infty\right) = 0$, which concludes the proof.
 \end{proof}

 \section{Proof of Proposition 2}
 \label{supp:prop2}
 In this section we prove Proposition 2 by providing a function network and a set of initial conditions where EI-FN does not measure the optimization domain densely. While the example we provide is very simple, we think such behavior also arises in more complex networks.
 
\begin{proposition}[Proposition 2]
There exists a function network in which EI-FN is consistent but whose measurements are not necessarily dense in $\domain$.
\end{proposition}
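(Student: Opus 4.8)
The plan is to exhibit a concrete, minimal function network for which EI-FN is consistent yet provably avoids sampling some open region of $\domain$. I would use a two-node chain, matching the \textbf{Drop-Wave} architecture: the first node $f_1$ takes $x\in\domain$ and produces $y_1=h_1(x)=f_1(x)$, and the second node $f_2$ takes $y_1$ (with no direct dependence on $x$) and produces the objective $g(x)=h_2(x)=f_2(h_1(x))$. The key structural idea is to choose $f_1$ and the prior on $f_2$ so that, after finitely many observations, the posterior on $f_2$ over a sub-interval of its input range becomes good enough to rule out optimality, while $f_1$ maps some region $R\subset\domain$ exclusively into that sub-interval. Since $g$ depends on $x$ only through $y_1=f_1(x)$, any $x\in R$ offers no possibility of improvement once the relevant part of $f_2$ is pinned down, so EI-FN assigns it zero acquisition value and never samples there.

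The steps I would carry out, in order, are as follows. First, specify the network precisely: take $\domain$ a compact interval, fix a particular continuous (and known, or cheaply learned) $f_1$ that is, say, monotone or has a flat plateau so that a nontrivial region $R$ of $\domain$ maps into a narrow target interval $[a,b]$ of $y_1$-values. Second, choose a GP prior on $f_2$ and an initial design such that the posterior mean and variance of $f_2$ on $[a,b]$ are controlled; in particular arrange that the best observed objective value $g_n^*$ already dominates $\sup_{y_1\in[a,b]}$ of any plausible posterior draw of $f_2$, so the expected improvement contributed by points in $R$ is exactly zero. Third, verify the zero-EI claim rigorously: write $\eifn_n(x)=\E_n[\{f_2(f_1(x))-g_n^*\}^+]$ and show that for $x\in R$ the integrand is almost surely nonpositive, hence $\eifn_n(x)=0$, so these points are never selected by the $\argmax$ rule. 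Fourth, confirm that EI-FN is nonetheless consistent on this instance by checking that Assumptions 2.1--2.4 hold for the chosen priors and design, and invoking Theorem 1 directly; consistency then holds despite the measurements being confined to $\domain\setminus R$, which is not dense.

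The main obstacle, I expect, is the fourth step: reconciling ``never sampling $R$'' with the conclusion of Theorem 1. The theorem guarantees $g_n^*\to\max_{x\in\domain}g(x)$, so I must ensure the construction does not accidentally make the true optimum lie in the unexplored region $R$. The remedy is to design $f_1$ and the realized $f_2$ so that the global maximum of $g$ is attained outside $R$ (e.g. $f_2$ is genuinely larger on $f_1(\domain\setminus R)$), which is consistent with the posterior-domination argument that justified excluding $R$ in the first place. A subtler point is that the exclusion must be robust: I need the event ``$R$ is never sampled'' to have positive probability (indeed probability one on a suitable event) under the prior, not merely for one favorable realization, so I would phrase the plateau/monotonicity conditions on $f_1$ and the boundedness of $f_2$ on $[a,b]$ so that the zero-EI conclusion holds almost surely. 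Verifying Assumption~\ref{a:DCT} for the specific prior is routine given boundedness of the kernel, so the real work is the careful, simultaneous choice of $f_1$, the $f_2$-prior, and the initial design making both claims hold at once.
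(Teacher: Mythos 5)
There is a genuine gap, and it sits exactly where your plan puts the main work: the zero-EI claim in steps 2--3. You place the GP prior on the \emph{downstream} node $f_2$ and hope that, after finitely many observations, the posterior on $f_2$ over an interval $[a,b]$ is ``pinned down'' well enough that the improvement integrand is almost surely nonpositive for every $x$ with $f_1(x)\in[a,b]$. With a nondegenerate GP posterior this is impossible: at any input $y\in[a,b]$ that does not exactly coincide with one of the finitely many observed inputs, the posterior on $f_2(y)$ is Gaussian with strictly positive variance, hence has full support on $\R$ and assigns positive probability to $f_2(y)>g_n^*$. So $\eifn_n(x)>0$ on essentially all of $R$, and your argument that the $\argmax$ rule never selects points of $R$ collapses. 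Tellingly, the paper's own proof uses precisely this full-support property in the \emph{opposite} direction---to show that unmeasured points near $0$ retain strictly positive EI-FN---so ``a.s.\ domination of a posterior draw over an interval'' is not something a GP model can deliver. The paper instead reverses the roles of the nodes: the GP is on the upstream node $f_1$, while the downstream node is \emph{deterministic and known}, $f_2(x,y)=\min\{1,y\}-x$ on $\X=[0,1]$ (the $\max$ in the source appears to be a typo, as the subsequent inequalities require $\min$). The saturation in $f_2$ makes the output insensitive to values of $f_1$ above $1$, so once a single point $x_\tau$ with $f_1(x_\tau)>1$ is observed, every $x>x_\tau$ satisfies $g(x)\le 1-x<1-x_\tau=g(x_\tau)$ \emph{surely} under the posterior---a structural, not probabilistic, domination---whence $\eifn_n(x)=0$ on $(x_\tau,1]$ forever, while points near $0$ keep positive EI-FN. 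Consistency then comes from Theorem 1, just as you intend, and the paper handles the case $\prob(\tau<\infty)=0$ by a separate argument showing non-density directly on the event $\sup_x f_1(x)>1$.

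Your construction can be rescued, but only in its degenerate variant: if $f_1$ is \emph{exactly constant}, $f_1\equiv c$ on $R$ (a true plateau, not a ``narrow interval''), then a single noiseless evaluation at any point of $R$ pins $f_2(c)$ exactly, the posterior variance at $y=c$ is zero, and $\eifn_n\equiv\{f_2(c)-g_n^*\}^+=0$ on $R$ thereafter (note $g_n^*\ge f_2(c)$ automatically once that point is sampled, so your worry about the optimum lying in $R$ resolves itself). That version is rigorous but leans on exact degeneracy rather than the interval-domination mechanism you describe. Two smaller points: the proposition only requires non-density with positive probability, so demanding that ``$R$ is never sampled'' hold almost surely is stronger than needed (the paper works on a positive-probability event); and your claim that verifying Assumption 2.4 is ``routine given boundedness of the kernel'' is optimistic when the unknown node is a GP draw---the paper's discussion ties that assumption to uniformly bounded posterior means (e.g., RKHS draws), and its own example sidesteps the issue entirely by making $f_2$ deterministic so that the only GP is $f_1$.
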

\begin{proof}
Let $\X=[0,1]$ and consider a function network with two nodes $f_1:\X\to\R$ and $f_2:\X\times\R\to\R$ where $f_2$ is deterministic, given by$ f_2(x,y)=\max\{1,y\}-x$, and the objective function is given by $g(x)=f_2(x, f_1(x))$. 

Suppose that $f_1$ is drawn from a GP prior with a continuous mean function and a bounded positive definite covariance function whose sample paths are almost surely continuous. From this and the fact that $f_2$ is deterministic and continuous, it follows that Assumptions 2.1-2.3 in \S2 are satisfied. Assumption 2.4 is also satisfied because $f_2$ is bounded over $\X\times\R$. Thus, Theorem 1 implies that EI-FN is consistent almost surely.


Let $\tau = \inf\{ n\ge 1 : f_1(x_n) > 1\}$ be the first time that we measure a point whose value for $f_1$ is strictly greater than 1.  (If we never measure such a point, then $\tau$ is infinity.)

If $\prob(\tau < \infty) = 0$ under EI-FN, then this problem is one in which EI-FN does not measure densely. This is because there is a strictly positive probability that $\sup_{x\in[0,1]} f_1(x) > 1$. Moreover, the fact that $f_1$ is almost surely continuous implies that on this event there is an  non-empty interval on which $f_1(x)$ is strictly above 1 over the entire interval. If EI-FN were to never measure in this interval then it would not have measured densely.
Thus, going forward, we assume $\prob(\tau < \infty) > 0$. In fact, using a similar argument, we may assume $\prob(\tau < \infty, \ f_1(0) < 1, \ f_1(1) < 1) > 0$.


For any $x > x_\tau$, we have $g(x) \le 1-x < 1-x_\tau = g(x_\tau)$ almost surely. Thus, any such $x$ is almost surely strictly suboptimal under the posterior and $\eifn_{n}(x) = 0$ for all $n \geq \tau$.

Now let $n \geq \tau$ and consider any unmeasured point $x$ with $1-x > g^*_n$; such a point exists on the event under consideration since $f_1(0) < 1$ implies $g^*_n \le \max\{f_1(0), 1 - \min_{m\le n} x_m\} < 1$ and we make take $x$ arbitrarily close to $0$.  Because the prior covariance function is positive definite, the posterior probability distribution over $f_1(x)$ has full support over the real line. Thus, in particular, there is a strictly positive posterior probability that $f_1(x) \ge 1$. On this event, $g(x) = 1-x > g^*_n$ and so $\eifn_n(x)$ is strictly positive. 

It follows that EI-FN would not measure at a point in the interval $(x_\tau,1]$, which concludes the proof. 


\end{proof}

\section{Additional Details on the Numerical Experiments}
\label{supp:details_numerical}
\subsection{Hyperparameter Estimation, Number of q-MC Samples, Runtimes, and Licenses}
All GPs in our experiments  have a constant mean function and ARD Mat\'ern covariance function with smoothness parameter equal to 5/2, which is a standard choice in practice. The length scales of these GPs are estimated via maximum a posteriori (MAP) estimation with Gamma priors.

We use $M=128$ quasi-MC samples obtained via scrambled Sobol sequences (see \citet{balandat2020botorch} for details) for computing the SAA of EI-FN. We use the same number of samples for EI-CF and 8 samples for KG. KG is maximized following the one-shot approach proposed introduced by \citet{balandat2020botorch}. Under this approach, the dimension of the optimization problem that arises when optimizing KG grows linearly with the number of samples and thus one is restricted to a small number of samples. The average runtimes of the BO methods for each of the problems are summarized in Table~\ref{table:runtimes}. We emphasize that, while optimizing EI-FN is more expensive than optimizing EI, the additional computation required by our method is compensated by its excellent performance, and is thus justified for problems where each function network evaluation takes several minutes or more. We also note KG becomes very expensive to optimize for problems with relatively high input dimension, and can be even more expensive than EI-FN.

The BoTorch python package and the source code for the robot pushing problem are both publicly available under a MIT licence. Our code is also publicly available under a MIT license.

\begin{table*}[th]
\caption{Average runtimes (seconds) per evaluation of the BO methods compared. EI-CF is N/A in problems that lack the structure it requires for use: that the objective is a composition of an inner black-box function and a  outer known non-linear function.}
\label{table:runtimes}
\begin{center}
\begin{tabular}{lrrrrrrr}
\toprule

  & KG & EI &  EI-CF &  EI-FN\\
\midrule

Drop-Wave  &  $15.1$ & $2.5$ & N/A & $15.4$\\
Rosenbrock, $K=4$    & $23.6$  & $4.16$ & N/A & $122.2$\\
Ackley    & $72.1$ & $18.3$ & N/A & $89.2$\\
Alpine2, $K=6$        & $84.1$ & $22.5$ &  N/A & $215.6$ \\
Manufacturing & $29.1$ & $5.4$ &  N/A & $117.2$ \\
COVID-19 & $43.1$ & $6.2$ &  N/A & $229.4$\\
Robot & $935.2$ & $29.6$ &  $110.1$ & $182.3$\\
Calibration & $1225.2$ & $43.5$ &  $207.2$ & $293.7$\\
\bottomrule
\end{tabular}
\end{center}
\end{table*}
 \subsection{Details on Synthetic Test Problems}
 Here we describe in detail how each of the synthetic test functions is arranged as a function network.
 
 \subsubsection{Alpine2}
 The Alpine2 test function \citep{jamil2013literature} is defined by
\begin{equation*}
    g(x) = -\prod_{k=1}^K\sqrt{x_k}\sin(x_k).
\end{equation*}
We adapt this function to our setting by letting 
\begin{equation*}
    f_1(x_1) =  -\sqrt{x_1}\sin(x_1);
\end{equation*}
\begin{equation*}
  f_k(x_k, y_{k-1}) = \sqrt{x_k}\sin(x_k)y_{k-1}, \ k = 2,\ldots, K;  
\end{equation*}
 $I(k)=\{k\}, \ k=1,\ldots, K$; $J(1)=\emptyset$; and $J(k) = \{k-1\}, \ k = 2,\ldots, K$. In our experiments, we set  $\domain=[0,10]^K$,  and consider $K=2, 4, \textnormal{ and } 6$.

The network structure of this test function can be summarized as a series of nodes where the output of each node is governed by one decision variable of its own, and the output of the previous node.
 
 \subsubsection{Ackley}
 The Ackley test function \citep{jamil2013literature} has been widely used as a benchmark function in the BO literature. It is defined by 
\begin{equation*}
    g(x) = 20\exp\left(-0.2 \sqrt{\frac{1}{D}\sum_{d=1}^D x_d^2}\right) + \exp\left( \frac{1}{D}\sum_{d=1}^D \cos(2\pi x_d)\right) - 20 - e.
\end{equation*}
We adapt it to our setting by letting
\begin{equation*}
    f_1(x) = \frac{1}{D}\sum_{d=1}^D x_d^2;
\end{equation*}
\begin{equation*}
  f_2(x) = \frac{1}{D}\sum_{d=1}^D \cos(2\pi x_d);  
\end{equation*}
\begin{equation*}
 f_3(y_1,y_2) = 20\exp\left(-0.2\sqrt{y_1}\right) + \exp\left(y_2\right) - 20 - e;   
\end{equation*}
$I(1)=I(2)=\{1,\ldots, D\}$; $I(3)=\emptyset$; $J(1)=J(2)=\emptyset$; and $J(3)=\{1,2\}$. In our experiment, we set  $\domain = [-2,2]^D$, and $D=6$.

\subsubsection{Rosenbrock}
 The Rosenbrock test function \citep{jamil2013literature} is also a widely used benchmark function in the BO literature. It is defined by
 \begin{equation*}
     g(x) = -\sum_{d=1}^{D-1}100(x_{d+1}-x_d^2)^2 + (1-x_d)^2.
 \end{equation*}
 We adapt it to our setting by letting
 \begin{equation*}
     f_1(x_1, x_2) = -100(x_{2}-x_1^2)^2 - (1-x_1)^2;
 \end{equation*}
  \begin{equation*}
     f_k(x_k, x_{k+1}, y_{k-1}) = -100(x_{k+1}-x_{k}^2)^2 - (1-x_k)^2 + y_{k-1}, \ k=2,\ldots, D-1;
 \end{equation*}
 $I(k) = \{k, k+1\}, \ k=1,\ldots, D-1$; $J(1)=\emptyset$; and $J(k)=\{k-1\}, \ k=2,\ldots,D-1$. In our experiments, we set $\domain-[-2,2]^D$, and consider $D=3, 5, \textnormal{ and } 7$.
 
 \subsubsection{Drop-Wave}
 The Drop-Wave test function \citep{dropwave} is highly multi-modal and complex. It is defined by
 \begin{equation*}
     g(x) = \frac{1 + \cos\left(12\sqrt{x_1^2 + x_2^2}\right)}{2 + 0.5\left(x_1^2 + x_2^2\right)}.
 \end{equation*}
 We adapt it to our setting by taking
 \begin{equation*}
     f_1(x)= \sqrt{x_1^2 + x_2^2};
 \end{equation*}
 \begin{equation*}
     f_2(y_1) = \frac{1 + \cos\left(12 y_1\right)}{2 + 0.5 y_1^2};
 \end{equation*}
 $I(1)=\{1,2\}$; $I(2)=\emptyset$; $J(1)=\emptyset$; and  $J(2)=\{1\}$. In our experiment, we set $\domain=[-5.12,5.12]^2$.

\subsection{Manufacturing Throughput Maximization}
Here we describe the manufacturing throughput manufacturing test problem. This problem is similar in spirit to the biomanufacturing example in the introduction, but focusing on more traditional manufacturing in which workproduct is discrete rather than continuous. We have a manufacturing line with a series of stations that perform operations: e.g., steel is cut to size, then bent to shape, then holes are drilled, and finally the piece is painted. We consider ``make-to-order' in which custom features of the part (e.g., color, size, orientation of the holes) require waiting until a customer order arrives to begin processing the part.

Orders for parts arrive randomly according to a homogeneous Poisson process. Orders move to the first station in the manufacturing line and enter a queue where they wait. The first order in the queue requires a processing time exponentially distributed with a service rate that decreases with the amount of resource devoted to that station (e.g., more workers, better machines). Parts not being processed wait in the queue until they arrive to the front of the queue. Once processed, a part moves to the second station where it similarly waits in a queue until it is at the front, then waits an exponential amount of time that depends on a second service rate, which we can again control through staffing. This continues, with each part completing service moving to the next queue until it has completed service at all stations.

Our goal is to choose a collection of service rates, one for each station, to maximize the number of parts finished in a fixed amount of time. We constrain the sum of the service rates across the stations to represent a limit on total resources that can be allocated.

In our experiment, we consider a manufacturing line with 4 stations. The objective to maximize is the throughput of the network in steady state, $f(x)$, where $x_i$ is the service rate of station $i$, over the feasible domain $\domain = \{x: 0 \leq x_i , \ i=1,...,4, \textnormal{ and } \sum_{i=1}^4 x_i \leq 1\}$. Let $f_i(x_i, y_{i-1})$ be throughput of station $i$ in steady state given that the service rate of station is $i$ is $x_i$ and station $i-1$ has throughput $y_{i-1}$ in steady state. Then, our objective function can be written as a function network by taking $I(k)=\{k\}, \ k=1,\ldots, K$; $J(1)=\emptyset$; and $J(k)=\{k-1\}, \ k=2,3,4$. This network is illustrated in Figure \ref{fig:manufac}.

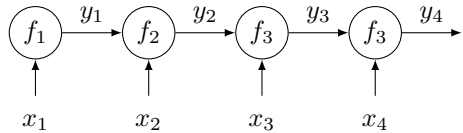
\begin{figure}[h]
  \centering
\resizebox{!}{2cm}{%
\begin{tikzpicture}[
init/.style={
  draw,
  circle,
  inner sep=0.7pt,
  minimum size=0.7cm
},
init2/.style={
  circle,
  inner sep=0.7pt,
  minimum size=0.7cm
},
]
\begin{scope}[start chain=1,node distance=8mm]
\node[on chain=1, init] 
  (f1) {$\f_1$};
\node[on chain=1,init]
 (f2) {$\f_2$};
\node[on chain=1,init] (f3) 
  {$ \f_3$};
  \node[on chain=1,init] (f4) 
  {$ \f_3$};
\node[on chain=1,init2] (f5){};
\end{scope}

\begin{scope}[start chain=2,node distance=8mm]
\node[on chain=2, init2] at (0,-12mm)
(x1) {$x_1$};
\node[on chain=2, init2] 
  (x2) {$x_2$};
\node[on chain=2, init2] 
  (x3) {$x_3$};
\node[on chain=2, init2] 
  (x4) {$x_4$};
\end{scope}

\draw[-latex] (f1) -- (f2)node[pos=0.5,sloped,above] {$y_1$};
\draw[-latex] (f2) -- (f3)node[pos=0.5,sloped,above] {$y_2$};
\draw[-latex] (f3) -- (f4)node[pos=0.5,sloped,above] {$y_3$};
\draw[-latex] (f4) -- (f5)node[pos=0.5,sloped,above] {$y_4$};

\draw[-latex] (x1) -- (f1);
\draw[-latex] (x2) -- (f2);
\draw[-latex] (x3) -- (f3);
\draw[-latex] (x4) -- (f4);
\end{tikzpicture}
}
\caption{Manufacturing line as a function network.\label{fig:manufac}}
\end{figure}

\subsection{Optimization of Pooled Testing for COVID-19}
\label{sec:covid}

Here we describe our COVID-19 testing benchmark problem, which considers reinforcement learning for large-scale testing to prevent the spread of COVID-19.
It builds on the model for infection dynamics described in the epidemic model calibration problem in \S5.3 of the main paper.

\paragraph{Overview}
We design an asymptomatic screening protocol for controlling the spread of COVID-19 in a city.
This approach regularly tests the entire population to identify people who are infected but do not have symptoms and may be unknowingly spreading virus. 
This approach has been employed successfully to control the spread of COVID-19 among students at several US universities \citep{denny2020implementation} and also in Wuhan, China \citep{cao2020post}.
To limit the resources needed for testing, we use {\it pooled testing}, which is described in detail in the supplement. This has a parameter (the {\it pool size}) that, when increased, reduces the testing resources used per person tested but also degrades test accuracy in a complex way that depends on the prevalence of the virus in the population.

We simulate the effect of asymptomatic screening using pooled testing on a single population, indexing time by $t=1,2,3$. As in \S5.3, we track the fraction of the population that is infectious and susceptible. To model the immunity that follows infection with COVID-19, an individual can be ``recovered'' (R), which means that they were previously infected, are no longer infectious and cannot be infected again.  At the start of time period $t$, $I_t$ is the fraction of the population that is infectious, and $R_t$ is the fraction that is recovered. The rest is susceptible.

During each period $t$, the entire population is tested using a pool size of $x_t$. A black-box simulator determines the accuracy of these tests and the testing resources used (which depends on both $x_t$ and the prevalence $I_t$). Individuals testing positive are isolated\footnote{This is sometimes confused with quarantine: close contacts are quarantined, while positives are isolated} so that they cannot infect others during the period, and infectious individuals missed in testing infect others. Lower accuracy results in more individuals missed in testing. At the end of the period, all individuals in isolation are modeled as having recovered and leave isolation. This process results in a loss $L_t$ incorporating infections, testing resources used, and individuals isolated.
Our goal is to choose the pool sizes $x_1$, $x_2$, $x_3$ to minimize the total loss $\sum_t L_t$.

This is encoded as a function network in Figure~3 in the main paper.
As described above and in detail below, each time period performs a calculation that takes the pool size $x_t$ and a bivariate description $(I_t, R_t)$ of the population's current infection status. (The details of this computation is unknown to our function networks Bayesian optimization model.)
It then produces as output a loss $L_t$ and the corresponding description $(I_{t+1}, R_{t+1})$ of the population's infection status at the start of the next period.
The objective function is $\sum_t L_t$.
The known form of the final node $\sum_t L_t$ is leveraged while the other nodes are treated as black boxes.

Given this overview of the problem, we now describe in detail how these black boxes are computed. 

\paragraph{Pooled Testing}
We first describe pooled testing in more detail. Pooled testing is a method for testing a large number of people for the presence of virus or some other pathogen \citep{cleary2021using,dorfman1943detection} in a way that reduces the amount of resource (specifically, chemical reagent and machine time) required per test performed compared with testing each person individually. 

As in any COVID-19 test, we first collect a nasal or saliva sample from each individual being tested. Each sample is placed in a separate tube of fluid. Pooled testing relies on the ability to be able to test several people (a ``pool'') simultaneously, returning a signal that tells us whether (1) no one in the pool is positive; or (2) at least one person in the pool is positive. To accomplish this, a bit of fluid from each sample in the pool is taken and mixed together. Then a single chemical reaction (PCR, or {\it polymerase chain reaction}) is run to asses whether anyone in the pool is positive.

We specifically consider square array pooled testing \citep{westreich2008optimizing}. This approach considers $x^2$ saliva samples as occupying a $x\times x$ grid. Then, it forms $2x$ pools: one pool from the samples in each row; and one from the samples in each column. Pools are tested, as described above. If a sample's row or column pool tests negative, then that sample is considered free of virus. All other samples (those whose row and column pool both test positive) are tested individually using a chemical reaction performed on additional fluid from that sample.
This is illustrated in Figure~3 in the main paper.

The chemical reactions used to check for virus sometimes make errors: both false negatives, in which a pool or individual sample including material from an infected person tests negative; and false positives, in which a pool that does not contain virus nevertheless is deemed positive.  Moreover, the probability of a false negative rises with the pool size \citep{cleary2021using}. This results in errors from the overall pooled testing procedure, where an individual who is virus-free is deemed positive (a false positive) or an individual who is infected with virus is deemed negative (a false negative).

In addition to depending on the pool size, the probability of these two kinds of errors (false positives and false negatives) in the overall testing procedure depends on the prevalence, i.e., the fraction of the population infected. When prevalence is high, there are sometimes two positive individuals providing fluid to a pool. This increases the chance that the pool tests positive. Also, more poools contain positive individuals, increasing the number of negative people whose row and column pools both test positive. This increases the chance that an overall test of a virus-free person will come back positive.

The level of resource used is proportional to the number of chemical reactions performed. This also depends on the pool size and the prevalence  If the prevalence is small, then the number of chemical reactions used is approximately $1/(2x)$ since the number of chemical reactions performed on individual samples is small. As prevalence rises, larger pool sizes require more followup testing (because the pools become likely to contain at least one positive individual) and smaller pool sizes become efficient.

\paragraph{Infection Dynamics without Pooled Testing}
As described above, time is divided into discrete time points $t=1,2,3$, each representing a distinct two-week period. At the start of each period, the population is described by two numbers: $I_t$, the fraction of the population that is infectious; and $R_t$, the fraction of the population that is recovered and cannot be infected again. These numbers are both in $[0,1]$.  The additional $S_t = 1-I_t-R_t$ fraction of the population is susceptible, and can be infected. Such divisions of a population into these three different groups (susceptible, infectious, and recovered) is widely used in epidemiology \citep{covid-white}.

We first describe our assumed infection dynamics in the absence of asymptomatic screening.
This is obtained by integrating continuous-time dynamics within a given two-week period.
We denote time strictly within a two-week period by $t+u$, where $t$ is an integer and $u\in(0,1)$.
During this period, we assume that infectious individuals who were infectious at the start of the period remain so for the full two weeks. Each comes into physical contact with other people at a rate of $\beta$ people per unit time. A fraction $S_t$ are susceptible and become infected\footnote{
Note that we use $S_t$ rather than $S_{t+u}$. This allows for analytical solution to the above equation and does not substantially harm accuracy: in the regimes of importance for solving the benchmark problem optimally, $S_t$ begins close to $1$ and $I_t$ begins close to $0$.  
}. 
This gives us the differential equation
\begin{equation*}
\frac{d}{ds} I_{t+u} = \beta S_t I_{t+u},
\end{equation*}
which has solution:
\begin{equation*}
I_{t+u} = I_t \exp(\beta S_t u).
\end{equation*}

At the end of the two week period, we then assume that all individuals who were infectious at the start of the period convalesce and become recovered. 
Putting this together, in the absence of asymptomatic screening, the resulting dynamics would be:
\begin{align}
S_{t+1} &= S_t - I_t \exp(\beta S_t) \\
I_{t+1} &= I_t (\exp(\beta S_t) - 1)\\
R_{t+1} &= R_t + I_t
\end{align}

Although the details of these dynamics are different from those in \S5.3, it results in behavior that is qualitatively similar. In particular, if $\beta$ is small enough, then $I_t$ shrinks to 0, but if it is large enough then the fraction of the population infected grows to a high fraction over a small number of time periods.




In our implementation, we set $\beta = (14/3) \ln(2) \approx 3.23$, corresponding to an epidemic that doubles in size every 3 days in the absence of any interventions.

We now incorporate the effect of asymptomatic screening.

\paragraph{Infection Dynamics with Pooled Testing}
Our simulation includes pooled testing as follows. 
Pooled testing using the pool size $x_t$ is used at the start of the period. As described above, its error rates (false positive and false negative) and its efficiency depend on both the pool size and the prevalence ($I_t$). We use a black-box computation using logic described above to calculate three quantities: 
\newcommand{\FPR}{\alpha_{\mathrm{FP}}} 
\newcommand{\TPR}{\alpha_{\mathrm{TP}}} 
\newcommand{\tests}{\alpha_{\mathrm{C}}} 

\begin{itemize}
\item $\FPR(x_t, I_t)$, the fraction of virus-free individuals tested that test positive (i.e., the false positive rate for the overall pooled testing procedure);
\item $\TPR(x_t, I_t)$, the fraction of infected individuals tested that test positive (i.e., the true positive rate for the overall pooled testing procedure);
\item $\tests(x_t, I_t)$, the number of chemical reactions performed across the entire population.
\end{itemize}

Individuals that test positive are immediately removed from the population placed into isolation.  This includes both infectious individuals 
(in particular, a fraction $(\TPR(x_t,I_t))I_t$ of the overall population) as well as susceptible and recovered individuals who were incorrectly classified (fractions $\FPR(x_t,I_t) S_t$ and $\FPR(x_t,I_t)R_t$ of the overall population respectively).
Thus, the number of people isolated in period $t$ is, 
\begin{equation*}
Q_t = \TPR(x_t,I_t)I_t + \FPR(x_t,I_t) (S_t + R_t).
\end{equation*}
This results in a term $c_Q Q_t$ that is added to our loss, representing the social costs of isolation.

Because some infectious individuals are in isolation, the number of new infections is smaller than in the setting described above without asymptomatic screening. This number is 
$I_t (1-\TPR(x_t, I_t)$. Following the infection dynamics described above, this results in
an additional new 
$I_t (1-\TPR(x_t, I_t)) \exp(\beta S_t)$ infections drawn from the susceptible population. In addition, all individuals who were infectious as the start of period $t$ recover. 
Thus, our dynamics are:
\begin{align*}
I_{t+1} &= I_t (1-\TPR(x_t, I_t)) \exp(\beta S_t) \\
S_{t+1} &= S_t - I_{t+1}\\
R_{t+1} &= R_t + I_t
\end{align*}

One may wonder about two modeling details.
First, susceptible people who are erroneously in isolation are nevertheless modeled as eligible for infection. Additionally, recovered people are modeled as being tested, although in practice one might choose to not test these individuals. 
These assumptions have little impact on outcomes because (1) false positive rates are small enough that the fraction of the susceptible population in isolation is a very small fraction of the overal susceptible population; (2) in the regimes where good solutions lie, few people are ever infected, making the recovered population also small.
Making these assumptions simplifies the description and implementation.

The loss at time $t$ is the sum of the social cost of isolation described above, the cost of the testing supplies consumed $c_T \tests(x_t,I_t)$, and the social cost associated with the new infections, $c_I I_{t+1}$.
\begin{equation*}
 L_{t} = c_T \tests(x_t,I_t) +  c_Q Q_t +  c_I I_{t+1}.
\end{equation*}

\section{Additional Numerical Experiment: Active Learning for Robot Pushing}
\label{supp:robot_push}

Here we describe one additional experiment. We consider a variation of the active learning for robot pushing problem introduced by \cite{wang2017max} whose goal is to teach a robot to push an object to a predetermined target location. We modify the problem by allowing the robot to push the object several times instead of only once. We formalize this problem as follows. Let $x_\mathrm{init}, x_\mathrm{target}\in[-5,5]^2$ denote the object's initial and target locations, respectively. At each time step, $t$, we choose the location of the robot's arm, $r_t\in[-5,5]^2$, and the duration of the push, $d_t\in[1,12]$. The robot then moves its arm from $r_t$ in the current direction of the object, $x_t$, over $d_t$ units of time. After this push, the location of the object becomes $x_{t+1}$ (if the robot fails to push the object, $x_{t+1} = x_t$). The goal is to choose $(r_t, d_t)$ for $t=1,\ldots, T$ to minimize $\|x_\mathrm{target} - x_T\|_2^2$. We set $x_\mathrm{init}=(0,0)$, $x_\mathrm{target}=(2.9,1.6)$, and $T=3$. This can be interpreted as a function network by associating each time step with a pair of node functions $p_{t,1}$ and $p_{t,1}$ which take $(r_t,d_t,x_t)$ as input and produce $x_{t+1} = (p_{t,1}(r_t,d_t,x_t), p_{t,2}(r_t,d_t,x_t))$ as output.

The results of this experiment are shown in Figure~\ref{fig:robotpush}. EI-FN improves over EI-CF and improves substantially over EI and Random.



\begin{figure}[tb]
\centering
\includegraphics[height=0.3\textwidth]{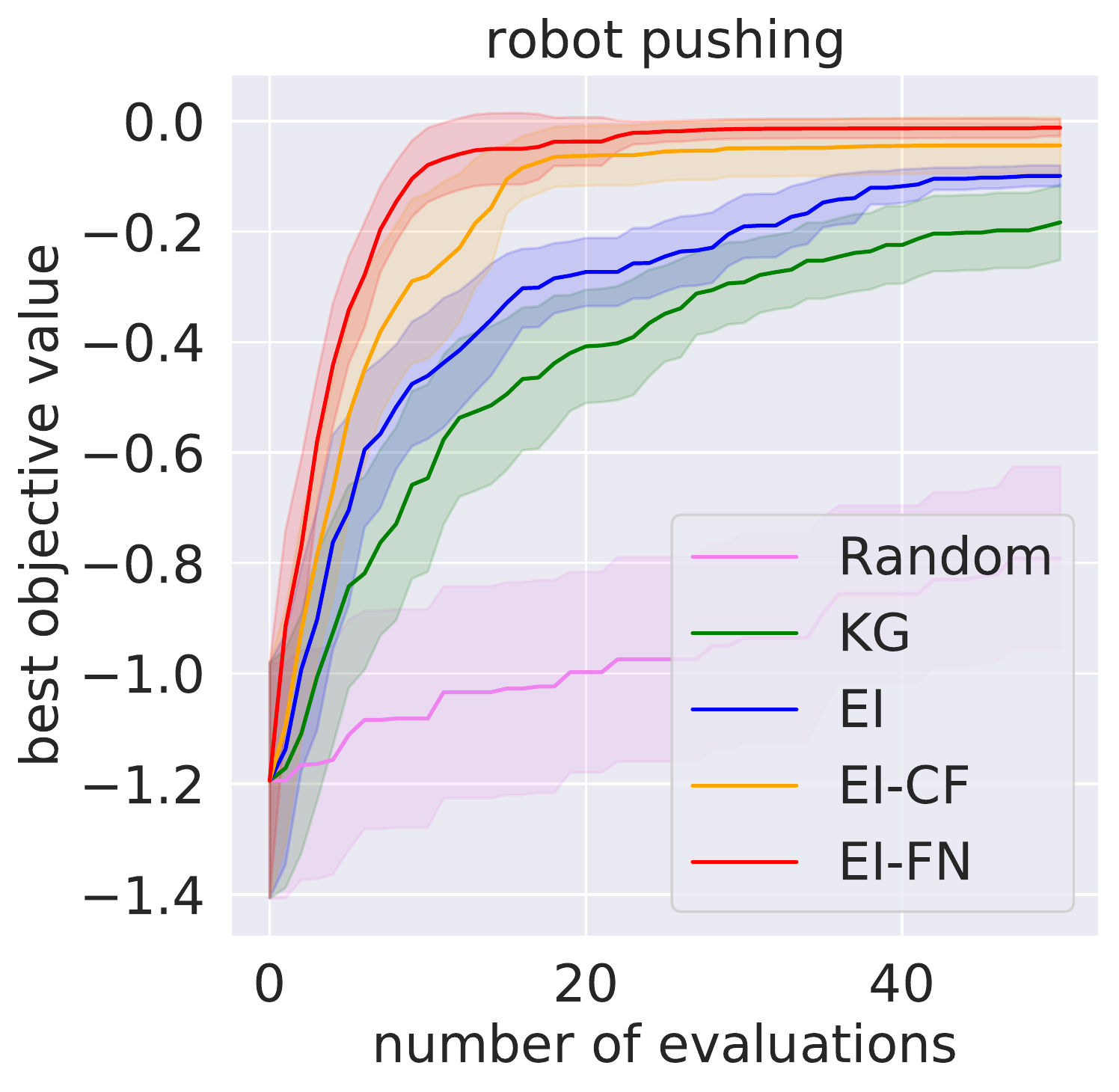}

\caption{
Results from the experiment described in \S\ref{supp:robot_push}. 
\label{fig:robotpush}
}
\end{figure}

 \section{Posterior Mean and Covariance Functions}
 \label{supp:posterior}
In this section, we write explicit formulas for the posterior mean and covariance functions of the GP distributions associated to the node functions, $f_1, \ldots, f_K$. To write this more simply, we define some additional notation.
Given generic vectors $x\in\R^D$ and $y\in \R^K$, we define $z_{k} = (x_{I(k)}, y_{J(k)})$ as  the elements of these vectors supplied as 
input to node $k$.
Similarly, given a historical observation of the values of the node functions $y_\ell = (h_1(x_\ell), \ldots, h_K(x_\ell)), \ \ell=1,\ldots, n$, we define $z_{\ell,k} = (x_{\ell,I(k)}, y_{\ell, J(k)})$. Using this notation, our posterior mean and covariance functions can be written as
\begin{align*}
  \mu_{n,k}(z_k)  =  \mu_{0,k}(z_k) +
  \Sigma_{0,k}\left(z_k,z_{1:n,k}\right)\Sigma_{0,k}\left(z_{1:n,k}, z_{1:n,k}\right)^{-1}\left(y_{1:n,k} - \mu_{0,k}\left(z_{1:n,k}\right)\right),
\end{align*}
and
\begin{align*}
 \Sigma_{n,k}(z_k, z_k')  =  \Sigma_{0,k}(z_k, z_k') -
 &\Sigma_{0,k}\left(z_k,z_{1:n, k}\right)\Sigma_{0,k}\left(z_{1:n, k}, z_{1:n, k}\right)^{-1}\Sigma_{0,k}\left(z_{1:n, k},z_k'\right),
\end{align*}
respectively.

\end{document}